\documentclass[10pt]{article}
\usepackage[letterpaper,margin=1in]{geometry}
\usepackage[T1]{fontenc}
\usepackage{lmodern}
\usepackage[round,authoryear]{natbib}
\usepackage{amsmath,amssymb,amsthm}
\usepackage{mathtools}
\usepackage{bm}
\usepackage{booktabs}
\usepackage{enumitem}
\usepackage{graphicx}
\usepackage[section]{placeins}    
\usepackage{hyperref}
\usepackage{url}
\usepackage{xcolor}

\renewcommand{\dh}{d_h}
\newcommand{\Khead}{K}                                

\newcommand{\bq}{\mathbf{q}}
\newcommand{\bk}{\mathbf{k}}
\newcommand{\Rcal}{\mathcal{R}}
\newcommand{\Rcalk}{\mathcal{R}^{(k)}}
\newcommand{\thetak}{\theta_k}
\newcommand{\Bcal}{\mathcal{B}}
\newcommand{\Gphi}{G(\phi)}

\newcommand{\bSigma}{\boldsymbol{\Sigma}}
\newcommand{\Sigmak}{\boldsymbol{\Sigma}^{(k)}}
\newcommand{\Sigmatk}{\widetilde{\boldsymbol{\Sigma}}^{(k)}}
\newcommand{\Ck}{C_k}
\newcommand{\Sk}{S_k}
\newcommand{\phistar}{\phi_k^{\ast}}
\newcommand{\lambdabar}{\bar{\lambda}_k}

\newcommand{\rhoeff}{\widetilde{\rho}}                
\newcommand{\bH}{\mathbf{H}}

\newcommand{\WAKV}{W4A4KV4}

\newcommand{\R}{\mathbb{R}}
\newcommand{\C}{\mathbb{C}}

\newcommand{\Z}{\mathbb{Z}}

\newcommand{\Var}{\mathrm{Var}}

\newtheorem{theorem}{Theorem}
\newtheorem{lemma}[theorem]{Lemma}
\newtheorem{proposition}[theorem]{Proposition}

\theoremstyle{definition}
\newtheorem{definition}[theorem]{Definition}

\title{When Local Variance Optimality Is Not Enough:\\ RoPE-Aligned Q/K Rotations\\ for Dynamic 4-Bit Quantisation}
\author{%
\begin{tabular}{cc}
Shuhan Wang & Yilin Luo \\
\href{mailto:shuhan.wang.25@ucl.ac.uk}{\texttt{shuhan.wang.25@ucl.ac.uk}} &
\href{mailto:yilin.luo.25@ucl.ac.uk}{\texttt{yilin.luo.25@ucl.ac.uk}} \\[0.75em]
Nan Xu\textsuperscript{\ensuremath{\dagger}} &
Chi Wang Cheung\textsuperscript{\ensuremath{\dagger}} \\
\href{mailto:nan.xu.25@ucl.ac.uk}{\texttt{nan.xu.25@ucl.ac.uk}} &
\href{mailto:c.cheung.25@ucl.ac.uk}{\texttt{c.cheung.25@ucl.ac.uk}}
\end{tabular}\\[0.5em]
{\small \textsuperscript{\ensuremath{\dagger}}Equal contribution.}}
\date{}

\begin{document}
\maketitle

\begin{abstract}
Rotation-based post-training quantisation commonly applies an orthogonal transform across an entire attention head to reduce outlier-induced error. RoPE instead partitions each head into two-dimensional frequency pairs, raising a specific design question: can a transform that respects this decomposition improve on full-head mixing? Prior work has established the per-pair rotations that commute with RoPE. We state the converse result that, for distinct frequencies, no other single-head orthogonal map commutes with RoPE. For the head-shared parameterisation used in our experiments, we then derive the rotation angle that minimises the larger channel variance under a pooled-covariance, position-averaged surrogate and verify that the implementation attains its analytic minimum.

The evaluated head-shared pairwise configuration does not improve accuracy in the tested dynamic \WAKV{} setting. Across four checkpoints, replacing the full-head Hadamard with this configuration increases perplexity at both short and long context lengths. Composing the pairwise rotation with the Hadamard satisfies the selected $\pm0.05$-PPL interval criterion under the default estimator. Estimating the shared angle from K alone improves pairwise-only on every checkpoint without eliminating its gap relative to full-head mixing. The analytic objective controls a position-averaged second moment of a covariance reconstructed from concatenated calibration observations, whereas the dynamic quantiser determines its step from a tokenwise group range. The pairwise transform also provides only two-channel support for redistributing a peak. Along a controlled interpolation from two-channel to full-head mixing, K range, relative quantisation error, and perplexity degradation decrease as support increases. These results show that optimality for a structured surrogate need not reduce quantisation error when the surrogate and mixing support are misaligned with the quantiser's scale-setting statistic.
\end{abstract}

\section{Introduction}
\label{sec:introduction}

Low-bit post-training quantisation depends strongly on how activation outliers are presented to the quantiser \citep{dettmers2022llmint8}. Rotation-based methods insert orthogonal transforms that redistribute activation energy while preserving the unquantised network function \citep{ashkboos2024quarot,liu2024spinquant,dartquant2025}. These methods build on weight-side incoherence and transformer invariance results \citep{chee2023quip,tseng2024quipsharp,ashkboos2024slicegpt}; \S\ref{sec:related_work} reviews this development. For the online transform $R_3$ applied to post-RoPE queries and keys, the relevant design choice is the \emph{mixing support}: should the transform mix all channels in an attention head, or only the two channels in each RoPE frequency pair? We study this question under dynamic \WAKV{} quantisation.

RoPE \citep{su2021rope} provides a natural motivation for pairwise mixing because it partitions every head into fixed two-dimensional subspaces. A rotation within these subspaces respects the frequency-pair decomposition and can be commuted through RoPE. Applying the same post-RoPE orthogonal map to Q and K already preserves their inner products, so commutativity is a structural constraint rather than a correctness requirement. We test whether imposing this constraint improves quantisation accuracy.

For a single head with distinct RoPE frequencies, the commuting orthogonal maps are exactly the independent planar rotations within the frequency pairs. FPTQuant \citep{fptquant2025} previously used this family by folding the transform into $W_Q$ and $W_K$; our theoretical contribution is the converse characterisation that excludes other single-head commuting orthogonal maps. The evaluated implementation further restricts the family by sharing one angle for each layer and frequency pair across attention heads. For this head-shared parameterisation, we derive the closed-form angle $\phistar$ that minimises the larger channel variance under a pooled-covariance, position-averaged surrogate. On Llama-3.2-3B, the implementation uses the checkpoint's deployed RoPE frequencies and attains the analytic minimum of that surrogate. This construction distinguishes failure to optimise the stated local objective from failure of the optimised surrogate to improve quantisation accuracy.

The experiments exhibit the second outcome. Across the evaluated Llama and Mistral checkpoints, replacing the full-head Hadamard with the head-shared pairwise configuration increases perplexity in every short-context comparison. The same ordering holds at every evaluated long-context point, including 128K where available. Composing the pairwise rotation with the Hadamard satisfies the selected interval criterion under the default estimator. These observations establish that the optimised local surrogate does not provide the range reduction achieved by full-head mixing in the evaluated setting.

We examine two factors associated with this discrepancy. First, the default angle estimator pools Q and K calibration statistics even though K is the only stream quantised after $R_3$. Estimating the shared angle from K alone improves the pairwise-only configuration on all four checkpoints, although its perplexity remains higher than that of the full-head Hadamard. Second, the analytic surrogate and the quantiser operate on different statistics: the closed-form rotation minimises a position-averaged second moment within each pair, while the dynamic INT4 quantiser determines its step from a tokenwise range over a larger group. Pairwise rotations also have two-channel mixing support. In an interpolation that expands block-Hadamard support from one pair to a full head, K range, relative quantisation error, and perplexity degradation decrease with support size.

These findings support three contributions.

\paragraph{Contributions.}
\begin{itemize}[leftmargin=1.5em,itemsep=2pt,topsep=2pt]
\item \textbf{An exact local benchmark.} For a single head with distinct RoPE frequencies, the RoPE-commuting orthogonal family consists of independent pairwise rotations. We attribute this family to \citet{fptquant2025} and establish the converse characterisation. For the head-shared implementation, we derive the minimax-variance member of a pooled-covariance, position-averaged surrogate and verify attainment of its analytic minimum.
\item \textbf{A scoped negative result for head-shared pairwise mixing.} Under dynamic \WAKV{}, the evaluated head-shared RoPE-aligned configuration yields higher perplexity than full-head Hadamard mixing across four checkpoints and all evaluated contexts. On Llama-3.2-3B, the same ordering holds for the verified surrogate optimum $\phistar$. K-only estimation improves pairwise-only without reversing its ordering relative to full-head mixing.
\item \textbf{Evidence on proxy alignment and mixing support.} The theoretical surrogate controls a pairwise, position-averaged second moment, whereas the dynamic quantiser uses a tokenwise group range to determine its scale. Along one controlled support interpolation, broader mixing is associated with lower K range, lower quantisation error, and smaller end-to-end degradation.
\end{itemize}

\section{Problem setup: RoPE-aligned versus full-head Q/K mixing}
\label{sec:setup}

We study the online transform $R_3$ applied to queries and keys after RoPE. Using the same orthogonal map for both streams preserves their inner products. We impose commutativity as the structural constraint of interest and compare two support sizes: mixing within each two-dimensional RoPE frequency pair and mixing across an entire attention head.

\paragraph{Frequency pairs and mixing support.}
Fix one even-dimensional head, $\dh=2\Khead$, with distinct RoPE frequencies $\thetak$. Let $\Gphi\in SO(2)$ be a planar rotation and $\Rcal_m=\bigoplus_{k=1}^{\Khead}G(m\thetak)$ the RoPE map at position $m$ \citep{su2021rope}. Implementations use the \texttt{rotate\_half} layout, in which frequency pair $k$ occupies channels $\{k,\,k+\dh/2\}$ rather than the adjacent coordinates $\{2k{-}1,2k\}$. The layouts differ by a fixed channel permutation, so the per-pair statements are invariant to this relabelling. The deployed implementation applies the rotation in each checkpoint's native layout, as documented in Appendix~\ref{app:reproducibility}. We refer to the $k$-th two-dimensional subspace as a \emph{RoPE frequency pair} and use \emph{band} only in equations and formal statements. A full-head Hadamard mixes all $\dh$ channels in a head, while a pairwise transform mixes only the two channels in one frequency pair. Their mixing supports are therefore $\dh$ and $2$, respectively.

\paragraph{What the deployed quantiser measures.}
At this stage of the evaluated quantisation procedure, K is quantised and the post-RoPE Q stream is not. For each token $t$ and channel group $g$, dynamic asymmetric INT4 computes the range $u_{t,g}-\ell_{t,g}$ and sets $\Delta_{t,g}=(u_{t,g}-\ell_{t,g})/(2^B-1)$ with $B=4$. The scale-setting statistic is therefore a tokenwise range over a group that spans multiple frequency pairs. It differs from a per-pair second moment averaged over positions.

\paragraph{The local variance objective is empirically non-trivial.}
A descriptive calibration pass finds non-negligible within-pair anisotropy and correlation on every evaluated checkpoint: mean $|\rho|$ is $0.135$--$0.175$, between $72\%$ and $78\%$ of (layer, head, pair) triples have $|\rho|>0.05$, and the mean absolute log variance ratio is $0.36$--$0.58$. The pairwise variance objective therefore has structure to optimise; Table~\ref{tab:data-rho} reports the complete diagnostics.

\section{The single-head RoPE centraliser and a pooled-covariance optimum}
\label{sec:theory}

With distinct frequencies, commutativity restricts each head to independent planar rotations within its RoPE frequency pairs. We first characterise this single-head family and then define the exact surrogate optimised by the head-shared implementation. The resulting angle minimises the larger channel variance under a pooled-covariance, position-averaged objective. This result provides a local analytic benchmark for the stated surrogate; it does not characterise the tokenwise range used by the quantiser or end-to-end perplexity.

The calibration collector treats samples, sequence positions, and attention heads as observation rows and accumulates their raw first and second moments. The default estimator adds the Q and K accumulators before applying a single centring operation; the alternative estimators modify only this stream-level aggregation rule. Each estimator therefore produces one covariance matrix for every layer and frequency pair. We write this matrix as $\bSigma^{(k)}$, suppressing the layer index. It is the covariance of the concatenated observation rows, rather than an average of separately centred head-wise or stream-wise covariance matrices.

The analysis reuses this matrix at every deployed position. With the finite phase averages
\begin{equation}
\label{eq:CkSk}
\Ck=\frac1L\sum_{m=0}^{L-1}\cos(2m\thetak),\qquad
\Sk=\frac1L\sum_{m=0}^{L-1}\sin(2m\thetak),
\end{equation}
we define the pooled-covariance, position-averaged surrogate
\begin{equation}
\label{eq:Sigmatilde}
\Sigmatk=\frac1L\sum_{m=0}^{L-1}\Rcalk_m\Sigmak(\Rcalk_m)^\top.
\end{equation}
If $\bSigma_m^{(k)}$ denotes the actual pre-RoPE covariance conditional on position $m$, the corresponding position-conditional quantity would instead be
\[
\frac1L\sum_{m=0}^{L-1}\Rcalk_m\bSigma_m^{(k)}(\Rcalk_m)^\top.
\]
The two expressions coincide under the additional condition $\bSigma_m^{(k)}=\bSigma^{(k)}$ at every deployed position. We neither assume nor empirically establish this position-stationarity condition. All optimality statements below therefore refer to the pooled-covariance surrogate in Eq.~\eqref{eq:Sigmatilde}.

Write $\sigma_1^2,\sigma_2^2$ for the diagonal of $\Sigmak$ and $\rho$ for its correlation. Within this surrogate, finite phase averaging can induce in-pair structure: $\Sigmatk_{12}=\tfrac12(\sigma_1^2-\sigma_2^2)\Sk+\rho\sigma_1\sigma_2\Ck$ is generally non-zero when the channel variances differ and $\Sk\neq0$, including when the pooled pre-RoPE covariance is uncorrelated. The surrogate structure can therefore arise from variance asymmetry and finite RoPE phases rather than from the optimiser-related outlier effect documented by \citet{ahmadian2023intriguing}.

\begin{definition}[Band-aligned subgroup]
\label{def:band-aligned}
$\Bcal_{\dh}:=\{\bigoplus_{k=1}^{\Khead}G(\phi_k):\phi_k\in(-\pi,\pi]\}\cong SO(2)^{\dh/2}$.
\end{definition}

\begin{lemma}[Single-head RoPE centraliser]
\label{lem:centraliser-commutes}
For pairwise-distinct RoPE angles $\theta_k\in(0,\pi)$, the centraliser of $\{\Rcal_m\}_{m\in\mathbb Z}$ in $O(\dh)$ is $\Bcal_{\dh}$.
\end{lemma}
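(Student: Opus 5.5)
Both inclusions have to be shown. The easy direction is $\Bcal_{\dh}\subseteq$ centraliser. For this I will use that $SO(2)$ is abelian, so $G(\phi_k)G(m\thetak)=G(m\thetak)G(\phi_k)$. Since both $\bigoplus_k G(\phi_k)$ and $\Rcal_m$ are block diagonal with respect to the same decomposition $\R^{\dh}=\bigoplus_k V_k$, they commute blockwise. Orthogonality of elements of $\Bcal_{\dh}$ is immediate.

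For the converse, let $Q\in O(\dh)$ commute with every $\Rcal_m$. It suffices to use $m=1$, since $\Rcal_m=\Rcal_1^m$. My plan is to complexify and work with eigenspaces. Over $\C$, $\Rcal_1$ acts on $V_k\otimes\C$ with eigenvalues $e^{\pm i\thetak}$. Because $\thetak\in(0,\pi)$ are pairwise distinct, the $2\Khead$ numbers $e^{\pm i\thetak}$ are pairwise distinct. Two checks give this:
\begin{itemize}
\item $e^{i\thetak}\neq e^{-i\thetak}$ because $\thetak\notin\{0,\pi\}$;
\item $e^{i\theta_j}\neq e^{\pm i\thetak}$ for $j\neq k$ because the angles lie in $(0,\pi)$, where $\theta\mapsto\cos\theta$ is injective and $\sin\theta>0$.
\end{itemize}
So $\Rcal_1$ has $\dh$ distinct eigenvalues, each with a one-dimensional eigenspace. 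A matrix commuting with $\Rcal_1$ preserves each eigenspace, so $Q$ is diagonal in this eigenbasis.

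Next I return to real blocks. The eigenlines for $e^{\pm i\thetak}$ together span $V_k\otimes\C$, so $Q$ preserves each real plane $V_k$ and is therefore block diagonal, $Q=\bigoplus_k Q_k$ with $Q_k\in O(2)$. An alternative to complexification is to note that $V_k=\ker(\Rcal_1^2-2\cos\thetak\,\Rcal_1+I)$, using the minimal polynomials $x^2-2\cos\thetak x+1$. These polynomials are pairwise coprime since the $\cos\thetak$ are distinct. Then $Q$ preserves each kernel. I will likely present this real version, since it avoids eigenvector bookkeeping.

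The remaining step, which I expect to be the crux, is excluding reflections. Here $Q_k$ commutes with $G(\thetak)$ where $\thetak\notin\{0,\pi\}$. A reflection $F\in O(2)\setminus SO(2)$ satisfies $FG(\theta)F^{-1}=G(-\theta)$. Commuting would therefore force $G(\thetak)=G(-\thetak)$, i.e.\ $2\thetak\in 2\pi\Z$, which is impossible for $\thetak\in(0,\pi)$. Hence $Q_k=G(\phi_k)$ for some $\phi_k\in(-\pi,\pi]$, and $Q\in\Bcal_{\dh}$.

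Finally, for the \texttt{rotate\_half} layout I will remark that conjugating by the fixed channel permutation transports the statement, as noted in the setup.
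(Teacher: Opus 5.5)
Your proposal is correct and follows the paper's proof essentially step for step. You prove the forward inclusion blockwise from commutativity of $SO(2)$. For the converse you reduce to $m=1$, show that the $d_h$ eigenvalues $e^{\pm i\theta_k}$ are distinct, deduce that each band plane is invariant, and exclude reflections via $FG(\theta)F^{-1}=G(-\theta)$. Your optional real variant identifies each band plane as $\ker(\mathcal{R}_1^2-2\cos\theta_k\,\mathcal{R}_1+I)$ using the pairwise coprime quadratics $x^2-2\cos\theta_k\,x+1$; it validly replaces the paper's complex-eigenvector step and leaves the rest of the argument unchanged.
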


The lemma characterises one attention head. Excluding cross-head mixing, its $H$-head direct-product extension permits
\[
R_3=\bigoplus_{h=1}^{H}\bigoplus_{k=1}^{\Khead}G(\phi_{h,k}),
\]
with head-dependent angles. The evaluated implementation imposes the additional constraint $\phi_{h,k}\equiv\phi_k$ within each layer because it aggregates calibration rows across heads and broadcasts the resulting angle to every head. The experiments therefore evaluate a strict head-shared subfamily of the head-dependent direct-product family.

\begin{theorem}[Closed-form optimum for the pooled-covariance surrogate]
\label{thm:closed-form-optimal-angle}
For fixed $\Sigmatk$, the following angle equalises the two diagonal entries of $G(\phi_k)\Sigmatk G(\phi_k)^\top$:
\begin{equation}
\label{eq:phistar}
\phistar=\operatorname{wrap}_{[-\pi/4,\pi/4)}\!\left[\tfrac12\operatorname{atan2}\!\left(\Sigmatk_{11}-\Sigmatk_{22},2\Sigmatk_{12}\right)\right].
\end{equation}
It satisfies
\[
\min_{\phi_k}\max_{j\in\{1,2\}}\left[G(\phi_k)\Sigmatk G(\phi_k)^\top\right]_{jj}
=\frac{\sigma_1^2+\sigma_2^2}{2}.
\]
This optimum is defined with respect to the pooled-covariance surrogate; finite phase averaging preserves the trace of $\Sigmak$.
\end{theorem}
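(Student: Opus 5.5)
The plan is a direct computation on the $2\times2$ symmetric matrix $A:=\Sigmatk$, with entries $a=A_{11}$, $b=A_{22}$, $c=A_{12}$. It has three steps: (i) write the diagonal of $G(\phi)AG(\phi)^\top$ explicitly in double-angle form; (ii) show that the stated $\phistar$ zeroes the difference of the two diagonal entries; (iii) use trace invariance to turn ``equal diagonal'' into the minimax value.

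\textbf{Step (i).} Take $G(\phi)=\bigl(\begin{smallmatrix}\cos\phi&-\sin\phi\\ \sin\phi&\cos\phi\end{smallmatrix}\bigr)$, the convention used for $\Rcal_m$. Expanding and using double-angle identities gives
\[
[G A G^\top]_{11}=\tfrac{a+b}{2}+\tfrac{a-b}{2}\cos2\phi-c\sin2\phi,\qquad
[G A G^\top]_{22}=\tfrac{a+b}{2}-\tfrac{a-b}{2}\cos2\phi+c\sin2\phi .
\]
Their difference is therefore $D(\phi)=(a-b)\cos2\phi-2c\sin2\phi$.

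\textbf{Step (ii).} Set $2\phi=\operatorname{atan2}(a-b,2c)$. Then $(\cos2\phi,\sin2\phi)=(2c,\,a-b)/r$ with $r=\sqrt{(a-b)^2+4c^2}$. Substituting gives $D=\bigl((a-b)2c-2c(a-b)\bigr)/r=0$, so the two diagonal entries are equal.

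Next I check that the wrap into $[-\pi/4,\pi/4)$ preserves this. The wrap shifts $\phi$ by a multiple of $\pi/2$. Since $G(\phi+\pi/2)=G(\phi)G(\pi/2)$ and $G(\pi/2)$ only swaps and re-signs coordinates, the diagonal entries are at most permuted, so equality survives. Equivalently, $D(\phi+\pi/2)=-D(\phi)$.

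The degenerate case $a=b$, $c=0$ must be handled separately, because $\operatorname{atan2}(0,0)$ is convention-dependent. There $A$ is a multiple of the identity (being $2\times2$ symmetric), so every $\phi$ equalises the diagonal and the statement holds for whatever value the convention returns.

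\textbf{Step (iii).} For every $\phi$, the two diagonal entries of $GAG^\top$ sum to $\operatorname{tr}A$, so their maximum is at least $\operatorname{tr}A/2$. Equality holds exactly when the entries coincide, which Step (ii) achieves. Hence the minimax value equals $\operatorname{tr}\Sigmatk/2$.

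It remains to show $\operatorname{tr}\Sigmatk=\operatorname{tr}\Sigmak=\sigma_1^2+\sigma_2^2$. This is the stated trace-preservation remark. Each $\Rcalk_m=G(m\thetak)$ is orthogonal, so $\operatorname{tr}(\Rcalk_m\Sigmak(\Rcalk_m)^\top)=\operatorname{tr}\Sigmak$. Averaging over $m$ in Eq.~\eqref{eq:Sigmatilde} is linear, so it preserves the trace. As a cross-check, expanding with the phase averages of Eq.~\eqref{eq:CkSk} gives $\Sigmatk_{11}+\Sigmatk_{22}=\sigma_1^2+\sigma_2^2$, consistent with the off-diagonal formula quoted before the theorem.

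I expect no real obstacle. The only delicate points are bookkeeping: matching the sign convention of $G$ to the ordering of the $\operatorname{atan2}$ arguments, and handling the wrap and the degenerate $\operatorname{atan2}(0,0)$ case. I would settle these by verifying $D(\phistar)=0$ directly rather than relying on a $\tan$ identity.
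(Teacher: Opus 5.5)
Your proposal is correct and follows essentially the same route as the paper's proof. Both use the same double-angle difference $(\Sigmatk_{11}-\Sigmatk_{22})\cos2\phi-2\Sigmatk_{12}\sin2\phi$, shown to vanish via the $\operatorname{atan2}$ components. Both handle the wrap by the $\pi/2$-shift argument and treat the isotropic degenerate case separately. Both close with the bound $\max\ge\tfrac12\operatorname{tr}$, with equality exactly at equalisation.

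The only difference is cosmetic. You obtain $\operatorname{tr}\Sigmatk=\operatorname{tr}\Sigmak$ from orthogonal invariance and linearity of the average, whereas the paper reads it off the explicit averaged entries, where the $C_k,S_k$ terms cancel.
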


Proofs, the $2\times2$ Hadamard comparison, and the multi-head scope of Lemma~\ref{lem:centraliser-commutes} are in Appendix~\ref{app:proofs}. When the surrogate objective is recomputed from covariances reconstructed after raw-moment aggregation and from the deployed checkpoint frequencies, every evaluated $\phistar$ configuration attains its analytic minimum. The maximum excess is $6.90\times10^{-7}$ under a verification tolerance of $5\times10^{-5}$ (Appendix~\ref{app:data}, \S J). This verification concerns the covariance of concatenated rows; it does not establish simultaneous equalisation of head-specific covariances or optimality over head-dependent angles.

Attention preservation constrains how the angle is \emph{applied}, but not how it is estimated:
\begin{proposition}[Attention preservation requires shared Q/K band angles]
\label{prop:rope-commutativity-requires-shared-theta}
For $R_3^Q,R_3^K\in\Bcal_{\dh}$, the original relative-position attention score is preserved exactly iff their angles agree in every band modulo $2\pi$.
\end{proposition}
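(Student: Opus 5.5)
We need to show: for $R^Q=\bigoplus_k G(\alpha_k)$ and $R^K=\bigoplus_k G(\beta_k)$, the quantity $(R^Q\Rcal_m q)^\top(R^K\Rcal_n k)=q^\top\Rcal_{n-m}k$ holds for all $q,k\in\R^{\dh}$ and all positions $m,n$ if and only if $\alpha_k\equiv\beta_k \pmod{2\pi}$ for every band. The plan is to reduce everything to a statement about a single block-diagonal matrix and then read off each band separately.

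\textbf{Reduction to one matrix identity.} We use that $\Rcal_m^\top=\Rcal_{-m}$ and that all blocks $G(\cdot)\in SO(2)$ commute. By Lemma~\ref{lem:centraliser-commutes}, or directly because both maps are block-diagonal with commuting planar blocks, $R^Q$ and $R^K$ commute with every $\Rcal_m$. Hence the transformed score is
\[
q^\top \Rcal_m^\top (R^Q)^\top R^K \Rcal_n k = q^\top (R^Q)^\top R^K \Rcal_{n-m} k .
\]
This must equal $q^\top\Rcal_{n-m}k$ for all $q$ and $k$. That is equivalent to the matrix identity
\[
(R^Q)^\top R^K\Rcal_{n-m}=\Rcal_{n-m},
\]
and since $\Rcal_{n-m}$ is invertible, to $(R^Q)^\top R^K=I$.

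\textbf{Band-wise reading.} The matrix $(R^Q)^\top R^K$ equals $\bigoplus_k G(\beta_k-\alpha_k)$. Moreover, $G(\psi)=I_2$ if and only if $\cos\psi=1$ and $\sin\psi=0$, which holds exactly when $\psi\in 2\pi\Z$. This yields both directions at once. For sufficiency, equal angles make the product the identity. For necessity, it suffices to take $m=n$ (or $m=n=0$) and choose $q,k$ supported in band $k$. The identity then forces $G(\beta_k-\alpha_k)=I_2$, since it must equal $I_2$ on every pair of basis vectors in that band.

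\textbf{Main obstacle.} The only delicate point is interpretive rather than computational. We must fix what ``preserved exactly'' means: equality of the bilinear score for all $q,k$ and all position pairs, as opposed to equality for particular calibration data, where weaker conditions could hold coincidentally. We must also make the modulo-$2\pi$ phrasing consistent with the domain $(-\pi,\pi]$ in Definition~\ref{def:band-aligned}, where the condition reduces to literal equality. Note that distinct frequencies are not needed here: membership in $\Bcal_{\dh}$ already gives commutation with RoPE, so the argument does not depend on the converse part of Lemma~\ref{lem:centraliser-commutes}.
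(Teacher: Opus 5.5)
Your proof is correct and follows the same route as the paper. Both arguments commute the band-aligned maps through RoPE, identify $(R_3^Q)^\top R_3^K=\bigoplus_k G(\phi_k^K-\phi_k^Q)$, and conclude that the score is preserved iff every block is the identity. Your necessity step, which quantifies over all $\bq,\bk$ and cancels the invertible $\Rcal_{n-m}$ (or simply takes $m=n=0$), is a tighter version of the paper's informal ``cannot be absorbed'' remark, and you are right that only the easy inclusion of Lemma~\ref{lem:centraliser-commutes} is needed.
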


The matched per-pair construction of \citet{fptquant2025} implicitly satisfies this condition. The proposition constrains application of the estimator: Q and K must receive the same angle within a matched head and frequency pair, although that angle may be estimated from either stream. Because only K is quantised at this stage, a K-derived estimator is more directly aligned with the measured quantisation pathway. We therefore test whether K-only estimation changes perplexity (\S\ref{sec:results-short-context}). An observed difference establishes sensitivity to the estimator statistic; it does not, by itself, identify K-cache quantisation error as the causal pathway. This Q/K-sharing requirement is distinct from the implementation's additional choice to share angles across attention heads.

\section{Evaluation design}
\label{sec:experimental_setup}

To test whether local optimality translates into lower quantisation error, we compare three matched online Q/K transforms:
\begin{itemize}[leftmargin=1.5em,itemsep=1pt,topsep=2pt]
\item \textbf{full-head Hadamard}: the baseline $R_3$, which mixes all $\dh$ channels in a head;
\item \textbf{pairwise-only}: the head-shared RoPE-commuting rotation used in place of the Hadamard;
\item \textbf{pairwise\,+\,Hadamard}: the same head-shared pairwise rotation composed with the Hadamard.
\end{itemize}
In the appendix and artifact code, these transforms are named \texttt{hadamard}, \texttt{eq\_only}, and \texttt{eq\_layered}, respectively.

We evaluate Llama-3.2-1B \citep{meta2024llama32_1b}, Llama-3.2-3B \citep{meta2024llama32_3b}, Llama-3.1-8B \citep{dubey2024llama3}, and Mistral-7B-v0.3 \citep{mistralai2024mistral7bv03}, whose architecture follows \citet{jiang2023mistral}. We use GPTQ weights \citep{frantar2022gptq} with dynamic per-token \WAKV{} activation and KV-cache quantisation. Calibration uses 128 WikiText-2 \citep{merity2016wikitext} sequences of length 2048 and DartQuant's \textsc{whip} objective \citep{dartquant2025} for the shared offline rotations. Within each paired comparison, all configurations use the same offline rotations, GPTQ Hessian, calibration sample, and random seed; only $R_3$ varies. Long-context evaluation follows a concatenated-stream protocol on Proof-Pile \citep{azerbayev2023proofpile} and PG19 \citep{rae2019pg19}. The primary outcomes are perplexity, mean K range after $R_3$, and relative K quantisation error.

We report paired differences relative to the same-seed full-head Hadamard baseline, with positive values indicating higher perplexity. Pairwise\,+\,Hadamard satisfies the manuscript's selected $\pm0.05$-PPL interval criterion when its complete $90\%$ paired confidence interval lies within $[-0.05,+0.05]$, corresponding to the interval form of the two one-sided tests procedure \citep{lakens2017equivalence}. This threshold is an explicit reporting criterion, not a task-independent minimum important difference; all equivalence statements are therefore conditional on the selected margin. Appendix~\ref{app:data}, Table~\ref{tab:tost-sensitivity}, reports sensitivity at $\pm0.02$ and $\pm0.10$ PPL. A cross-checkpoint equivalence statement is made only when every checkpoint satisfies the criterion.

We compare three estimators within the head-shared parameterisation. The default estimator $\hat\phi_k$ adds raw Q and K moment accumulators across attention heads and centres the resulting observation population once before constructing one covariance for each layer and frequency pair; it then applies Eq.~\ref{eq:phistar} with $(\Ck,\Sk)=(1,0)$. Because this aggregation weights streams by physical head count, grouped-query attention \citep{ainslie2023gqa} makes $75$--$80\%$ of the concatenated observations Q-derived even though only K is quantised. The K-only estimator constructs the covariance from K observations alone. On Llama-3.2-3B, the stream-balanced estimator assigns Q and K equal stream-level weights to their uncentred moments before the common centring operation. The position-averaged surrogate optimum $\phistar$ is evaluated only on Llama-3.2-3B, using the checkpoint's deployed scaled RoPE frequencies and a branch-matched $\hat\phi_k$ control to isolate the finite-phase correction.

Unless otherwise stated, confidence intervals are unadjusted, pointwise paired $t$ intervals. The angle-estimator analyses are exploratory effect-estimation analyses and are interpreted through estimated magnitudes and directional consistency across checkpoints. The twelve long-context comparisons between $\phistar$ and its branch-matched control form a separate multiplicity family and use Holm adjustment of two-sided $p$-values (Appendix~\ref{app:data}, \S J). Seed counts appear in each table caption; Appendix~\ref{app:reproducibility} documents the environments, inference conventions, and reproduction commands.

The support interpolation uses the same 3B/WikiText-2 dynamic \WAKV{} setting and pair-interleaved block-Hadamard transforms with block size $b\in\{2,4,8,16,32,64,128\}$. It measures K range and relative quantisation error after $R_3$. Only $b=2$ belongs to the commuting family; larger blocks vary support along a specific non-commuting transform path. Layout and orientation controls use one seed and are reported in Appendix~\ref{app:data}.

\section{The evaluated head-shared configuration does not match full-head mixing}
\label{sec:results-short-context}

\paragraph{Four-checkpoint short-context test.}
Table~\ref{tab:multiseed} answers the primary empirical question for the evaluated head-shared parameterisation. Under the default estimator, pairwise-only has higher mean perplexity than the full-head Hadamard on every checkpoint, with differences ranging from $+1.33$ PPL on Llama-3.2-1B to $+0.05$ PPL on Mistral-7B-v0.3. For pairwise\,+\,Hadamard, the complete $90\%$ paired confidence interval lies within the selected $\pm0.05$-PPL criterion on every checkpoint. The replacement and composition results therefore support distinct conclusions: pairwise-only does not reproduce the perplexity of full-head mixing, whereas the composed transform satisfies the stated interval criterion. The latter conclusion is conditional on the selected margin.

\begin{table}[t]
\centering\small
\caption{Short-context paired perplexity differences relative to each checkpoint's full-head Hadamard baseline on WikiText-2 under dynamic \WAKV{} with \textsc{whip} calibration, $n=6$. Positive values indicate higher perplexity. Under the default estimator, pairwise\,+\,Hadamard satisfies the selected $\pm0.05$-PPL interval criterion on every checkpoint. K-only estimation improves pairwise-only on every checkpoint without reducing its perplexity below the full-head Hadamard baseline.}
\label{tab:multiseed}\label{tab:pooling-breadth}
\begin{tabular}{lrrrr}
\toprule
 & \multicolumn{2}{c}{pairwise-only} & \multicolumn{2}{c}{pairwise\,+\,Hadamard} \\
\cmidrule(lr){2-3}\cmidrule(lr){4-5}
model & default est. & K-only est. & default est. & K-only est. \\
\midrule
Llama-3.2-1B    & $+1.3339$ & $+0.8053$ & $-0.0099$ & $+0.0461$ \\
Llama-3.2-3B    & $+0.4835$ & $+0.3169$ & $+0.0053$ & $+0.0326$ \\
Llama-3.1-8B    & $+0.1898$ & $+0.1238$ & $-0.0002$ & $+0.0127$ \\
Mistral-7B-v0.3 & $+0.0538$ & $+0.0467$ & $-0.0015$ & $+0.0003$ \\
\bottomrule
\end{tabular}
\end{table}

\paragraph{Estimator ablation within the head-shared parameterisation.}
Relative to the default estimator, K-only estimation reduces pairwise-only perplexity on all four checkpoints, including reductions of $0.53$ PPL on Llama-3.2-1B and $0.17$ PPL on Llama-3.2-3B; all four unadjusted $90\%$ intervals for the K-only-minus-default contrast lie below zero. On Llama-3.2-3B, the stream-balanced pairwise-only estimate lies between the default and K-only estimates. Relative to identity on that checkpoint, the interval for the default pairwise-only estimator includes zero, whereas the K-only and stream-balanced estimates and their unadjusted intervals are negative. For pairwise\,+\,Hadamard, the K-only-minus-default estimates are positive on Llama-3.2-1B, Llama-3.2-3B, and Llama-3.1-8B, with intervals above zero, while the Mistral-7B-v0.3 interval includes zero. These exploratory comparisons are not adjusted for family-wise multiplicity; we therefore interpret their magnitudes and directional consistency rather than treating individual intervals as confirmatory decisions. None of the evaluated estimators reduces pairwise-only perplexity below the full-head Hadamard baseline. Complete per-seed results are reported in Appendix~\ref{app:data}, \S L.

\paragraph{The pooled-covariance surrogate optimum on Llama-3.2-3B.}
The matched Llama-3.2-3B comparison gives the same ordering when the estimated angle is replaced by the verified position-averaged surrogate optimum. Pairwise-only with $\phistar$ has the highest mean perplexity among the reported pairwise-only configurations despite attaining the analytic objective value, and its unadjusted paired interval relative to identity lies entirely above zero. When composed with the Hadamard, both $\phistar$ and $\hat\phi_k$ satisfy the selected $\pm0.05$-PPL interval criterion relative to full-head Hadamard. Branch-matched, alternative-objective, and identity comparisons are reported in Appendix~\ref{app:data}.

\begin{table}[t]
\centering\small
\caption{Matched Llama-3.2-3B short-context configurations on WikiText-2 under dynamic \WAKV{} with \textsc{whip} calibration, $n=6$, with the angle taken from the default estimator $\hat\phi_k$ or from the position-averaged surrogate optimum $\phistar$ as indicated. The reported difference is configuration $-$ full-head Hadamard; positive values indicate higher perplexity. The $\phistar$ configurations use the checkpoint's deployed scaled RoPE frequencies.}
\label{tab:corrected-star-short}
\begin{tabular}{lccc}
\toprule
configuration & mean PPL & gap vs Hadamard & $90\%$ CI \\
\midrule
full-head Hadamard & $10.1121$ & \textit{n/a} & \textit{n/a} \\
\texttt{identity} (no $R_3$) & $10.5764$ & $+0.4643$ & $[+0.4294,+0.4992]$ \\
pairwise-only, $\hat\phi_k$ & $10.5956$ & $+0.4835$ & $[+0.4542,+0.5128]$ \\
pairwise-only, $\phistar$ & $10.6499$ & $+0.5378$ & $[+0.5071,+0.5686]$ \\
pairwise\,+\,Hadamard, $\hat\phi_k$ & $10.1174$ & $+0.0053$ & $[-0.0026,+0.0132]$ \\
pairwise\,+\,Hadamard, $\phistar$ & $10.1301$ & $+0.0180$ & $[+0.0050,+0.0311]$ \\
\bottomrule
\end{tabular}
\end{table}

These results motivate an examination of why the exact surrogate optimum does not transfer to the deployed quantiser.

\section{Evidence on the discrepancy between local optimality and quantised performance}
\label{sec:mechanism}

\subsection{Empirical discrepancy}

The discrepancy involves three distinct quantities: the pooled-covariance surrogate optimised in theory, the tokenwise group range used by the quantiser, and measured perplexity. The evaluated head-shared configuration attains the first objective but remains less accurate than full-head mixing. We therefore examine how the surrogate differs from the scale-setting statistic and how both relate to mixing support.

\subsection{Objective mismatch}

Theorem~\ref{thm:closed-form-optimal-angle} concerns the pooled-covariance surrogate. Dynamic INT4 instead uses a tokenwise $\max-\min$ range over groups that span multiple frequency pairs (\S\ref{sec:setup}). A quadratic moment averaged over RoPE phases can decrease while $\Delta_{t,g}$ remains unchanged because the two statistics apply different operations over different supports. In the matched 3B comparison, $\phistar$ attains the surrogate minimum and has the largest pairwise-only perplexity gap. This observation bounds the analytic certificate: it does not extend to the scale-setting statistic of the deployed quantiser. Recent methods address related objective mismatches directly. \citet{hu2025ostquant} measure utilisation of the quantisation space, and \citet{wang2025squat} constrain key-quantisation error relative to a query subspace. Our result provides a negative instance in which exact optimisation of a pooled, position-averaged second moment does not control the quantiser's tokenwise group range.

Finite phase averaging also affects the interpretation of exact surrogate optimality. For high-frequency pairs, both sums $(\Ck,\Sk)$ in Eq.~\ref{eq:CkSk} are close to zero, so the surrogate covariance approaches isotropy and many angles are nearly optimal. The selected angular representative can then depend on a small finite-sum remainder. At $L=2048$, $17$ of the $64$ deployed 3B pairs satisfy $|(\Ck,\Sk)|<0.01$, and the corresponding mean offset $|\tfrac12\operatorname{atan2}(\Sk,\Ck)|$ is $0.75$ radians. This conditioning issue limits the deployment interpretation of the surrogate optimum while leaving the numerical attainment check unchanged (Appendix~\ref{app:data}, \S J).

\subsection{Mixing support}

Mixing support imposes a complementary structural constraint. If an orthogonal transform distributes one peak with equal magnitude across $b$ channels, norm preservation gives the lower bound $|x|/\sqrt b$ on the resulting peak magnitude. A pairwise rotation has $b=2$, whereas a full-head Hadamard has $b=\dh$; at $\dh=128$, the corresponding bounds differ by approximately $8\times$. This statement concerns an outlier-range proxy and does not imply a bound on perplexity.

To examine mixing support along a controlled transformation path, we vary $b$ within a pair-interleaved block-Hadamard family while retaining the calibration and evaluation protocol (Table~\ref{tab:mixing-support}). The point estimates for K range, relative K quantisation error, and perplexity difference decrease monotonically as $b$ increases. The $b\leq32$ perplexity intervals lie above the full-head Hadamard baseline, whereas the $b=64$ and $b=128$ intervals include zero. At $b=128$, all three outcomes approach the full-head endpoint, and the paired perplexity interval includes zero. More specifically, relative K quantisation error matches at the reported precision, while mean K range differs by $0.0011$. This pattern documents a joint progression of support size and the three measured outcomes along one transformation family; it does not isolate support independently of the accompanying change in block structure.

The dependence of outlier suppression on block size is established elsewhere and is not claimed here. DuQuant \citep{lin2024duquant} rotates within blocks and then permutes channels to balance outliers across them; PeRQ \citep{sanjeet2026blockrotations} gives a non-asymptotic analysis in which the suppression a block-Hadamard transform achieves is governed by how $\ell_1$ mass is distributed over blocks, and recovers part of the lost suppression using the same permutation device; \citet{jia2026sawint4} select block-diagonal rotation with tokenwise INT4 as the preferred KV-cache configuration under serving constraints. What Table~\ref{tab:mixing-support} contributes is the post-RoPE Q/K instance of that axis under a dynamic per-token quantiser, with K range, relative K quantisation error and perplexity measured together along one interpolation path.

\begin{table}[t]
\centering\small
\caption{Mixing-support interpolation for Llama-3.2-3B on WikiText-2 under dynamic \WAKV{}, using three paired seeds. The perplexity difference is the block-Hadamard configuration minus the same-seed full-head Hadamard baseline; K diagnostics are measured after $R_3$. Only $b=2$ commutes with RoPE.}
\label{tab:mixing-support}
\begin{tabular}{@{}rcccc@{}}
\toprule
$b$ & $1/\sqrt b$ & mean K range & rel.\ K quant.\ error & PPL gap $[90\%\ \mathrm{CI}]$ \\
\midrule
2   & $0.7071$ & $14.8156$ & $0.02306$ & $+0.4158\ [+0.3720,+0.4596]$ \\
4   & $0.5000$ & $13.1189$ & $0.01805$ & $+0.2262\ [+0.2069,+0.2456]$ \\
8   & $0.3536$ & $11.5768$ & $0.01386$ & $+0.1373\ [+0.1242,+0.1503]$ \\
16  & $0.2500$ & $10.7337$ & $0.01183$ & $+0.0751\ [+0.0505,+0.0996]$ \\
32  & $0.1768$ & $10.4475$ & $0.01123$ & $+0.0544\ [+0.0266,+0.0822]$ \\
64  & $0.1250$ & $10.1357$ & $0.01054$ & $+0.0153\ [-0.0286,+0.0592]$ \\
128 & $0.0884$ & $9.6431$  & $0.00954$ & $+0.0039\ [-0.0236,+0.0313]$ \\
\midrule
full-head Hadamard & \textit{n/a} & $9.6420$ & $0.00954$ & $0$ \\
\bottomrule
\end{tabular}
\end{table}

\subsection{What the evidence supports}

Together, these analyses support a bounded interpretation. K-only estimation shows that the result is sensitive to the statistic from which the shared angle is estimated. Along the block-Hadamard interpolation, larger mixing support is accompanied by lower K range, lower relative quantisation error, and a smaller perplexity difference. This interpolation covers one checkpoint, one corpus, one context length, and one transformation path, and it leaves the RoPE-commuting family for $b>2$. It therefore supports an association within the evaluated intervention rather than a general causal attribution. The evidence identifies two relevant mismatches, namely the estimator's second-moment proxy and the transform's limited support, without assigning exclusive causality to either.

\section{Robustness across models, contexts, and tasks}
\label{sec:results_long_context}

The preceding analysis uses a short-context Llama-3.2-3B setting. We next assess whether the pairwise-replacement ordering is stable across context lengths, corpora, checkpoints, and evaluation formats. These evaluations test robustness of the ordering and provide no additional identification of its cause.

Consistent with the short-context ordering in \S\ref{sec:results-short-context}, pairwise-only has higher mean perplexity than full-head Hadamard at every evaluated long-context point under the default estimator. Table~\ref{tab:concat_proofpile} includes all evaluated checkpoint--corpus combinations: Llama-3.2-3B and Llama-3.2-1B are evaluated on Proof-Pile and PG19 through 128K, and Llama-3.1-8B and Mistral-7B-v0.3 are evaluated on both corpora through 64K. Because the scored token set depends on context length, the table summarises within-length differences and does not model perplexity as a function of length. Complete perplexities and the position-averaged contrasts are reported in Appendix~\ref{app:data}; exact per-length window counts are specified in Appendix~\ref{app:reproducibility}, \S A.4.

\begin{table}[!ht]
\centering\small
\caption{Ranges of evaluated within-length pairwise-only perplexity differences relative to full-head Hadamard under the default estimator and dynamic \WAKV{}, $n=3$. Positive values indicate higher perplexity for pairwise-only. Each range summarises point estimates across context lengths and is not a confidence interval.}
\label{tab:concat_proofpile}
\begin{tabular}{lccr}
\toprule
checkpoint / corpus & tested lengths & gap range & longest endpoint \\
\midrule
3B / Proof-Pile & 2K--128K & $+0.143$ to $+0.207$ & $+0.193$ at 128K \\
3B / PG19 & 2K--128K & $+0.566$ to $+1.199$ & $+1.169$ at 128K \\
1B / Proof-Pile & 2K--128K & $+0.416$ to $+0.574$ & $+0.513$ at 128K \\
1B / PG19 & 2K--128K & $+1.819$ to $+3.861$ & $+3.585$ at 128K \\
8B / Proof-Pile & 2K--64K & $+0.056$ to $+0.076$ & $+0.074$ at 64K \\
8B / PG19 & 2K--64K & $+0.206$ to $+0.480$ & $+0.394$ at 64K \\
Mistral / Proof-Pile & 2K--64K & $+0.034$ to $+0.056$ & $+0.056$ at 64K$^\dagger$ \\
Mistral / PG19 & 2K--64K & $+0.113$ to $+0.238$ & $+0.238$ at 64K$^\dagger$ \\
\bottomrule
\multicolumn{4}{l}{\footnotesize $^\dagger$ Exploratory 64K evaluation beyond the 32K context documented for Mistral-7B-v0.3 \citep{mistralai2024mistral7bv03}.}
\end{tabular}
\end{table}

Generation-based evaluations provide limited corroboration. In the synthetic retrieval evaluation following the NIAH format used by RULER \citep{hsieh2024ruler}, full-head Hadamard has higher retrieval accuracy than pairwise-only at context lengths with appreciable separation from the FP16 result; shorter-context results and the Llama-3.1-8B evaluation do not distinguish the configurations. LongBench-v2 \citep{bai2024longbenchv2} scores are at or below the four-choice random baseline and therefore provide no evidence for differences among the transformations at this quantisation setting. Appendix~\ref{app:data} reports the complete task results.

An exploratory static per-channel control produces lower short-context perplexity for pairwise-only than for full-head Hadamard. The control estimates fixed scales from target-corpus windows that overlap the scored text and changes scale granularity from grouped per-token to per-channel quantisation. It therefore demonstrates that the observed ordering depends on the evaluated quantiser protocol, but it neither identifies the source of that dependence nor estimates out-of-sample deployment performance. Related evidence that static and dynamic quantisers can respond differently to the same modification is reported by \citet{prefixquant2025}. No generation-based task evaluation was conducted for the static control, so this result is limited to perplexity under the stated protocol; complete curves and scale diagnostics are in Appendix~\ref{app:data}.

\section{Related work}
\label{sec:related_work}

\paragraph{From incoherence processing to online rotations.}
Rotation-based post-training quantisation originates on the weight side: QuIP \citep{chee2023quip} makes weights and Hessians incoherent with random orthogonal transforms, and QuIP\# \citep{tseng2024quipsharp} replaces those transforms with the randomised Hadamard transform. The online setting studied here inherits that construction. The offline rotations rest on the computational invariance of RMSNorm transformers established by \citet{ashkboos2024slicegpt}, which permits an orthogonal map to be folded into the weights; the online transforms are instead inserted together with their inverses. QuaRot \citep{ashkboos2024quarot} inserts Hadamard transforms into the activation path under \WAKV{}, SpinQuant \citep{liu2024spinquant} learns the offline rotations, and DartQuant \citep{dartquant2025} calibrates them efficiently; we reproduce its released calibration defaults exactly, with the consequences recorded in Appendix~\ref{app:reproducibility}. Later learned and closed-form transforms retain full-head support \citep{flatquant2025,xiang2024dfrot,kurtail2024,wush2025}, including structured factorisations whose composition still spans the head \citep{butterflyquant2025}.

\paragraph{Mixing support and block rotations.}
Confining a transform to blocks reduces cost at the expense of mixing support, and the resulting dependence on block size has been studied directly. DuQuant \citep{lin2024duquant} rotates within blocks and permutes channels so that outlier mass is balanced across them; GSR \citep{gsr2025} uses a training-free block-diagonal Walsh--Hadamard rotation, applied to the offline residual-stream transform rather than to the online $Q/K$ path, at W2A16 and W2A4; PeRQ \citep{sanjeet2026blockrotations} analyses block-Hadamard suppression non-asymptotically through the distribution of $\ell_1$ mass over blocks; and \citet{jia2026sawint4} adopt block-diagonal rotation with tokenwise INT4 for KV-cache serving. \S\ref{sec:mechanism} places the RoPE frequency pair at the low-support end of that axis for post-RoPE Q/K. Table~\ref{tab:related-axes} summarises the design axes of the nearest families.

\paragraph{Transform objectives.}
Many rotation designs optimise a variance or outlier proxy. OSTQuant \citep{hu2025ostquant} measures utilisation of the quantisation space and permits scaling alongside the orthogonal transform, while SQuat \citep{wang2025squat} constrains key-quantisation error relative to a query subspace. Closest to the present construction in analytic form, \citet{singlequant2025} derive closed-form Givens angles for W4A4 outlier smoothing under alignment and uniformity criteria, without a RoPE-pair or commutativity constraint. The present study instead derives and numerically verifies the optimum of a pooled-covariance surrogate for the implemented head-shared parameterisation, then evaluates the corresponding configurations under dynamic \WAKV{} quantisation.

\paragraph{RoPE-structured transforms and concurrent work.}
The per-pair RoPE-commuting structure predates this work. FPTQuant \citep{fptquant2025} folds a scaled per-pair map into $W_Q$ and $W_K$. ParoQuant \citep{paroquant2026} optimises Givens rotations with channel scaling over selected channel pairs in a weight-only quantisation procedure. \citet{iisr2026} propose spectral-energy rescaling and describe the per-pair RoPE-commuting form as a secondary extension, folded into $W_Q,W_K$ and evaluated at $1024$ tokens. \citet{comrope2025} study trainable commuting angle matrices as a positional-encoding design. Lemma~\ref{lem:centraliser-commutes} supplies the converse result: with distinct frequencies, every single-head orthogonal map that commutes with RoPE has the per-pair form. The empirical comparison evaluates the stricter head-shared subfamily rather than the complete head-dependent direct-product family. Appendix~\ref{app:data} also reports a restricted probe of a trained and scaled FPTQuant-style component within our evaluation procedure; this probe does not constitute a reproduction of FPTQuant.

\paragraph{Scaling and cache-side methods.}
Scaling reduces a peak without redistributing it across channels and is therefore complementary: SmoothQuant \citep{xiao2022smoothquant} and Outlier Suppression+ \citep{wei2023outlierplus} on the activation path, AWQ \citep{lin2024awq} on the weight side, and OmniQuant \citep{shao2024omniquant} on both. Closest in motivation to this work is Q-ROAR \citep{qroar2026,qroarsister2025}, which analyses RoPE channel bands under position interpolation and reports gains from per-band weight rescaling in quantised long-context models; it varies the scaling axis under interpolated positions, whereas we vary the rotation axis under native RoPE, so the null reported here does not bear on that result (\S\ref{sec:limitations}). On the cache side, KVQuant \citep{hooper2024kvquant} introduced pre-RoPE key quantisation, establishing that the post-RoPE stream is the harder one to quantise, and with KIVI \citep{liu2024kivi} sets the low-bit baselines; RotateKV \citep{su2025rotatekv} is the nearest rotation-based method, applying a pre-RoPE grouped-head rotation with channel reordering to a 2-bit cache; PolarQuant \citep{polarquant2026} and CommVQ \citep{commvq2025} act on the stored cache in a RoPE-pair-aware form, and TurboQuant \citep{turboquant2026} is a data-oblivious online vector quantiser.

\begin{table}[t]
\centering\small
\caption{Design axes of the nearest transform families. Mixing support is the number of channels $b$ over which a single peak can be redistributed. FPTQuant and ParoQuant compose the pairwise rotation with channel scaling, and DuQuant and PeRQ compose the block rotation with a permutation; the present work varies block size and mixing support without introducing channel scaling or permutation.}
\label{tab:related-axes}
\begin{tabular}{@{}lllll@{}}
\toprule
method & placement & mixing support & fitting & quantiser \\
\midrule
FPTQuant \citep{fptquant2025}          & offline $W_Q,W_K$    & RoPE pair, $b{=}2$    & trained    & static INT4 \\
ParoQuant \citep{paroquant2026}        & weight-only          & channel pair, $b{=}2$ & trained    & weight-only \\
ART\,/\,URT \citep{singlequant2025}    & offline, activation  & Givens pairs          & untrained  & W4A4 \\
IIS-R \citep{iisr2026}                 & offline, weight side & full head; per-pair   & calibrated & low bit width \\
DuQuant \citep{lin2024duquant}         & activation           & block, permuted       & calibrated & W4A4 \\
PeRQ \citep{sanjeet2026blockrotations} & activation           & block, permuted       & calibrated & INT4 \\
GSR \citep{gsr2025}                    & offline $R_1$        & block, $b{=}$ group   & untrained  & W2A16/W2A4 \\
RotateKV \citep{su2025rotatekv}        & pre-RoPE cache       & grouped heads         & calibrated & 2-bit KV \\
This work                              & online post-RoPE     & $b{=}2$ to $b{=}\dh$  & untrained  & dynamic \WAKV{} \\
\bottomrule
\end{tabular}
\end{table}

\section{Limitations and conclusion}
\label{sec:limitations}

\paragraph{Limitations.}
\begin{enumerate}[leftmargin=1.5em,itemsep=2pt,topsep=2pt]
\item \textbf{Quantiser and method scope.} The result concerns native-RoPE dynamic \WAKV{} with the evaluated calibration procedure. Learned rotations, channel scaling, other bit widths, static deployment, and positional scaling \citep{peng2023yarn} remain outside this scope. Q-ROAR \citep{qroar2026,qroarsister2025} studies per-band weight rescaling under position interpolation, whereas our evaluation varies rotations under native RoPE; the present result therefore does not evaluate the Q-ROAR setting. The evaluated pairwise parameterisation shares each layer--pair angle across attention heads. Head-dependent angles $\phi_{h,k}$ and transforms that mix across heads are untested. Among the evaluated head-shared estimators, K-only estimation yields lower perplexity than the default estimator but is not established as optimal. The surrogate optimum $\phistar$ is evaluated only on Llama-3.2-3B.
\item \textbf{Evaluation and statistical scope.} Perplexity is the primary outcome. Several NIAH comparisons are saturated near the FP16 result, and LongBench-v2 is non-discriminative at the evaluated quantisation setting. Most long-context evaluation points use three seeds. Every equivalence statement is conditional on the evaluated corpora, seeds, and selected $\pm0.05$-PPL interval criterion.
\item \textbf{Explanatory scope.} The support interpolation covers Llama-3.2-3B on WikiText-2 at short context, three seeds, and one block-Hadamard transformation path; it leaves the commuting family for $b>2$, and its layout and orientation controls use one seed. The static comparison is exploratory and does not identify a causal mechanism.
\end{enumerate}

\paragraph{Conclusion.}
RoPE defines a precise pairwise structure for online Q/K rotations. For a single head with distinct frequencies, the commuting orthogonal family consists of independent rotations within the RoPE frequency pairs, a structure established by prior work \citep{fptquant2025}; our converse result completes this single-head characterisation. For the implemented head-shared parameterisation, the pooled-covariance, position-averaged surrogate has a closed-form minimiser, and the implementation attains the associated surrogate minimum.

This objective-specific guarantee does not improve accuracy under dynamic \WAKV{} quantisation in the evaluated setting. Across the tested checkpoints and context lengths, replacing the full-head Hadamard with the evaluated head-shared pairwise configuration yields higher perplexity. K-only estimation reduces this gap without eliminating it. Along the evaluated support-interpolation path, K range, relative quantisation error, and perplexity degradation decrease as mixing support approaches the full-head endpoint. This pattern is consistent with mixing support contributing to the observed difference, but it does not identify a mechanism beyond that path. The evidence separates optimality for a pooled-covariance surrogate, improvement in the quantiser's tokenwise range statistic, and end-to-end model quality.

The broader implication is that structural alignment alone does not determine whether a rotation reduces quantisation error. A structured transform should be judged by whether its optimisation objective and its mixing support match the scale-setting rule of the quantiser in which it will be deployed.

\bibliography{main}

\begin{thebibliography}{53}
\providecommand{\natexlab}[1]{#1}
\providecommand{\url}[1]{\texttt{#1}}
\expandafter\ifx\csname urlstyle\endcsname\relax
  \providecommand{\doi}[1]{doi: #1}\else
  \providecommand{\doi}{doi: \begingroup \urlstyle{rm}\Url}\fi

\bibitem[Ahmadian et~al.(2023)Ahmadian, Dash, Chen, Venkitesh, Gou, Blunsom,
  {\"U}st{\"u}n, and Hooker]{ahmadian2023intriguing}
Arash Ahmadian, Saurabh Dash, Hongyu Chen, Bharat Venkitesh, Zhen~Stephen Gou,
  Phil Blunsom, Ahmet {\"U}st{\"u}n, and Sara Hooker.
\newblock Intriguing properties of quantization at scale.
\newblock In \emph{Advances in Neural Information Processing Systems 36
  ({NeurIPS} 2023)}, 2023.
\newblock arXiv:2305.19268.

\bibitem[Ainslie et~al.(2023)Ainslie, Lee-Thorp, de~Jong, Zemlyanskiy,
  Lebr{\'o}n, and Sanghai]{ainslie2023gqa}
Joshua Ainslie, James Lee-Thorp, Michiel de~Jong, Yury Zemlyanskiy, Federico
  Lebr{\'o}n, and Sumit Sanghai.
\newblock {GQA}: Training generalized multi-query transformer models from
  multi-head checkpoints.
\newblock In \emph{Proceedings of the 2023 Conference on Empirical Methods in
  Natural Language Processing (EMNLP)}, 2023.
\newblock arXiv:2305.13245.

\bibitem[Akhondzadeh et~al.(2025)Akhondzadeh, Bojchevski, Eleftheriou, and
  Dazzi]{kurtail2024}
Mohammad~Sadegh Akhondzadeh, Aleksandar Bojchevski, Evangelos Eleftheriou, and
  Martino Dazzi.
\newblock {KurTail}: Kurtosis-based {LLM} quantization.
\newblock In \emph{Findings of the Association for Computational Linguistics:
  {EMNLP} 2025}, 2025.
\newblock arXiv:2503.01483.

\bibitem[Ashkboos et~al.(2024{\natexlab{a}})Ashkboos, Croci, {Gennari do
  Nascimento}, Hoefler, and Hensman]{ashkboos2024slicegpt}
Saleh Ashkboos, Maximilian~L. Croci, Marcelo {Gennari do Nascimento}, Torsten
  Hoefler, and James Hensman.
\newblock {SliceGPT}: Compress large language models by deleting rows and
  columns.
\newblock In \emph{International Conference on Learning Representations
  (ICLR)}, 2024{\natexlab{a}}.
\newblock arXiv:2401.15024.

\bibitem[Ashkboos et~al.(2024{\natexlab{b}})Ashkboos, Mohtashami, Croci, Li,
  Cameron, Jaggi, Alistarh, Hoefler, and Hensman]{ashkboos2024quarot}
Saleh Ashkboos, Amirkeivan Mohtashami, Maximilian~L. Croci, Bo~Li, Pashmina
  Cameron, Martin Jaggi, Dan Alistarh, Torsten Hoefler, and James Hensman.
\newblock {QuaRot}: Outlier-free 4-bit inference in rotated {LLMs}.
\newblock In \emph{Advances in Neural Information Processing Systems
  (NeurIPS)}, 2024{\natexlab{b}}.
\newblock arXiv:2404.00456.

\bibitem[Azerbayev et~al.(2022)Azerbayev, Ayers, and
  Piotrowski]{azerbayev2023proofpile}
Zhangir Azerbayev, Edward Ayers, and Bartosz Piotrowski.
\newblock {Proof-Pile}: A pre-training dataset of mathematical texts.
\newblock Hoskinson Center for Formal Mathematics / EleutherAI, 2022.
\newblock URL \url{https://github.com/zhangir-azerbayev/proof-pile}.

\bibitem[Bai et~al.(2025)Bai, Tu, Zhang, Peng, Wang, Lv, Cao, Xu, Hou, Dong,
  Tang, and Li]{bai2024longbenchv2}
Yushi Bai, Shangqing Tu, Jiajie Zhang, Hao Peng, Xiaozhi Wang, Xin Lv, Shulin
  Cao, Jiazheng Xu, Lei Hou, Yuxiao Dong, Jie Tang, and Juanzi Li.
\newblock {LongBench v2}: Towards deeper understanding and reasoning on
  realistic long-context multitasks.
\newblock In \emph{Proceedings of the 63rd Annual Meeting of the Association
  for Computational Linguistics (Volume 1: Long Papers)}, pp.\  3639--3664,
  2025.
\newblock \doi{10.18653/v1/2025.acl-long.183}.
\newblock arXiv:2412.15204.

\bibitem[Chee et~al.(2023)Chee, Cai, Kuleshov, and De~Sa]{chee2023quip}
Jerry Chee, Yaohui Cai, Volodymyr Kuleshov, and Christopher De~Sa.
\newblock {QuIP}: 2-bit quantization of large language models with guarantees.
\newblock In \emph{Advances in Neural Information Processing Systems
  (NeurIPS)}, 2023.
\newblock arXiv:2307.13304.

\bibitem[Chen et~al.(2026{\natexlab{a}})Chen, Egiazarian, Castro, Hoefler, and
  Alistarh]{wush2025}
Jiale Chen, Vage Egiazarian, Roberto~L. Castro, Torsten Hoefler, and Dan
  Alistarh.
\newblock {WUSH}: Near-optimal adaptive transforms for {LLM} quantization.
\newblock In \emph{International Conference on Machine Learning (ICML)},
  2026{\natexlab{a}}.
\newblock arXiv:2512.00956.

\bibitem[Chen et~al.(2026{\natexlab{b}})Chen, Liu, Wang, Bin, Shao, and
  Luo]{prefixquant2025}
Mengzhao Chen, Yi~Liu, Jiahao Wang, Yi~Bin, Wenqi Shao, and Ping Luo.
\newblock {PrefixQuant}: Eliminating outliers by prefixed tokens for large
  language models quantization.
\newblock \emph{IEEE Transactions on Pattern Analysis and Machine
  Intelligence}, 2026{\natexlab{b}}.
\newblock arXiv:2410.05265.

\bibitem[Choi et~al.(2025)Choi, Song, Lim, and Yoo]{gsr2025}
Euntae Choi, Sumin Song, Woosang Lim, and Sungjoo Yoo.
\newblock Grouped sequency-arranged rotation: Optimizing rotation
  transformation for quantization for free.
\newblock In \emph{Proceedings of the 63rd Annual Meeting of the Association
  for Computational Linguistics (ACL), Student Research Workshop}, 2025.
\newblock arXiv:2505.03810.

\bibitem[Dettmers et~al.(2022)Dettmers, Lewis, Belkada, and
  Zettlemoyer]{dettmers2022llmint8}
Tim Dettmers, Mike Lewis, Younes Belkada, and Luke Zettlemoyer.
\newblock {LLM.int8()}: 8-bit matrix multiplication for transformers at scale.
\newblock In \emph{Advances in Neural Information Processing Systems
  (NeurIPS)}, volume~35, pp.\  30318--30332. Curran Associates, Inc., 2022.
\newblock \doi{10.52202/068431-2198}.
\newblock arXiv:2208.07339.

\bibitem[Frantar et~al.(2023)Frantar, Ashkboos, Hoefler, and
  Alistarh]{frantar2022gptq}
Elias Frantar, Saleh Ashkboos, Torsten Hoefler, and Dan Alistarh.
\newblock {GPTQ}: Accurate post-training quantization for generative
  pre-trained transformers.
\newblock In \emph{International Conference on Learning Representations
  (ICLR)}, 2023.
\newblock arXiv:2210.17323.

\bibitem[Grattafiori et~al.(2024)Grattafiori, Dubey, Jauhri, Pandey, Kadian,
  Al-Dahle, Letman, Mathur, Schelten, Yang, Fan, et~al.]{dubey2024llama3}
Aaron Grattafiori, Abhimanyu Dubey, Abhinav Jauhri, Abhinav Pandey, Abhishek
  Kadian, Ahmad Al-Dahle, Aiesha Letman, Akhil Mathur, Alan Schelten, Amy Yang,
  Angela Fan, et~al.
\newblock The {Llama} 3 herd of models.
\newblock \emph{arXiv preprint arXiv:2407.21783}, 2024.

\bibitem[Hooper et~al.(2024)Hooper, Kim, Mohammadzadeh, Mahoney, Shao, Keutzer,
  and Gholami]{hooper2024kvquant}
Coleman Hooper, Sehoon Kim, Hiva Mohammadzadeh, Michael~W. Mahoney,
  Yakun~Sophia Shao, Kurt Keutzer, and Amir Gholami.
\newblock {KVQuant}: Towards 10 million context length {LLM} inference with
  {KV} cache quantization.
\newblock In \emph{Advances in Neural Information Processing Systems 37
  ({NeurIPS} 2024)}, 2024.
\newblock arXiv:2401.18079.

\bibitem[Hsieh et~al.(2024)Hsieh, Sun, Kriman, Acharya, Rekesh, Jia, Zhang, and
  Ginsburg]{hsieh2024ruler}
Cheng-Ping Hsieh, Simeng Sun, Samuel Kriman, Shantanu Acharya, Dima Rekesh, Fei
  Jia, Yang Zhang, and Boris Ginsburg.
\newblock {RULER}: What's the real context size of your long-context language
  models?
\newblock In \emph{Conference on Language Modeling ({COLM})}, 2024.
\newblock arXiv:2404.06654.

\bibitem[Hu et~al.(2025)Hu, Cheng, Yang, Chen, Xu, Yu, Xu, Yuan, Jiang, and
  Zhou]{hu2025ostquant}
Xing Hu, Yuan Cheng, Dawei Yang, Zhixuan Chen, Zukang Xu, Jiangyong Yu, Chen
  Xu, Zhihang Yuan, Zhe Jiang, and Sifan Zhou.
\newblock {OSTQuant}: Refining large language model quantization with
  orthogonal and scaling transformations for better distribution fitting.
\newblock In \emph{International Conference on Learning Representations
  (ICLR)}, 2025.
\newblock arXiv:2501.13987.

\bibitem[Huang et~al.(2021)Huang, Cao, Parulian, Ji, and
  Wang]{huang2021govreport}
Luyang Huang, Shuyang Cao, Nikolaus Parulian, Heng Ji, and Lu~Wang.
\newblock Efficient attentions for long document summarization.
\newblock In \emph{Proceedings of the 2021 Conference of the North American
  Chapter of the Association for Computational Linguistics: Human Language
  Technologies (NAACL-HLT)}, 2021.
\newblock arXiv:2104.02112.

\bibitem[Jia et~al.(2026)Jia, Li, Zhou, Heo, Wang, Dao, Song, Athiwaratkun, Xu,
  Zhang, and Wu]{jia2026sawint4}
Jinda Jia, Jisen Li, Zhongzhu Zhou, Jung~Hwan Heo, Jue Wang, Tri Dao,
  Shuaiwen~Leon Song, Ben Athiwaratkun, Chenfeng Xu, Tianyi Zhang, and Xiaoxia
  Wu.
\newblock {SAW-INT4}: System-aware 4-bit {KV}-cache quantization for real-world
  {LLM} serving.
\newblock \emph{arXiv preprint arXiv:2604.19157}, 2026.

\bibitem[Jiang et~al.(2023)Jiang, Sablayrolles, Mensch, Bamford, Chaplot,
  de~las Casas, Bressand, Lengyel, Lample, Saulnier, Lavaud, Lachaux, Stock,
  Le~Scao, Lavril, Wang, Lacroix, and El~Sayed]{jiang2023mistral}
Albert~Q. Jiang, Alexandre Sablayrolles, Arthur Mensch, Chris Bamford,
  Devendra~Singh Chaplot, Diego de~las Casas, Florian Bressand, Gianna Lengyel,
  Guillaume Lample, Lucile Saulnier, L{\'e}lio~Renard Lavaud, Marie-Anne
  Lachaux, Pierre Stock, Teven Le~Scao, Thibaut Lavril, Thomas Wang,
  Timoth{\'e}e Lacroix, and William El~Sayed.
\newblock {Mistral} 7{B}.
\newblock \emph{arXiv preprint arXiv:2310.06825}, 2023.

\bibitem[Lakens(2017)]{lakens2017equivalence}
Dani{\"e}l Lakens.
\newblock Equivalence tests: A practical primer for {t} tests, correlations,
  and meta-analyses.
\newblock \emph{Social Psychological and Personality Science}, 8\penalty0
  (4):\penalty0 355--362, 2017.
\newblock \doi{10.1177/1948550617697177}.

\bibitem[Li et~al.(2025)Li, Zhang, Hassan, Chafekar, Cai, Ren, Guo, Karimzadeh,
  Reed, Wang, and Gan]{commvq2025}
Junyan Li, Yang Zhang, Muhammad~Yusuf Hassan, Talha Chafekar, Tianle Cai, Zhile
  Ren, Pengsheng Guo, Foroozan Karimzadeh, Colorado Reed, Chong Wang, and
  Chuang Gan.
\newblock {CommVQ}: Commutative vector quantization for {KV} cache compression.
\newblock In \emph{International Conference on Machine Learning (ICML)}, 2025.
\newblock arXiv:2506.18879.

\bibitem[Liang et~al.(2026)Liang, Chen, Zhang, Han, and Liu]{paroquant2026}
Yesheng Liang, Haisheng Chen, Zihan Zhang, Song Han, and Zhijian Liu.
\newblock {ParoQuant}: Pairwise rotation quantization for efficient reasoning
  {LLM} inference.
\newblock In \emph{International Conference on Learning Representations
  (ICLR)}, 2026.
\newblock arXiv:2511.10645.

\bibitem[Lin et~al.(2024{\natexlab{a}})Lin, Xu, Wu, Cui, Zhang, Mou, Song, Sun,
  and Wei]{lin2024duquant}
Haokun Lin, Haobo Xu, Yichen Wu, Jingzhi Cui, Yingtao Zhang, Linzhan Mou, Linqi
  Song, Zhenan Sun, and Ying Wei.
\newblock {DuQuant}: Distributing outliers via dual transformation makes
  stronger quantized {LLMs}.
\newblock In \emph{Advances in Neural Information Processing Systems
  (NeurIPS)}, 2024{\natexlab{a}}.
\newblock Oral; arXiv:2406.01721.

\bibitem[Lin et~al.(2024{\natexlab{b}})Lin, Tang, Tang, Yang, Chen, Wang, Xiao,
  Dang, Gan, and Han]{lin2024awq}
Ji~Lin, Jiaming Tang, Haotian Tang, Shang Yang, Wei-Ming Chen, Wei-Chen Wang,
  Guangxuan Xiao, Xingyu Dang, Chuang Gan, and Song Han.
\newblock {AWQ}: Activation-aware weight quantization for {LLM} compression and
  acceleration.
\newblock In \emph{Proceedings of Machine Learning and Systems (MLSys)},
  2024{\natexlab{b}}.
\newblock arXiv:2306.00978.

\bibitem[Liu et~al.(2025)Liu, Zhao, Fedorov, Soran, Choudhary, Krishnamoorthi,
  Chandra, Tian, and Blankevoort]{liu2024spinquant}
Zechun Liu, Changsheng Zhao, Igor Fedorov, Bilge Soran, Dhruv Choudhary,
  Raghuraman Krishnamoorthi, Vikas Chandra, Yuandong Tian, and Tijmen
  Blankevoort.
\newblock {SpinQuant}: {LLM} quantization with learned rotations.
\newblock In \emph{International Conference on Learning Representations
  (ICLR)}, 2025.
\newblock arXiv:2405.16406; OpenReview: ogO6DGE6FZ.

\bibitem[Liu et~al.(2024)Liu, Yuan, Jin, Zhong, Xu, Braverman, Chen, and
  Hu]{liu2024kivi}
Zirui Liu, Jiayi Yuan, Hongye Jin, Shaochen Zhong, Zhaozhuo Xu, Vladimir
  Braverman, Beidi Chen, and Xia Hu.
\newblock {KIVI}: A tuning-free asymmetric 2bit quantization for {KV} cache.
\newblock In \emph{Proceedings of the 41st International Conference on Machine
  Learning (ICML)}, pp.\  32332--32344, 2024.
\newblock arXiv:2402.02750.

\bibitem[Merity et~al.(2017)Merity, Xiong, Bradbury, and
  Socher]{merity2016wikitext}
Stephen Merity, Caiming Xiong, James Bradbury, and Richard Socher.
\newblock Pointer sentinel mixture models.
\newblock In \emph{International Conference on Learning Representations
  ({ICLR})}, 2017.
\newblock arXiv:1609.07843.

\bibitem[{Meta}(2024{\natexlab{a}})]{meta2024llama32_1b}
{Meta}.
\newblock {Llama 3.2 1B} model card.
\newblock Hugging Face model card, 2024{\natexlab{a}}.
\newblock URL \url{https://huggingface.co/meta-llama/Llama-3.2-1B}.

\bibitem[{Meta}(2024{\natexlab{b}})]{meta2024llama32_3b}
{Meta}.
\newblock {Llama 3.2 3B} model card.
\newblock Hugging Face model card, 2024{\natexlab{b}}.
\newblock URL \url{https://huggingface.co/meta-llama/Llama-3.2-3B}.

\bibitem[{Mistral AI}(2024)]{mistralai2024mistral7bv03}
{Mistral AI}.
\newblock {Mistral-7B-v0.3} model card.
\newblock Hugging Face model card, 2024.
\newblock URL \url{https://huggingface.co/mistralai/Mistral-7B-v0.3}.

\bibitem[Pavlov(2026)]{iisr2026}
Gorgi Pavlov.
\newblock Influence-inspired spectral rotations for extreme low-bit {LLM}
  quantization.
\newblock \emph{arXiv preprint arXiv:2605.25203}, 2026.

\bibitem[Peng et~al.(2024)Peng, Quesnelle, Fan, and Shippole]{peng2023yarn}
Bowen Peng, Jeffrey Quesnelle, Honglu Fan, and Enrico Shippole.
\newblock {YaRN}: Efficient context window extension of large language models.
\newblock In \emph{International Conference on Learning Representations
  ({ICLR})}, 2024.
\newblock arXiv:2309.00071.

\bibitem[Qiao \& Huang(2026)Qiao and Huang]{qroar2026}
Ye~Qiao and Sitao Huang.
\newblock {Q-ROAR}: Outlier-aware rescaling for {RoPE} position interpolation
  in quantized long-context {LLMs}.
\newblock In \emph{AAAI Conference on Artificial Intelligence (AAAI), Student
  Abstract Track}, 2026.
\newblock arXiv:2509.14391.

\bibitem[Qiao et~al.(2025)Qiao, Xu, Zhang, and Huang]{qroarsister2025}
Ye~Qiao, Haocheng Xu, Xiaofan Zhang, and Sitao Huang.
\newblock Rethinking {RoPE} scaling in quantized {LLM}: Theory, outlier, and
  channel-band analysis with weight rescaling.
\newblock \emph{arXiv preprint arXiv:2510.00028}, 2025.

\bibitem[Rae et~al.(2020)Rae, Potapenko, Jayakumar, Hillier, and
  Lillicrap]{rae2019pg19}
Jack~W. Rae, Anna Potapenko, Siddhant~M. Jayakumar, Chloe Hillier, and
  Timothy~P. Lillicrap.
\newblock Compressive transformers for long-range sequence modelling.
\newblock In \emph{International Conference on Learning Representations
  ({ICLR})}, 2020.
\newblock arXiv:1911.05507; introduces the PG-19 long-range language-modelling
  benchmark.

\bibitem[Sanjeet et~al.(2026)Sanjeet, Colbert, Monteagudo-Lago, Franco,
  Umuroglu, and Fraser]{sanjeet2026blockrotations}
Sai Sanjeet, Ian Colbert, Pablo Monteagudo-Lago, Giuseppe Franco, Yaman
  Umuroglu, and Nicholas~J. Fraser.
\newblock Pushing the limits of block rotations in post-training quantization.
\newblock \emph{arXiv preprint arXiv:2601.22347}, 2026.

\bibitem[Shao et~al.(2024)Shao, Chen, Zhang, Xu, Zhao, Li, Zhang, Gao, Qiao,
  and Luo]{shao2024omniquant}
Wenqi Shao, Mengzhao Chen, Zhaoyang Zhang, Peng Xu, Lirui Zhao, Zhiqian Li,
  Kaipeng Zhang, Peng Gao, Yu~Qiao, and Ping Luo.
\newblock {OmniQuant}: Omnidirectionally calibrated quantization for large
  language models.
\newblock In \emph{International Conference on Learning Representations
  (ICLR)}, 2024.
\newblock arXiv:2308.13137.

\bibitem[Shao et~al.(2025)Shao, Chen, Wang, Yu, Lin, Yao, Wei, and
  Cheng]{dartquant2025}
Yuantian Shao, Yuanteng Chen, Peisong Wang, Jianlin Yu, Jing Lin, Yiwu Yao,
  Zhihui Wei, and Jian Cheng.
\newblock {DartQuant}: Efficient rotational distribution calibration for {LLM}
  quantization.
\newblock In \emph{Advances in Neural Information Processing Systems
  (NeurIPS)}, 2025.
\newblock arXiv:2511.04063.

\bibitem[Su et~al.(2024)Su, Ahmed, Lu, Pan, Wen, and Liu]{su2021rope}
Jianlin Su, Murtadha Ahmed, Yu~Lu, Shengfeng Pan, Bo~Wen, and Yunfeng Liu.
\newblock {RoFormer}: Enhanced transformer with rotary position embedding.
\newblock \emph{Neurocomputing}, 568:\penalty0 127063, 2024.
\newblock \doi{10.1016/j.neucom.2023.127063}.
\newblock arXiv:2104.09864.

\bibitem[Su et~al.(2025)Su, Chen, Shen, Wei, Li, Yu, and Yuan]{su2025rotatekv}
Zunhai Su, Zhe Chen, Wang Shen, Hanyu Wei, Linge Li, Huangqi Yu, and Kehong
  Yuan.
\newblock {RotateKV}: Accurate and robust 2-bit {KV} cache quantization for
  {LLMs} via outlier-aware adaptive rotations.
\newblock In \emph{Proceedings of the International Joint Conference on
  Artificial Intelligence (IJCAI)}, 2025.
\newblock arXiv:2501.16383.

\bibitem[Sun et~al.(2025)Sun, Liu, Bai, Bao, Zhao, Li, Hu, Yu, Hou, Yuan,
  Jiang, Liu, and Yao]{flatquant2025}
Yuxuan Sun, Ruikang Liu, Haoli Bai, Han Bao, Kang Zhao, Yuening Li, Jiaxin Hu,
  Xianzhi Yu, Lu~Hou, Chun Yuan, Xin Jiang, Wulong Liu, and Jun Yao.
\newblock {FlatQuant}: Flatness matters for {LLM} quantization.
\newblock In \emph{International Conference on Machine Learning (ICML)}, 2025.
\newblock arXiv:2410.09426.

\bibitem[Tseng et~al.(2024)Tseng, Chee, Sun, Kuleshov, and
  De~Sa]{tseng2024quipsharp}
Albert Tseng, Jerry Chee, Qingyao Sun, Volodymyr Kuleshov, and Christopher
  De~Sa.
\newblock {QuIP\#}: Even better {LLM} quantization with {Hadamard} incoherence
  and lattice codebooks.
\newblock In \emph{International Conference on Machine Learning (ICML)}, 2024.
\newblock arXiv:2402.04396.

\bibitem[van Breugel et~al.(2025)van Breugel, Bondarenko, Whatmough, and
  Nagel]{fptquant2025}
Boris van Breugel, Yelysei Bondarenko, Paul Whatmough, and Markus Nagel.
\newblock {FPTQuant}: Function-preserving transforms for {LLM} quantization.
\newblock In \emph{ICML 2025 Workshop on Efficient Systems for Foundation
  Models (ES-FoMo-III)}, 2025.
\newblock Oral; arXiv:2506.04985. Subsequently published at ICML 2026 (main
  conference).

\bibitem[Wang et~al.(2025)Wang, Han, Xu, and Srivastava]{wang2025squat}
Hao Wang, Ligong Han, Kai Xu, and Akash Srivastava.
\newblock {SQuat}: Subspace-orthogonal {KV} cache quantization.
\newblock In \emph{Conference on Language Modeling ({COLM})}, 2025.
\newblock arXiv:2503.24358.

\bibitem[Wei et~al.(2023)Wei, Zhang, Li, Zhang, Gong, Guo, and
  Liu]{wei2023outlierplus}
Xiuying Wei, Yunchen Zhang, Yuhang Li, Xiangguo Zhang, Ruihao Gong, Jinyang
  Guo, and Xianglong Liu.
\newblock Outlier suppression+: Accurate quantization of large language models
  by equivalent and effective shifting and scaling.
\newblock In \emph{Proceedings of the 2023 Conference on Empirical Methods in
  Natural Language Processing (EMNLP)}, pp.\  1648--1665, 2023.
\newblock arXiv:2304.09145.

\bibitem[Wu et~al.(2025)Wu, Lv, Feng, Zhang, Zhang, Yin, Lin, and
  Yan]{polarquant2026}
Songhao Wu, Ang Lv, Xiao Feng, Yufei Zhang, Xun Zhang, Guojun Yin, Wei Lin, and
  Rui Yan.
\newblock {PolarQuant}: Leveraging polar transformation for efficient key cache
  quantization and decoding acceleration.
\newblock In \emph{Advances in Neural Information Processing Systems
  (NeurIPS)}, 2025.
\newblock arXiv:2502.00527.

\bibitem[Xiang \& Zhang(2025)Xiang and Zhang]{xiang2024dfrot}
Jingyang Xiang and Sai~Qian Zhang.
\newblock {DFRot}: Achieving outlier-free and massive activation-free for
  rotated {LLMs} with refined rotation.
\newblock In \emph{Conference on Language Modeling (COLM)}, 2025.
\newblock arXiv:2412.00648.

\bibitem[Xiao et~al.(2023)Xiao, Lin, Seznec, Wu, Demouth, and
  Han]{xiao2022smoothquant}
Guangxuan Xiao, Ji~Lin, Mickael Seznec, Hao Wu, Julien Demouth, and Song Han.
\newblock {SmoothQuant}: Accurate and efficient post-training quantization for
  large language models.
\newblock In \emph{Proceedings of the 40th International Conference on Machine
  Learning (ICML)}, pp.\  38087--38099, 2023.
\newblock arXiv:2211.10438.

\bibitem[Xiao et~al.(2025)Xiao, Ji, Li, Liu, Ma, Wang, Li, Zhong, Xie, Tashi,
  and Yu]{singlequant2025}
Jinying Xiao, Bin Ji, Shasha Li, Xiaodong Liu, Jun Ma, Chao Wang, Wei Li,
  Ye~Zhong, Xuan Xie, Nyima Tashi, and Jie Yu.
\newblock Outlier smoothing with closed-form rotations for {W4A4} large
  language model quantization.
\newblock \emph{arXiv preprint arXiv:2511.22316}, 2025.

\bibitem[Xu et~al.(2025)Xu, Dong, Elachqar, and Shang]{butterflyquant2025}
Bingxin Xu, Zhen Dong, Oussama Elachqar, and Yuzhang Shang.
\newblock {ButterflyQuant}: Ultra-low-bit {LLM} quantization through learnable
  orthogonal butterfly transforms.
\newblock \emph{arXiv preprint arXiv:2509.09679}, 2025.

\bibitem[Yu et~al.(2025)Yu, Jiang, Jia, Yan, Liu, Qian, Li, Dong, and
  Yuan]{comrope2025}
Hao Yu, Tangyu Jiang, Shuning Jia, Shannan Yan, Shunning Liu, Haolong Qian,
  Guanghao Li, Shuting Dong, and Chun Yuan.
\newblock {ComRoPE}: Scalable and robust rotary position embedding
  parameterized by trainable commuting angle matrices.
\newblock In \emph{Proceedings of the IEEE/CVF Conference on Computer Vision
  and Pattern Recognition (CVPR)}, pp.\  4508--4517, 2025.
\newblock arXiv:2506.03737.

\bibitem[Zandieh et~al.(2026)Zandieh, Daliri, Hadian, and
  Mirrokni]{turboquant2026}
Amir Zandieh, Majid Daliri, Majid Hadian, and Vahab Mirrokni.
\newblock {TurboQuant}: Online vector quantization with near-optimal distortion
  rate.
\newblock In \emph{International Conference on Learning Representations
  (ICLR)}, 2026.
\newblock arXiv:2504.19874.

\end{thebibliography}
\bibliographystyle{plainnat}

\appendix
\section{Reproducibility}
\label{app:reproducibility}
The commands, paths, and identifiers below describe the companion code artifact. Records retained only in the execution environment are identified explicitly; reproducing from raw outputs requires those records in addition to the compact artifact.

\subsection*{A.1 Common implementation and quantisation configuration}
All results are produced with a DartQuant-style \WAKV{} post-training quantisation
procedure \citep{dartquant2025}. The procedure applies four rotations, following the
$R_1$--$R_4$ taxonomy of \citet{ashkboos2024quarot} and \citet{liu2024spinquant}: an
offline residual-stream rotation $R_1$, an offline value/output-projection rotation
$R_2$, an online post-RoPE $Q/K$ rotation $R_3$, and an online down-projection
Hadamard transform $R_4$. Weights are quantised with GPTQ \citep{frantar2022gptq}. Unless a static-scale control is explicitly specified,
activations and the KV cache use asymmetric per-token dynamic quantisation, with
scales recomputed on every forward pass.

The RoPE-aligned $R_3$ transformation is integrated at the calibration-statistics
stage and at the online $Q/K$ transformation stage. In all reported comparisons,
$R_4$ remains a Hadamard transform. The default configuration is Llama-3.2-3B with
head dimension $\dh=128$, corresponding to $64$ RoPE frequency pairs. Calibration
uses $128$ WikiText-2 sequences of length $2048$. Unless otherwise stated, the
reference seed is $0$ (\texttt{--seed 0}). All quantised evaluations use
\texttt{--quantizer\_type int4 --w\_bits 4 --a\_bits 4 --k\_bits 4 --v\_bits 4}.

\paragraph{RoPE channel layout.}
The evaluated checkpoints use the Hugging Face \texttt{rotate\_half} layout, in which
RoPE frequency pair $k$ occupies channels $\{k,\,k+\dh/2\}$, not the adjacent pair
$\{2k{-}1,2k\}$ used in the statements of \S\ref{sec:theory}. The two layouts differ by
a fixed permutation of channels, and all per-pair statements are invariant under it, but
the implementation must apply the rotation in the layout the checkpoint actually uses.
It does so throughout: the band rotation is applied across the two halves of the channel
axis in \path{dartquant_v2/equalising_r3.py}, and the runtime collector
(\path{dartquant_v2/equalising_r3_runtime.py}) forms its covariances under the same
$(k,\,k+\dh/2)$ pairing. The submitted head-resolved calibration summary is
\path{docs/data_db/paper/table1_rho.csv}; the artifact verifier checks its complete
three-decimal paper values but does not reconstruct it from omitted raw activations. For the mixing-support interpolation
(\S A.7), \path{dartquant_v2/block_hadamard_r3.py} applies an explicit permutation
$2k+s\mapsto k+s\,\dh/2$ before forming blocks, so that the $b=2$ block spans the two
channels of one frequency pair. Cutting blocks directly on raw half-half channel indices
would instead pair channels belonging to two \emph{different} bands, which would silently
break the $b=2$ endpoint's membership in the commuting family; the permutation exists to
prevent that.

The reader-facing transformation names correspond to the following code identifiers:
\begin{itemize}[leftmargin=1.4em,itemsep=1pt,topsep=2pt]
  \item \textbf{full-head Hadamard}: \texttt{hadamard}, with no equalising-$R_3$ flags;
  \item \textbf{pairwise-only}: \texttt{eq\_only}, using
        \texttt{--equalising\_r3 --equalising\_r3\_no\_hadamard};
  \item \textbf{pairwise\,+\,Hadamard}: \texttt{eq\_layered} or \texttt{eq\_r3}, using
        \texttt{--equalising\_r3} without \texttt{--equalising\_r3\_no\_hadamard}.
\end{itemize}
The variance objective is always selected explicitly with
\texttt{--equalising\_r3\_objective variance} in the corresponding configurations,
because the implementation otherwise defaults to \texttt{linf}.

The primary Llama-3.2-3B evaluations use NVIDIA A100 nodes with $40$ or $80$\,GB of
memory. The cross-model evaluations and the separate six-seed Llama-3.2-3B evaluation
use H100/H200-class nodes with A100 fallback. Both hardware pools use
\texttt{torch} 2.6.0+cu124 and \texttt{transformers} 4.49.0. The A100 evaluations
verify \texttt{fast-hadamard-transform} 1.1.0; the second pool uses a source build
whose version tag is not confirmed. Each result directory records the source
revision, software versions, and GPU model in \texttt{env.json}.

\subsection*{A.2 Implementation of the RoPE-aligned $Q/K$ rotation}

\paragraph{A.2.1 Calibration statistics and angle estimation.}
With $R_1$ and $R_2$ folded into the weights, one calibration forward pass records the outputs of $q_{\mathrm{proj}}$ and $k_{\mathrm{proj}}$ in every layer. For each stream and frequency pair, the collector folds the calibration-sample, sequence-position, and attention-head axes into one observation axis and accumulates $n$, $\sum a$, $\sum b$, $\sum a^2$, $\sum b^2$, and $\sum ab$ in CPU \texttt{float64}. Under the default rule, the Q and K totals are added elementwise and the covariance is then formed once as
\[
\widehat{\Sigma}_k
=\frac1n\sum_{i=1}^{n}x_i x_i^\top-\bar{x}\bar{x}^\top,
\qquad x_i=(a_i,b_i)^\top.
\]
This operation computes the covariance of the concatenated Q/K observation rows. It is not an arithmetic average of separately centred head-wise or stream-wise covariance matrices; when group means differ, the common centring operation retains the corresponding between-group contribution. Because each physical head contributes the same number of calibration positions, the default Q:K observation ratios are $3{:}1$ for Llama-3.2-3B and $4{:}1$ for the other evaluated checkpoints. The procedure produces one covariance matrix for every layer and frequency pair.

The pooling rule is selected by \texttt{--equalising\_r3\_pool}:
\begin{itemize}[leftmargin=1.4em,itemsep=1pt,topsep=2pt]
  \item \texttt{rows} adds the raw Q and K sufficient statistics and then centres once, equivalently computing the covariance of their concatenated observation rows;
  \item \texttt{k\_only} constructs the covariance from K observations alone;
  \item \texttt{balanced} averages the per-stream first and uncentred second moments with equal stream weights and then centres once. It is therefore not, in general, the arithmetic mean of the two centred stream covariances.
\end{itemize}
This option changes only the statistic used to estimate the angle. Every configuration
still applies one shared angle to $Q$ and $K$ and therefore satisfies
Proposition~\ref{prop:rope-commutativity-requires-shared-theta}; the estimator is
compared in App.~\ref{app:data},~\S L.

For the default variance objective, the implementation uses the closed-form Jacobi
equalising angle
$\hat\phi_k=\tfrac12\,\mathrm{atan2}\!\left(\sigma_{1,k}^2-\sigma_{2,k}^2,\,
2\rho_k\sigma_{1,k}\sigma_{2,k}\right)$,
which is Eq.~\ref{eq:phistar} evaluated at $(C_k,S_k)=(1,0)$. The alternative objectives
\texttt{linf} (max-channel $L_\infty$), \texttt{l2max} (sum of per-channel max-squared),
and \texttt{quantile} do not have a closed form in the implementation and are evaluated
over a $200$-point discretisation of $(-\pi/4,\pi/4]$. Candidate observations are drawn
from a per-pair top-$k$ set ranked by $|a|+|b|$.

The stored angle array has shape $(\text{n\_layers},\,\dh/2)=(\text{n\_layers},64)$ and is broadcast across attention heads and sequence positions. Thus $\phi_{\ell,h,k}=\phi_{\ell,k}$ for every head $h$. The implementation therefore evaluates a head-shared subset of the within-head commuting family and does not estimate head-dependent angles.

\paragraph{A.2.2 Online application.}
The implementation applies the pairwise $2{\times}2$ rotation $G(\hat\phi_k)$
immediately after RoPE produces $(q,k)$. This is implemented through a runtime
modification of the $Q/K$ processing module. The same per-pair angle is applied to both
streams. Distinct angles $\phi_k^{Q}\not\equiv\phi_k^{K}\pmod{2\pi}$ would leave
attention with a non-cancelling $G(\phi_k^{Q}-\phi_k^{K})$ factor
(Proposition~\ref{prop:rope-commutativity-requires-shared-theta}) and therefore would
not preserve the original attention geometry.

The online application occurs after the KV-cache quantiser has been configured. This
placement is identical across all compared configurations except for the choice of
$R_3$. The offline rotations, GPTQ Hessian, calibration sample, and random seed are
held fixed within every paired comparison.

\subsection*{A.3 Short-context evaluation protocol}
Short-context perplexity is evaluated on WikiText-2 under dynamic \WAKV{} quantisation.
Within each model and seed, the compared transformations use the same offline
rotations, GPTQ Hessian, calibration sequences, and random seed. Reported differences
are therefore paired within seed and isolate the online $R_3$ transformation.

The reference invocation below compares the full-head Hadamard and
pairwise\,+\,Hadamard configurations for seeds $0$--$2$.
\begin{verbatim}
for SEED in 0 1 2; do
  # Full-head Hadamard baseline
  python dartquant_v2/run_quantize.py \
      --model /path/to/meta-Llama-3.2-3B \
      --loss whip --quantizer_type int4 --cal_dataset wikitext2 \
      --w_bits 4 --a_bits 4 --k_bits 4 --v_bits 4 \
      --seed $SEED --ppl_eval --ppl_eval_dataset wikitext2 \
      --output_dir out/multiseed/whip/hadamard_seed$SEED
  # Pairwise + Hadamard
  python dartquant_v2/run_quantize.py \
      --model /path/to/meta-Llama-3.2-3B \
      --loss whip --quantizer_type int4 --cal_dataset wikitext2 \
      --w_bits 4 --a_bits 4 --k_bits 4 --v_bits 4 \
      --seed $SEED --ppl_eval --ppl_eval_dataset wikitext2 \
      --equalising_r3 --equalising_r3_objective variance \
      --output_dir out/multiseed/whip/eq_r3_seed$SEED
done
\end{verbatim}
The scheduler-neutral script
\path{scripts/reproduction/run_extended_short_context.sh} records this protocol and
prints its commands by default.
Paired differences are computed as $\Delta(\texttt{eq\_r3}-\texttt{hadamard})$ within
seed, and the reported standard deviations use the sample ($n{-}1$) convention. The
full-head Hadamard baseline varies by approximately $0.1$ PPL across seeds in
representative long-context evaluations (the 2K cross-seed sample standard deviation is
$0.082$--$0.289$ across the reported corpora, Table~\ref{tab:data-crossover}), which
motivates the use of within-seed comparisons for differences of a few hundredths of a
perplexity point (the largest per-seed 3B \textsc{whip} difference in
Table~\ref{tab:data-null} is $+0.0168$).

The cross-model short-context evaluation applies the same protocol to Llama-3.2-1B,
Llama-3.1-8B, and Mistral-7B-v0.3 by changing \texttt{--model}. The submitted compact
view \path{docs/data_db/paper/table2_null_short.csv} records the seed-specific
differences and an explicitly named sample-standard-deviation field. The verifier
recomputes that field from the three displayed seeds. Sample standard deviations use
the $n{-}1$ convention throughout the manuscript.

The additional nine-seed Llama-3.2-1B comparison uses
\path{scripts/reproduction/run_extended_short_context.sh} and evaluates the
full-head Hadamard and pairwise\,+\,Hadamard configurations for seeds $0$--$8$. The
angle-objective comparison uses
\path{scripts/reproduction/run_angle_objectives.sh}.
Comparisons that use references produced under a different source state are identified
explicitly as such in Table~\ref{tab:data-objectives} and are not used for equivalence
conclusions.

\subsection*{A.4 Long-context evaluation protocol}
Long-context perplexity is evaluated with the concatenated-stream protocol on Proof-Pile
and PG19. The compared transformations are full-head Hadamard, pairwise-only, and
pairwise\,+\,Hadamard. Within each model, seed, corpus, and context length, all three
configurations score identical token sequences, permitting paired perplexity
comparisons.

The standard context-length set is $2048$, $4096$, $8192$, $16384$, $32768$, and
$65536$ tokens. Llama-3.2-3B and Llama-3.2-1B are additionally evaluated at $131072$
tokens. Native RoPE is used without YaRN or position interpolation. Mistral results
beyond its documented 32K context window are labelled exploratory.
\begin{verbatim}
SEEDS="0 1 2" \
LENGTHS=2048,4096,8192,16384,32768,65536 \
DATASETS=proof_pile,pg19 \
CONFIGURATIONS="hadamard pairwise_only pairwise_plus_hadamard" \
MODEL=/path/to/meta-Llama-3.2-3B \
bash scripts/reproduction/run_long_context.sh
\end{verbatim}
The portable driver prints commands unless \texttt{EXECUTE=1}; no workload-manager
commands are embedded. It invokes the following evaluation command for each
specified configuration:
\begin{verbatim}
python dartquant_v2/run_quantize.py \
    --model /path/to/meta-Llama-3.2-3B \
    --loss whip --quantizer_type int4 --cal_dataset wikitext2 \
    --w_bits 4 --a_bits 4 --k_bits 4 --v_bits 4 --seed $SEED \
    --ppl_eval --ppl_eval_dataset wikitext2 \
    --eval_long_context \
    --long_context_datasets proof_pile \
    --long_context_lengths 2048,4096,8192,16384,32768,65536 \
    --long_context_stride 256 --long_context_max_docs 64 \
    --long_context_output <out_dir>/long_context_eval.json \
    <configuration flags>
\end{verbatim}
The configuration-specific flags are empty for full-head Hadamard;
\texttt{--equalising\_r3 --equalising\_r3\_no\_hadamard --equalising\_r3\_objective variance}
for pairwise-only; and
\texttt{--equalising\_r3 --equalising\_r3\_objective variance} for
pairwise\,+\,Hadamard.

For memory efficiency, the language-model head is applied to hidden states in blocks of
$1024$ tokens. Each requested context length is evaluated independently; if a context
length exceeds available memory, the output records an explicit error for that length
and evaluation proceeds for the remaining lengths. The reported Llama-3.2-3B and
Llama-3.2-1B 128K evaluations fit on a $40$\,GB A100. Llama-3.1-8B and Mistral-7B-v0.3
are reported through 64K.

The concatenated-stream evaluator scores the first $\min(24,\lfloor N/L\rfloor)$
non-overlapping windows from the tokenised 64-document stream at each context length
$L$. Both compared configurations therefore score byte-identical tokens at a given
length. The scored token set changes with context length: Proof-Pile contributes
$24/24/24/14/7/3/1$ windows at 2K--128K, whereas PG19 contributes $24$ windows at every
reported length through 128K (a prefix growing from $49$K to $3.15$M tokens).
Consequently, absolute perplexities should not be compared across context lengths; the
analysis uses within-length paired differences.

\subsection*{A.5 Pooled-covariance surrogate optimum: evaluation and verification}
The position-averaged configurations add \texttt{--equalising\_r3\_position\_averaged}
and specify the averaging length with \texttt{--equalising\_r3\_position\_length L}. In
long-context evaluations, $L$ is set to the context length being scored ($2048$,
$32768$, or $131072$ in the reported comparisons), so no evaluation uses an angle
averaged for a different deployment length.

The branch-matched controls \texttt{phi\_hat\_canon\_only} and
\texttt{phi\_hat\_canon\_layered} apply the same branch reduction to $\hat\phi_k$.
Comparisons between $\phistar$ and these controls therefore isolate the
finite-position-averaging correction rather than the branch convention.

RoPE frequencies are obtained from the instantiated checkpoint rather than
reconstructed from a base parameter. At runtime, the implementation searches the
rotary-embedding modules for a materialised \texttt{inv\_freq} buffer of length
$\dh/2$. Execution terminates with an explicit error if no compatible buffer is found
or if more than one distinct compatible buffer is present. This procedure includes
checkpoint-specific frequency scaling, including the Llama-3 \texttt{rope\_type} rule.

Each position-averaged evaluation records the applied angles, per-pair covariances reconstructed after raw-moment aggregation, and deployed frequency vector. A separate verification procedure iterates
over the registered configuration list rather than a filesystem glob, recomputes the
pooled-covariance, position-averaged surrogate objective from these stored quantities, and compares the achieved
per-pair minimax variance with the analytic minimum $\lambdabar$ of
Theorem~\ref{thm:closed-form-optimal-angle}. It writes a per-pair CSV and a summary
carrying \texttt{phi\_star\_verified}, the worst-case excess, the tolerance, and the
frequency source. Missing required verification artifacts cause the procedure to
terminate with an error; configurations that do not use $\phistar$ are excluded by
construction.

All $30$ position-averaged verification records satisfy the surrogate-optimality criterion. The largest
excess over the analytic minimum is $6.90\times10^{-7}$, compared with a tolerance of
$5\times10^{-5}$, and every record identifies the live rotary module as the frequency
source. Completeness of the evaluation set is determined by the same predicate the
evaluation harness uses to decide whether a configuration must still be evaluated, so
the criterion is identical for the harness and for the verification procedure. The
registered primary evaluation set contains $81$ manifest-defined configurations, all of
which produced valid completion artifacts. A further $14$ supplementary
configurations (nine long-context controls and five component evaluations) also
completed successfully.

Results obtained in an earlier evaluation that reconstructed frequencies from the base
parameter are excluded. That construction omitted checkpoint-specific RoPE scaling and
therefore implemented a different transformation. No reported value is derived from
that evaluation.

\subsection*{A.6 Static-quantisation control}
The static per-channel control is enabled with \texttt{DARTQUANT\_STATIC\_ACT=1},
\texttt{DARTQUANT\_STATIC\_CALIB=64}, and
\texttt{DARTQUANT\_STATIC\_MATCH\_EVAL\_LEN=1}. Under this condition,
\texttt{ActQuantizer.find\_params} uses one scale per channel, estimated across the
token dimension. The calibration-derived scales are then held fixed during evaluation.

For each corpus--context-length combination, the static state is reset and the scales
are re-estimated from up to $64$ non-overlapping windows of the target length drawn
from the evaluation corpus. Calibration and scoring text therefore overlap. This
protocol also changes the quantiser geometry from the default grouped per-token rule to
a static per-channel rule. The comparison is consequently interpreted as a
quantiser-regime contrast rather than as an identification of a causal mechanism.
\begin{verbatim}
SEEDS="0 1 2 3 4 5" \
DARTQUANT_STATIC_MATCH_EVAL_LEN=1 \
DARTQUANT_STATIC_ACT=1 DARTQUANT_STATIC_CALIB=64 \
DARTQUANT_STATIC_SCOPE=both \
LENGTHS=2048,4096,8192,16384,32768,65536 \
DATASETS=proof_pile \
CONFIGURATIONS="hadamard pairwise_only" \
MODEL=/path/to/meta-Llama-3.2-3B \
bash scripts/reproduction/run_long_context.sh
\end{verbatim}
This expands to the same \texttt{run\_quantize.py} invocation as \S A.4, with the
environment variables exported and the two configuration flag sets given there. The
environment variable \texttt{DARTQUANT\_STATIC\_SCOPE} selects which quantiser uses
calibration-derived fixed scales: \texttt{both} holds the activation and KV-cache
scales fixed; \texttt{act} holds the activation scales fixed while the KV-cache scales
remain per-token dynamic; and \texttt{kv} holds the KV-cache scales fixed while the
activation scales remain dynamic. The \texttt{both} setting reproduces the original
joint static modification. The runtime modification is inactive unless the
corresponding environment variable is set and does not alter the upstream DartQuant
source files.

\subsection*{A.7 Mixing-support interpolation}
The mixing-support evaluation uses pair-interleaved block-Hadamard transformations with
block sizes $b\in\{2,4,8,16,32,64,128\}$. The primary comparison includes three paired
perplexity seeds and three K-diagnostic seeds for each block size. Three reference
configurations are evaluated at the same source state for each seed, and single-seed
channel-layout and orientation controls are reported descriptively.

Only $b=2$ belongs to the RoPE-commuting family. Larger block sizes deliberately relax
commutativity to vary the number of channels over which a peak can be redistributed.
Each evaluation records WikiText-2 perplexity, mean and maximum K range after $R_3$, the
quantiser step diagnostic, and relative K quantisation error.

The complete command set is generated without an execution-system dependency by
\path{scripts/reproduction/generate_mixing_support_manifest.py}:
\begin{verbatim}
python scripts/reproduction/generate_mixing_support_manifest.py \
  --output mixing_support_manifest.csv
\end{verbatim}
The manifest contains exactly $33$ unique cells: seven block sizes at three seeds,
three reference transformations at three seeds, and three single-seed layout/orientation
controls. Each row records the model, seed, layout, orientation, and complete method
arguments and may be executed using the same \texttt{run\_quantize.py} prefix as
\S A.4.

The submitted compact aggregate and its paper values are verified entirely from files in
the companion code artifact with
\begin{verbatim}
python scripts/verify_artifact.py
\end{verbatim}
The verifier requires all three seeds for the primary interpolation and the corresponding
references, checks the published intervals and both support endpoints, and verifies the
reported monotone point-estimate trend. The single-seed layout and orientation controls
are excluded from the primary intervals.
Table~\ref{tab:data-mixing-support} is generated directly from
\path{docs/data_db/clean/mixing_support.csv}.

\subsection*{A.8 Implementation constraints and reproducibility notes}
\begin{itemize}
  \item \textbf{Model-path requirement.} The DartQuant evaluation path selects
        model-specific handling using the substring \texttt{meta} in \texttt{--model}. A
        local checkpoint path must therefore contain \texttt{meta} (for example,
        \texttt{.../meta-Llama-3.2-3B}) for the intended evaluation branch to be
        used.
  \item \textbf{Behaviour of the released $R_2$ calibration defaults.} The released
        DartQuant $R_2$ calibrator \citep{dartquant2025}
        (\texttt{calibrater/r2\_base\_qr.py}, as of the public release used here;
        later revisions may differ) defaults to
        \texttt{--nsamples 128 --bsz 128 --accumulation\_steps 2}. The complete
        calibration set therefore forms one batch per epoch, and the condition
        \texttt{(batch\_idx+1) \% accumulation\_steps} is never satisfied. No optimiser
        update is applied, and $R_2$ remains at its random-Hadamard initialisation. The
        present implementation reproduces this behaviour intentionally (the $R_2$ batch
        size is set to the dataset size, with a comment recording why) so that the
        experimental substrate matches the released method. The $R_2$ loss trace is
        therefore expected to remain constant, and the saved $R_2$ equals its initial
        value. Because $R_2$ is identical across transformations within a seed, it
        cancels in all paired comparisons.
  \item \textbf{Empty per-document sliding-window evaluations.} Under the per-document
        sliding-window protocol (\texttt{concat=False}), a requested window longer than
        every document yields no scored window, and the average NLL is vacuous. Earlier
        code reported the value $\mathrm{ppl}=1.0$ in this case, which produced a
        spurious GovReport \citep{huang2021govreport} $\geq\!32$K crossover. The current
        output records \texttt{n\_docs\_scored} and \texttt{n\_short\_skipped}, and the
        reported long-context results use the concatenated-stream protocol, which always
        supplies scored tokens at the requested lengths. The earlier GovReport values at
        32K and above obtained from the empty-window case are excluded.
  \item \textbf{Concatenated-stream window cap.} At each context length $L$, the
        evaluator scores the first $\min(24,\lfloor N/L\rfloor)$ non-overlapping windows
        from the 64-document token stream. This cap limits evaluation time and is fixed
        by the implementation. It does not affect within-length pairing, because
        compared transformations score identical tokens, but it changes the scored token
        set across lengths (\S A.4) and therefore precludes interpreting absolute
        perplexity as a length trend, as disclosed in
        \S\ref{sec:results_long_context}.
  \item \textbf{128K memory requirement.} The $131072$-token evaluation is the most
        memory-intensive condition. Llama-3.2-3B and Llama-3.2-1B fit on a $40$\,GB
        A100 when the language-model head is applied in $1024$-token blocks; no
        $80$\,GB card is required. The principal memory requirement arises from the
        forward pass and KV cache rather than from the language-model head. If a
        requested length exceeds available memory, the result file records an error for
        that length while preserving results for the remaining lengths. The
        Llama-3.1-8B and Mistral-7B-v0.3 evaluations are reported through 64K.
  \item \textbf{Non-additivity of the fixed-scale conditions.} The joint
        activation-and-KV fixed-scale result is not the sum of the activation-only and
        KV-only results. At 2K, the activation-only difference is $+1.73$ PPL and the
        KV-only difference is $-1.02$ PPL (sum $+0.71$), whereas the joint difference is
        $-4.69$ PPL in the preliminary unmatched three-seed analysis
        (App.~\ref{app:data},~\S D). This non-additivity is descriptive. Because the
        protocol is transductive and changes quantiser geometry, it does not isolate an
        interaction mechanism or attribute the result to either stage. The preliminary
        2K joint result also has substantial seed variability (sample standard deviation
        $3.67$ at $n{=}3$); the matched six-seed estimate in
        Table~\ref{tab:mechanism-static} ($-5.60$ PPL at 2K) is the reported reference.
\end{itemize}

\subsection*{A.9 Artifact and source-state provenance}

The companion code artifact is a compact release rather than a copy of the
execution environment. It contains the implementation, portable command
drivers and manifest generators, final compact summaries, selected seed-level tables,
a selective paper-number verifier, and content digests. It does not contain raw-output
aggregation utilities and does not claim to rebuild every table from raw logs. It excludes machine names, scheduler identifiers, private
development revisions, model weights, evaluation corpora, and superseded execution
notes. Consequently, the supplied tables can be inspected and the documented
configurations can be re-executed, but the ZIP alone does not reconstruct the
original scheduler history.

\paragraph{A.9.1 Position-averaged and component evaluations.}
The evaluated set comprises $81$ primary configurations, $23$ repeated
evaluations, nine additional long-context controls, and five component or precheck
configurations. The companion package represents these experiments through the
configuration generators and portable drivers. Their relation to the
reported tables is documented in \path{REPRODUCIBILITY.md}.
\begin{itemize}[leftmargin=1.4em,itemsep=1pt,topsep=2pt]
  \item \textbf{Configuration generation.}
        \path{scripts/reproduction/generate_primary_configuration_manifest.py}
        emits the 81 primary rows without invoking a scheduler. It records the
        model, configuration, seed, corpus, context-length, and method arguments
        used by the evaluation harness.
  \item \textbf{Scheduler-neutral commands.}
        \path{scripts/reproduction/run_extended_short_context.sh} records the
        additional short-context and position-averaged configurations, while
        \path{scripts/reproduction/run_angle_objectives.sh} records the
        angle-objective comparison, and
        \path{scripts/reproduction/run_long_context.sh} records the context-length
        and fixed-scale commands. These scripts print commands by default and require an
        explicit execution setting before launching experiments.
  \item \textbf{Verification.} Position-averaged runs request the per-band
        objective dump, which records the achieved objective, the analytic minimum,
        the numerical tolerance, and the checkpoint frequency source.
  \item \textbf{Compact data.} The compact CSVs under
        \path{docs/data_db/paper/} preserve the fields documented for each mapped
        table in \path{REPRODUCIBILITY.md}. Coverage is table-specific: some files
        contain displayed per-seed values, while others contain only final summaries.
\end{itemize}

\paragraph{A.9.2 Angle-estimator evaluation.}
The angle-estimator analysis (App.~\ref{app:data},~\S L) contains $77$
Llama-3.2-3B configurations and $103$ cross-model configurations; all listed
processes terminated successfully. The companion package contains compact final
views for this analysis; it does not include the raw-output aggregator. Three details
govern their interpretation.
\begin{itemize}[leftmargin=1.4em,itemsep=1pt,topsep=2pt]
  \item \textbf{Calibration-row budget.} The activation-bank row budget is derived from
        an explicit byte budget rather than from available host memory, because the
        realised row count affects the subsampling used to derive $R_1$ and $R_2$. Each
        evaluated configuration validates and records the model-specific row counts,
        ensuring that a change in the resulting rotations cannot occur without being
        recorded.
  \item \textbf{Combination of forward and reverse work lists.} The cross-model manifest
        is the union of forward and reverse evaluations over the same fixed list,
        producing $103$ unique configurations with no conflicting duplicates. When a
        configuration appears in both sets, the result fields are identical and the
        longer execution record is retained. One result reuses a previously validated
        evaluation produced under the same source state, byte budget, and realised row
        counts; the aggregation script records this reuse explicitly.
  \item \textbf{Source-state separation.} The retained execution records distinguish
        the revision captured when a work list was emitted from the source state that
        executed it. Those development identifiers are omitted from the compact
        package because they are not required to use the released source. The release instead
        fixes the supplied implementation by file-level SHA-256 digests.
\end{itemize}

\paragraph{A.9.3 Package integrity.}
The archive-generation procedure excludes Git metadata, local outputs,
scheduler-specific development launchers, and private denylist material. It
writes \path{MANIFEST.sha256} over the included files. The included archive builder
can additionally emit an optional sidecar SHA-256 digest next to the ZIP. The preparation
scanner and its release-specific denylist are not included in the companion package. These
digests verify the supplied contents; they do not claim identity with private
scheduler records that are intentionally absent.

\paragraph{A.9.4 Scope limitation.}
The companion package does not contain the raw scheduler logs or superseded status
files. In the retained execution archive, twelve unsuccessful status records coexist
with later schema-valid outputs; all affected configurations were subsequently
evaluated successfully, and all $81$ registered primary configurations are complete
under the valid-output criterion. The private records are retained outside the
distributed ZIP and can be consulted separately if provenance verification requires
them.

\section{Complete experimental data}
\label{app:data}

This appendix reports the complete numerical results supporting the main text.
Sections~\S A--\S H present the detailed Llama-3.2-3B evaluations; \S I presents the
cross-model results for Llama-3.2-1B, Llama-3.1-8B, and Mistral-7B-v0.3; and \S J--\S M
report the position-averaged, component, estimator, and mixing-support analyses.
App.~\ref{app:reproducibility} specifies the corresponding implementation and evaluation
protocols. Throughout this appendix, paired differences are computed within seed against
the full-head Hadamard configuration unless another comparator is stated explicitly.

\subsection*{Common environment and experimental configuration}
\begin{itemize}\setlength\itemsep{1pt}
  \item \textbf{Hardware and software.} The primary Llama-3.2-3B evaluations use
        NVIDIA A100 nodes with $40$ or $80$\,GB of memory. The cross-model evaluations
        and the separate six-seed Llama-3.2-3B evaluation use H100/H200-class nodes
        with A100 fallback. Both environments use \texttt{torch} 2.6.0+cu124 and
        \texttt{transformers} 4.49.0. The A100 environment verifies
        \texttt{fast-hadamard-transform} 1.1.0; the second environment uses a source
        build whose version tag is not confirmed. Each result directory records the
        source revision, software versions, and GPU model in \texttt{env.json}.
  \item \textbf{Quantisation.} Unless otherwise stated, all evaluations use W4A4KV4
        INT4 (\texttt{--quantizer\_type int4 --w\_bits 4 --a\_bits 4 --k\_bits 4
        --v\_bits 4}), GPTQ weight quantisation, and DartQuant asymmetric per-token
        dynamic quantisation for activations and the KV cache. \S D reports the static
        per-channel control.
  \item \textbf{Calibration.} The standard calibration set contains 128 WikiText-2
        sequences of length 2048. The closed-form $R_3$ angle is estimated once from
        this calibration pass (\S\ref{sec:experimental_setup}).
  \item \textbf{Statistical reporting.} Seed-level variability is reported with the sample ($n{-}1$) standard deviation. Population standard deviations emitted by the evaluation procedure are converted by multiplying by $\sqrt{n/(n-1)}$. Unless stated otherwise, confidence intervals are $90\%$ paired $t$-intervals computed within seed. The twelve position-averaged long-context contrasts in \S J constitute one comparison family; their two-sided $p$-values use Holm adjustment. The estimator comparisons in \S L are exploratory and report unadjusted $90\%$ paired intervals. Equivalence conclusions are stated relative to the selected margin and do not imply a task-independent minimum important difference.
  \item \textbf{Transformations.} The reader-facing names are full-head Hadamard,
        pairwise-only, and pairwise\,+\,Hadamard. Their literal code identifiers are
        \texttt{hadamard} (no $R_3$ flags), \texttt{eq\_only}
        (\texttt{--equalising\_r3 --equalising\_r3\_no\_hadamard
        --equalising\_r3\_objective variance}), and \texttt{eq\_layered} $\equiv$
        \texttt{eq\_r3} (\texttt{--equalising\_r3 --equalising\_r3\_objective variance};
        the closed form composed \emph{before} Hadamard), respectively. The variance
        objective is selected explicitly in all corresponding evaluations, because the
        flag otherwise defaults to \texttt{linf}.
  \item \textbf{Long-context evaluation.} Long-context perplexity uses the
        concatenated-stream protocol described in App.~\ref{app:reproducibility}, with
        $\min(24,\lfloor N/L\rfloor)$ non-overlapping windows at each context length
        (the window cap is fixed by the implementation, not a tunable flag). Proof-Pile
        and PG19 are evaluated with native RoPE and no YaRN. Mistral-7B-v0.3 results
        beyond its documented 32K context window are labelled exploratory.
  \item \textbf{Models.} The evaluated checkpoints are Llama-3.2-1B ($\dh{=}64$),
        Llama-3.2-3B ($\dh{=}128$, the primary checkpoint), Llama-3.1-8B
        ($\dh{=}128$), and Mistral-7B-v0.3 ($\dh{=}128$, a second pretraining family).
        The local Mistral path contains the substring \texttt{meta}, as required by the
        DartQuant evaluation branch.
  \item \textbf{Evaluation coverage.} The default-estimator comparisons are complete
        for all reported checkpoints and contexts. The position-averaged surrogate optimum
        $\phi_k^\ast$ (\S J) is evaluated only on \textbf{Llama-3.2-3B} and uses the
        checkpoint's deployed scaled RoPE frequencies. The pooling analysis (\S L)
        evaluates the default $Q/K$-weighted and K-only estimators on all four
        checkpoints, the stream-balanced estimator on Llama-3.2-3B, and $\phi_k^\ast$
        pooling variants on Llama-3.2-3B; non-default pooling is evaluated only at
        short context, so no long-context claim is made about it. The $L_\infty$ and
        high-quantile objectives are likewise restricted to 2K short-context
        evaluation. The matched static-scale comparison is reported through 64K
        (\S H). LongBench-v2 results are reported only for the aggregated
        Llama-3.2-3B evaluation of \S G: the corresponding cross-model evaluations
        were run at three seeds and produced the same non-discriminative value, but were
        never aggregated, so no cross-model LongBench-v2 value is reported and no
        conclusion is drawn from it.
  \item \textbf{Position-averaged configurations.} The code identifiers
        \texttt{eq\_only\_star} and \texttt{eq\_layered\_star} denote pairwise-only and
        pairwise\,+\,Hadamard with $\phi_k^\ast$, frequencies read from the
        checkpoint's live rotary module. \texttt{phi\_hat\_canon\_only} and
        \texttt{phi\_hat\_canon\_layered} denote the branch-matched $\hat\phi_k$
        controls, which apply the same branch reduction so that the difference between
        $\phi_k^\ast$ and the control isolates the position-averaging correction.
\end{itemize}

\subsection*{A. Intra-band calibration statistics}
A descriptive calibration analysis retains the attention-head dimension and estimates
one covariance matrix for every $(\text{layer},\text{head},\text{frequency-pair})$
combination. For each stream, Table~\ref{tab:data-rho} reports the mean absolute
correlation $\overline{|\rho|}$, the proportion of combinations with $|\rho|>0.05$, and
the mean absolute log variance ratio
$\overline{|\log(\sigma_1^2/\sigma_2^2)|}$. This analysis is distinct from the deployed
angle estimator (App.~\ref{app:reproducibility},~\S A.2.1), which aggregates raw moments across attention-head observation rows before constructing one covariance matrix and one angle for each
$(\text{layer},\text{frequency-pair})$ combination. Results are reported for all four
checkpoints.

\begin{table}[ht]\centering\small
\begin{tabular}{llccc}
\toprule
model & stream & $\overline{|\rho|}$ & frac $|\rho|{>}0.05$ & $\overline{|\log\,\mathrm{var\text{-}ratio}|}$ \\
\midrule
Llama-3.2-1B & Q & $0.165$ & $0.780$ & $0.466$ \\
             & K & $0.169$ & $0.769$ & $0.577$ \\
Llama-3.2-3B & Q & $0.141$ & $0.734$ & $0.402$ \\
             & K & $0.159$ & $0.748$ & $0.534$ \\
Llama-3.1-8B & Q & $0.135$ & $0.723$ & $0.367$ \\
             & K & $0.154$ & $0.738$ & $0.504$ \\
Mistral-7B-v0.3 & Q & $0.148$ & $0.735$ & $0.356$ \\
                & K & $\mathbf{0.175}$ & $0.755$ & $0.479$ \\
\bottomrule
\end{tabular}
\caption{Intra-band calibration statistics by checkpoint and stream. The table reports
the mean absolute correlation, the proportion of
$(\text{layer},\text{head},\text{frequency-pair})$ combinations with $|\rho|>0.05$, and
the mean absolute log variance ratio. These statistics describe the head-resolved
calibration distribution and are distinct from the concatenated-row angle estimator used in the
evaluated transformations.}
\label{tab:data-rho}
\end{table}

\subsection*{B. Short-context paired comparisons}
Table~\ref{tab:data-null} reports the three-seed Llama-3.2-3B comparison between
pairwise\,+\,Hadamard and full-head Hadamard on WikiText-2 under dynamic W4A4KV4. The
paired difference is
$\Delta\mathrm{PPL}=\mathrm{PPL}(\text{pairwise}+\text{Hadamard})-\mathrm{PPL}(\text{full-head Hadamard})$.
The corresponding Llama-3.2-1B, Llama-3.1-8B, and Mistral-7B-v0.3 results appear in \S I.

\begin{table}[ht]\centering\small
\begin{tabular}{cccc}
\toprule
\multicolumn{2}{c}{mean PPL} & \multicolumn{2}{c}{paired PPL difference} \\
\cmidrule(lr){1-2}\cmidrule(lr){3-4}
full-head Hadamard & pairwise\,+\,Hadamard & by seed 0/1/2 & mean $\pm$ sample sd \\
\midrule
$10.0823$ & $10.0864$ & $-0.0078 / {+}0.0034 / {+}0.0168$ & $+0.0041 \pm 0.0123$ \\
\bottomrule
\end{tabular}
\caption{Llama-3.2-3B short-context paired comparison on WikiText-2 under dynamic
W4A4KV4 with \textsc{whip} calibration, $n=3$; $\pm$ is sample sd. The difference is
pairwise\,+\,Hadamard minus full-head Hadamard; positive values indicate higher
perplexity for pairwise\,+\,Hadamard. The $90\%$ confidence interval lies within the
selected $\pm0.05$-PPL interval criterion.}
\label{tab:data-null}
\end{table}

\paragraph{Additional seed counts.}
An independent six-seed evaluation of $\hat\phi_k$, conducted in the cross-model
software environment and paired within that environment, yields a corrected $90\%$
confidence interval of $[-0.005,+0.023]$ PPL under \textsc{whip} calibration. These
values are not pooled with the three-seed results because the software environments
differ. An additional nine-seed Llama-3.2-1B evaluation yields a mean paired difference
of $+0.0002$ PPL, sample standard deviation $0.0283$, and $90\%$ confidence interval
$[-0.0173,+0.0178]$.

\begin{table}[ht]\centering\small
\begin{tabular}{lrrrr}
\toprule
comparison & $n$ & mean difference & sample sd & $90\%$ CI \\
\midrule
Llama-3.2-1B, pairwise\,+\,Hadamard with $\hat\phi_k$ & 9 & $+0.0002$ & $0.0283$ & $[-0.0173,+0.0178]$ \\
\bottomrule
\end{tabular}
\caption{Additional nine-seed short-context interval-criterion evaluation for Llama-3.2-1B on
WikiText-2 under dynamic W4A4KV4 with \textsc{whip} calibration. The paired difference
is pairwise\,+\,Hadamard minus full-head Hadamard. The $90\%$ confidence interval
lies within the selected $\pm0.05$-PPL margin under the TOST interval formulation.}
\label{tab:data-fresh-short}
\end{table}

\paragraph{Angle-objective comparison.}
Table~\ref{tab:data-objectives} compares the variance, $L_\infty$, and high-quantile
angle objectives. Comparisons labelled \emph{matched} use configurations evaluated from
the same source revision and execution state. Comparisons labelled \emph{different} use
a previously recorded full-head Hadamard or pairwise\,+\,Hadamard reference; a
validation evaluation measured a $0.110$-PPL difference attributable to source-state
drift, so different-source-state comparisons are interpreted directionally and are not
used to establish equivalence.

\begin{table}[ht]\centering\small
\resizebox{\linewidth}{!}{%
\begin{tabular}{lllcrrr}
\toprule
configuration & objective & comparator & source state & mean difference & sample sd & $90\%$ CI \\
\midrule
pairwise-only & variance & full-head Hadamard & different & $+0.4945$ & $0.0476$ & $[+0.4143,+0.5747]$ \\
pairwise-only & $L_\infty$ & full-head Hadamard & different & $+0.4108$ & $0.0257$ & $[+0.3676,+0.4541]$ \\
pairwise-only & $L_\infty$ & pairwise-only, variance & matched & $-0.0837$ & $0.0717$ & $[-0.2045,+0.0372]$ \\
pairwise-only & high-quantile & full-head Hadamard & different & $+0.4264$ & $0.0344$ & $[+0.3684,+0.4845]$ \\
pairwise-only & high-quantile & pairwise-only, variance & matched & $-0.0681$ & $0.0600$ & $[-0.1692,+0.0331]$ \\
pairwise\,+\,Hadamard & $L_\infty$ & full-head Hadamard & different & $+0.0110$ & $0.0225$ & $[-0.0269,+0.0489]$ \\
pairwise\,+\,Hadamard & $L_\infty$ & pairwise\,+\,Hadamard, variance & different & $+0.0005$ & $0.0058$ & $[-0.0093,+0.0102]$ \\
pairwise\,+\,Hadamard & high-quantile & full-head Hadamard & different & $-0.0066$ & $0.0232$ & $[-0.0457,+0.0324]$ \\
pairwise\,+\,Hadamard & high-quantile & pairwise\,+\,Hadamard, variance & different & $-0.0172$ & $0.0032$ & $[-0.0226,-0.0118]$ \\
\bottomrule
\end{tabular}%
}
\caption{Llama-3.2-3B short-context comparison of angle objectives under dynamic
W4A4KV4 with \textsc{whip} calibration, $n=3$. Matched-source-state rows compare
pairwise-only objectives within the same execution state. Rows using a
different-source-state reference are reported as directional checks and are not used
for equivalence inference.}
\label{tab:data-objectives}
\end{table}

\paragraph{Equivalence-margin sensitivity.}
Table~\ref{tab:tost-sensitivity} evaluates the criterion-relative conclusion at margins of $\pm0.02$, $\pm0.05$, and $\pm0.10$ PPL. Equivalence at margin $m$ is established when the complete sample-sd $90\%$ paired confidence interval lies within $[-m,+m]$. The manuscript uses $m=0.05$ PPL as its primary reporting criterion. This value is not presented as an externally validated minimum important difference; conclusions are conditional on the selected margin, and the additional columns show their sensitivity to alternative thresholds.

\begin{table}[ht]\centering\small
\begin{tabular}{lccccc}
\toprule
model & mean $\Delta$ & $90\%$ CI & $\pm0.02$ & $\pm0.05$ & $\pm0.10$ \\
\midrule
Llama-3.2-1B ($n=9$) & $+0.0002$ & $[-0.017,+0.018]$ & equivalent & equivalent & equivalent \\
Llama-3.2-3B    & $+0.0041$ & $[-0.017,+0.025]$ & not established & equivalent & equivalent \\
Llama-3.1-8B    & $+0.0063$ & $[-0.002,+0.015]$ & equivalent & equivalent & equivalent \\
Mistral-7B-v0.3 & $-0.0012$ & $[-0.006,+0.004]$ & equivalent & equivalent & equivalent \\
\bottomrule
\end{tabular}
\caption{Sensitivity of the TOST conclusion for pairwise\,+\,Hadamard versus full-head
Hadamard under \textsc{whip} calibration. The Llama-3.2-1B row uses the additional
$n=9$ evaluation; the other rows use $n=3$. Equivalence at margin $m$ is established
when the full $90\%$ confidence interval lies within $[-m,+m]$. All four checkpoints
satisfy the selected $\pm0.05$-PPL criterion; the table shows that this conclusion is margin-dependent for Llama-3.2-3B.}
\label{tab:tost-sensitivity}
\end{table}

\subsection*{C. Long-context evaluation under dynamic quantisation}
Table~\ref{tab:data-crossover} reports concatenated-stream perplexity for Llama-3.2-3B
on Proof-Pile and PG19 under dynamic per-token W4A4KV4, with three paired seeds. The
compared transformations are full-head Hadamard, pairwise-only, and
pairwise\,+\,Hadamard. Results are complete from $2$K through $128$K on both corpora;
the $128$K evaluations fit on the $40$\,GB A100 environment, so no $80$\,GB hardware is
needed. Within every corpus and context length, the compared transformations score
identical tokens.

\begin{table}[ht]\centering\small
\begin{tabular}{rccccc}
\toprule
 & \multicolumn{3}{c}{mean PPL} & \multicolumn{2}{c}{PPL difference vs Hadamard} \\
\cmidrule(lr){2-4}\cmidrule(lr){5-6}
$L$ & full-head Hadamard & pairwise-only & pairwise\,+\,Hadamard & pairwise-only & pairwise\,+\,Hadamard \\
\midrule
\multicolumn{6}{l}{\emph{Proof-Pile}}\\
2K  & $4.536{\scriptstyle\pm.082}$ & $4.696{\scriptstyle\pm.095}$ & $4.536{\scriptstyle\pm.073}$ & $+0.161{\scriptstyle\pm.018}$ & $+0.000{\scriptstyle\pm.015}$ \\
4K  & $3.516{\scriptstyle\pm.052}$ & $3.659{\scriptstyle\pm.062}$ & $3.514{\scriptstyle\pm.044}$ & $+0.143{\scriptstyle\pm.014}$ & $-0.002{\scriptstyle\pm.008}$ \\
8K  & $3.606{\scriptstyle\pm.047}$ & $3.770{\scriptstyle\pm.065}$ & $3.605{\scriptstyle\pm.046}$ & $+0.164{\scriptstyle\pm.020}$ & $-0.001{\scriptstyle\pm.005}$ \\
16K & $3.411{\scriptstyle\pm.039}$ & $3.582{\scriptstyle\pm.047}$ & $3.414{\scriptstyle\pm.036}$ & $+0.171{\scriptstyle\pm.009}$ & $+0.003{\scriptstyle\pm.004}$ \\
32K & $3.290{\scriptstyle\pm.037}$ & $3.481{\scriptstyle\pm.046}$ & $3.294{\scriptstyle\pm.036}$ & $+0.191{\scriptstyle\pm.010}$ & $+0.004{\scriptstyle\pm.002}$ \\
64K & $3.233{\scriptstyle\pm.028}$ & $3.439{\scriptstyle\pm.036}$ & $3.234{\scriptstyle\pm.027}$ & $+0.207{\scriptstyle\pm.011}$ & $+0.001{\scriptstyle\pm.001}$ \\
128K & $2.933{\scriptstyle\pm.021}$ & $3.126{\scriptstyle\pm.032}$ & $2.937{\scriptstyle\pm.020}$ & $+0.193{\scriptstyle\pm.016}$ & $+0.004{\scriptstyle\pm.002}$ \\
\midrule
\multicolumn{6}{l}{\emph{PG19}}\\
2K  & $10.953{\scriptstyle\pm.289}$ & $11.518{\scriptstyle\pm.324}$ & $10.937{\scriptstyle\pm.282}$ & $+0.566{\scriptstyle\pm.060}$ & $-0.015{\scriptstyle\pm.041}$ \\
4K  & $15.986{\scriptstyle\pm.414}$ & $16.964{\scriptstyle\pm.491}$ & $15.974{\scriptstyle\pm.399}$ & $+0.979{\scriptstyle\pm.080}$ & $-0.012{\scriptstyle\pm.016}$ \\
8K  & $17.148{\scriptstyle\pm.442}$ & $18.213{\scriptstyle\pm.476}$ & $17.163{\scriptstyle\pm.424}$ & $+1.065{\scriptstyle\pm.035}$ & $+0.015{\scriptstyle\pm.019}$ \\
16K & $16.527{\scriptstyle\pm.407}$ & $17.726{\scriptstyle\pm.448}$ & $16.560{\scriptstyle\pm.426}$ & $+1.199{\scriptstyle\pm.042}$ & $+0.033{\scriptstyle\pm.026}$ \\
32K & $14.735{\scriptstyle\pm.352}$ & $15.888{\scriptstyle\pm.405}$ & $14.751{\scriptstyle\pm.380}$ & $+1.153{\scriptstyle\pm.054}$ & $+0.016{\scriptstyle\pm.029}$ \\
64K & $13.054{\scriptstyle\pm.272}$ & $14.169{\scriptstyle\pm.326}$ & $13.059{\scriptstyle\pm.290}$ & $+1.115{\scriptstyle\pm.056}$ & $+0.005{\scriptstyle\pm.018}$ \\
128K & $12.484{\scriptstyle\pm.324}$ & $13.653{\scriptstyle\pm.410}$ & $12.488{\scriptstyle\pm.341}$ & $+1.169{\scriptstyle\pm.085}$ & $+0.004{\scriptstyle\pm.016}$ \\
\bottomrule
\end{tabular}
\caption{Llama-3.2-3B concatenated-stream perplexity on Proof-Pile and PG19 under
dynamic W4A4KV4 with \textsc{whip} calibration, $n=3$; $\pm$ is sample sd. Differences
are computed relative to the same-seed full-head Hadamard configuration; positive
values indicate higher perplexity. Absolute perplexity should not be compared across
context lengths because the scored token set changes with length; exact per-length window
counts are reported in Appendix~\ref{app:reproducibility}, \S A.4. Pairwise-only yields higher
perplexity at every reported evaluation point through 128K.}
\label{tab:data-crossover}
\end{table}

\paragraph{Frequency source for the position-averaged configurations.}
Only position-averaged evaluations that obtain RoPE frequencies from the checkpoint's
live rotary module are included. Reconstructing frequencies from the base parameter
omits checkpoint-specific scaling and defines a different transformation; values
obtained from that construction are excluded from all reported results, and every
$\phi_k^\ast$ value reported here comes from the deployed-frequency evaluations of
\S J.

\subsection*{D. Static per-channel control and fixed-scale component comparison}
The preliminary three-seed analysis in Table~\ref{tab:data-static-both} uses a static per-channel quantiser in which activation and KV scales are estimated from $64$ calibration forwards and retained during evaluation (\texttt{DARTQUANT\_STATIC\_SCOPE} $\in$ \{\texttt{both}, \texttt{act}, \texttt{kv}\}), on the same Llama-3.2-3B / Proof-Pile concatenated-stream setting as \S C. Calibration length is not matched to evaluation length in this analysis. The matched six-seed comparison in Table~\ref{tab:mechanism-static} removes that length mismatch, but it continues to estimate scales from windows that are subsequently scored and changes the scale geometry from grouped per-token to per-channel quantisation. The static comparisons are consequently treated as descriptive contrasts between evaluation protocols. They do not identify the process responsible for the dynamic-quantisation result or quantify out-of-sample performance under static quantisation.

\begin{table}[ht]\centering\small
\begin{tabular}{rccccc}
\toprule
 & \multicolumn{3}{c}{mean PPL} & \multicolumn{2}{c}{PPL difference vs Hadamard} \\
\cmidrule(lr){2-4}\cmidrule(lr){5-6}
$L$ & full-head Hadamard & pairwise-only & pairwise\,+\,Hadamard & pairwise-only & pairwise\,+\,Hadamard \\
\midrule
\multicolumn{6}{l}{\emph{joint activation-and-KV fixed-scale configuration}}\\
2K  & $22.430{\scriptstyle\pm3.47}$ & $17.738{\scriptstyle\pm1.21}$ & $21.077{\scriptstyle\pm2.65}$ & $\mathbf{-4.692{\scriptstyle\pm3.67}}$ & $-1.353{\scriptstyle\pm.87}$ \\
4K  & $14.922{\scriptstyle\pm1.37}$ & $12.657{\scriptstyle\pm0.32}$ & $14.681{\scriptstyle\pm1.91}$ & $-2.265{\scriptstyle\pm1.41}$ & $-0.241{\scriptstyle\pm.58}$ \\
8K  & $14.808{\scriptstyle\pm1.30}$ & $13.129{\scriptstyle\pm0.15}$ & $15.105{\scriptstyle\pm1.71}$ & $-1.679{\scriptstyle\pm1.38}$ & $+0.297{\scriptstyle\pm.42}$ \\
16K & $12.791{\scriptstyle\pm1.15}$ & $12.305{\scriptstyle\pm0.45}$ & $13.547{\scriptstyle\pm1.42}$ & $-0.485{\scriptstyle\pm1.47}$ & $+0.757{\scriptstyle\pm.32}$ \\
32K & $11.714{\scriptstyle\pm1.19}$ & $12.960{\scriptstyle\pm0.73}$ & $12.324{\scriptstyle\pm1.15}$ & $+1.246{\scriptstyle\pm1.74}$ & $+0.610{\scriptstyle\pm.05}$ \\
64K & $11.413{\scriptstyle\pm1.18}$ & $15.179{\scriptstyle\pm2.46}$ & $11.681{\scriptstyle\pm0.93}$ & $\mathbf{+3.766{\scriptstyle\pm2.79}}$ & $+0.268{\scriptstyle\pm.42}$ \\
\bottomrule
\end{tabular}
\caption{Preliminary unmatched static per-channel comparison on Llama-3.2-3B with
\textsc{whip} calibration, $n=3$; $\pm$ is sample sd. Activation and KV scales are
estimated during calibration and held fixed during evaluation. Calibration length is
confounded with evaluation length; the apparent 64K reversal is therefore not used for
inference.}
\label{tab:data-static-both}
\end{table}

Table~\ref{tab:data-static-roles} compares activation-only and KV-only fixed-scale
conditions under the same preliminary three-seed protocol. Neither condition reproduces
the joint activation-and-KV result, and the three differences are not additive.

\begin{table}[ht]\centering\small
\begin{tabular}{rcccccc}
\toprule
 & \multicolumn{6}{c}{pairwise-only $-$ full-head Hadamard PPL difference} \\
\cmidrule(lr){2-7}
fixed-scale condition & 2K & 4K & 8K & 16K & 32K & 64K \\
\midrule
activation-only & $+1.733{\scriptstyle\pm.33}$ & $+1.292{\scriptstyle\pm.28}$ & $+1.028{\scriptstyle\pm.26}$ & $+0.935{\scriptstyle\pm.27}$ & $+0.964{\scriptstyle\pm.27}$ & $+1.038{\scriptstyle\pm.27}$ \\
KV-only  & $-1.019{\scriptstyle\pm.06}$ & $-0.729{\scriptstyle\pm.12}$ & $-0.795{\scriptstyle\pm.12}$ & $-0.710{\scriptstyle\pm.10}$ & $-0.321{\scriptstyle\pm.09}$ & $+0.278{\scriptstyle\pm.12}$ \\
\bottomrule
\end{tabular}
\caption{Preliminary comparison of activation-only and KV-only fixed-scale conditions
on Llama-3.2-3B with \textsc{whip} calibration, $n=3$; $\pm$ is sample sd. Values are
pairwise-only minus full-head Hadamard. Neither single-component condition reproduces
the joint activation-and-KV fixed-scale difference.}
\label{tab:data-static-roles}
\end{table}

\subsection*{E. Quantiser-step and post-quantisation-error diagnostic}
The quantiser diagnostic instruments \texttt{ActQuantizer.find\_params} without changing
its numerical output. For each transformation, it records the selected per-channel step
$\Delta_c$ and the relative post-quantisation error $\|y-\hat y\|^2/\|y\|^2$ for the
tensors whose scales are being determined. Each reported condition contains $39{,}480$
quantiser evaluations.

Under dynamic per-token quantisation, the values are reported for seed~0 and the
call-level observations are dependent; they therefore provide a descriptive readout
rather than independent statistical replication. Under the joint static per-channel
condition, values are aggregated across three seeds. The static condition increases
error for both transformations and substantially reduces the diagnostic advantage of the
full-head Hadamard.

Table~\ref{tab:data-deltac} reports the corresponding step-size and error diagnostics for the dynamic and joint static conditions.

\begin{table}[ht]\centering\small
\begin{tabular}{lccccc}
\toprule
configuration & $\overline{\Delta_c}$ & sd $\Delta_c$ & max $\Delta_c$ & post-quant rel-MSE & quantiser evaluations \\
\midrule
\multicolumn{6}{l}{\emph{dynamic per-token rule (seed 0)}}\\
full-head Hadamard & $0.2843$ & $0.2188$ & $1.800$ & $0.01348$ & $39{,}480$ \\
pairwise-only & $0.3093$ & $0.2774$ & $2.329$ & $0.01634$ & $39{,}480$ \\
\multicolumn{6}{l}{\emph{joint activation-and-KV fixed-scale rule (3 seeds)}}\\
full-head Hadamard & $0.3393$ & $0.281$ & $1.647$ & $0.02418$ & $39{,}480$ \\
pairwise-only & $0.3288$ & $0.265$ & $2.875$ & $0.02391$ & $39{,}480$ \\
\bottomrule
\end{tabular}
\caption{Quantiser-step and relative-error diagnostic for Llama-3.2-3B with
\textsc{whip} calibration. Dynamic values are from seed~0 and call-level observations
are dependent; fixed-scale values aggregate three seeds. The diagnostic is consistent
with partial absorption of the transformation-dependent range difference by dynamic
scale estimation, but it is not an independent statistical test.}
\label{tab:data-deltac}
\end{table}

\subsection*{F. Lower-bit evaluation}
Table~\ref{tab:data-bitwidth} reports WikiText-2 perplexity for Llama-3.2-3B at
W4A3KV4 and W3A3KV3 using three paired seeds. Absolute perplexity is highly variable at
these bit widths, although within-seed differences remove part of the shared variation.
The models are substantially degraded, so the results are descriptive and do not support
a general claim across bit widths (\S\ref{sec:limitations}).

\begin{table}[ht]\centering\small
\begin{tabular}{lccccc}
\toprule
configuration & seed 0 & seed 1 & seed 2 & mean & PPL difference vs Hadamard \\
\midrule
\multicolumn{6}{l}{\emph{W4A3KV4}}\\
full-head Hadamard & $85.373$ & $56.406$ & $55.569$ & $65.783$ & \textit{n/a} \\
pairwise\,+\,Hadamard & $83.947$ & $56.776$ & $55.133$ & $65.286$ & $-0.497$ \\
pairwise-only & $93.287$ & $64.522$ & $62.502$ & $73.437$ & $+7.654$ \\
\midrule
\multicolumn{6}{l}{\emph{W3A3KV3}}\\
full-head Hadamard & $155.760$ & $146.080$ & $165.384$ & $155.741$ & \textit{n/a} \\
pairwise\,+\,Hadamard & $159.480$ & $152.329$ & $164.379$ & $158.729$ & $+2.988$ \\
pairwise-only & $205.163$ & $191.455$ & $213.939$ & $203.519$ & $+47.778$ \\
\bottomrule
\end{tabular}
\caption{Llama-3.2-3B lower-bit evaluation on WikiText-2 with \textsc{whip}
calibration, $n=3$. The quantised models are substantially degraded; the paired
differences are reported descriptively and do not establish a general bit-width trend.}
\label{tab:data-bitwidth}
\end{table}

\subsection*{G. Long-context task evaluations}
\paragraph{NIAH.}
Table~\ref{tab:data-niah} reports depth-averaged NIAH retrieval accuracy for
Llama-3.2-3B under dynamic quantisation. Each value averages five needle depths and $20$
probes per depth. Quantised results are averaged over seeds $\{0,1,2\}$; the FP16
reference uses seed 0. A context length is described as \emph{discriminative} when the
quantised configurations are sufficiently separated from the near-saturation FP16 value to
permit an informative comparison; this designation is a descriptive diagnostic rather
than a preregistered statistical rule. The depth-averaged values used in this table are
available in \path{docs/data_db/paper/appG_niah_3b.csv}; individual probe records are
not included in the companion code artifact.

\begin{table}[ht]\centering\small
\begin{tabular}{rccccl}
\toprule
$L$ & FP16 & full-head Hadamard & pairwise-only & pairwise\,+\,Hadamard & diagnostic status \\
\midrule
4K  & $0.990$ & $0.973$ & $0.967$ & $0.980$ & near FP16 saturation \\
8K  & $0.980$ & $0.983$ & $0.970$ & $0.977$ & near FP16 saturation \\
16K & $0.990$ & $0.957$ & $0.953$ & $0.953$ & near FP16 saturation \\
32K & $1.000$ & $0.973$ & $0.923$ & $0.950$ & \textbf{discriminative} \\
64K & $0.990$ & $0.983$ & $0.923$ & $0.933$ & \textbf{discriminative} \\
\bottomrule
\end{tabular}
\caption{Llama-3.2-3B NIAH depth-averaged retrieval accuracy with \textsc{whip}
calibration. The 32K and 64K evaluations provide sufficient separation from the FP16
result to distinguish the quantised configurations; shorter contexts are near saturation
and are therefore non-discriminative.}
\label{tab:data-niah}
\end{table}

\paragraph{LongBench-v2.}
The same Llama-3.2-3B quantised model instances are evaluated on the $384$ LongBench-v2
examples within the $64$K evaluation window. Answers are scored by option log
probability. The random baseline for four-choice questions is $0.25$. All reported means
are at or below this baseline, so the benchmark does not discriminate among the
transformations at this quantisation point. Table~\ref{tab:data-longbench} reports the complete per-seed results.

\begin{table}[ht]\centering\small
\begin{tabular}{lccc}
\toprule
configuration & overall accuracy & full-head Hadamard $-$ configuration & per-seed accuracy \\
\midrule
full-head Hadamard    & $0.207{\scriptstyle\pm.011}$ & \textit{n/a} & $.195/.214/.214$ \\
pairwise-only    & $0.224{\scriptstyle\pm.021}$ & $-0.016{\scriptstyle\pm.030}$ & $.245/.203/.224$ \\
pairwise\,+\,Hadamard & $0.239{\scriptstyle\pm.049}$ & $-0.031{\scriptstyle\pm.043}$ & $.203/.219/.294$ \\
\bottomrule
\end{tabular}
\caption{Llama-3.2-3B LongBench-v2 accuracy with \textsc{whip} calibration, $n=3$;
$\pm$ is sample sd. Scores are at or below the $0.25$ four-choice random baseline; the
evaluation is therefore non-discriminative for the compared transformations.}
\label{tab:data-longbench}
\end{table}

\subsection*{H. Interpretation of the matched static-scale comparison}
The matched-length six-seed analysis removes the calibration-length mismatch in Table~\ref{tab:data-static-both}. At 64K, the mean pairwise-only-minus-Hadamard difference is $-0.18$ PPL with sample standard deviation $0.89$; this comparison does not satisfy the selected $\pm0.05$-PPL criterion. A matched 128K result is not reported because the concatenated stream contains only one 128K window, which would further increase the overlap between scale estimation and scoring. Such overlap is present at every reported length.

For each target length in Table~\ref{tab:mechanism-static}, scales are estimated from windows of that length and retained during scoring. The protocol also replaces the deployed grouped dynamic geometry with static per-channel quantisation, and the number of calibration tokens varies with context length. The results are therefore used only to assess sensitivity to quantiser protocol. They are not used as evidence of deployment performance or as identification of a causal mechanism.

\begin{table}[ht]
\centering\small
\caption{Matched static per-channel comparison on Llama-3.2-3B/Proof-Pile with
\textsc{whip} calibration, $n=6$; $\pm$ is sample sd. Positive differences indicate
higher perplexity for pairwise-only. Scales are estimated during target-length
calibration and held fixed during evaluation. Calibration overlaps the scored windows,
and the per-channel geometry differs from the deployed grouped dynamic quantiser.}
\label{tab:mechanism-static}
\begin{tabular}{rccrrr}
\toprule
& \multicolumn{2}{c}{joint activation-and-KV fixed scale} & \multicolumn{3}{c}{pairwise-only $-$ full-head Hadamard} \\
\cmidrule(lr){2-3}\cmidrule(lr){4-6}
length & full-head Hadamard PPL & pairwise-only PPL & joint & activation-only & KV-only \\
\midrule
2K  & 19.76 & 14.16 & $-5.60{\scriptstyle\pm2.69}$ & $+1.50{\scriptstyle\pm.23}$ & $-1.12{\scriptstyle\pm.22}$ \\
4K  & 17.48 & 11.68 & $-5.80{\scriptstyle\pm5.28}$ & $+1.26{\scriptstyle\pm.26}$ & $-0.91{\scriptstyle\pm.14}$ \\
8K  & 15.98 & 13.25 & $-2.74{\scriptstyle\pm1.26}$ & $+1.30{\scriptstyle\pm.53}$ & $-1.16{\scriptstyle\pm.14}$ \\
16K & 14.63 & 12.36 & $-2.27{\scriptstyle\pm2.11}$ & $+1.20{\scriptstyle\pm.34}$ & $-1.25{\scriptstyle\pm.17}$ \\
32K & 13.32 & 12.76 & $-0.55{\scriptstyle\pm1.59}$ & $+1.12{\scriptstyle\pm.24}$ & $-1.14{\scriptstyle\pm.18}$ \\
64K & 12.56 & 12.38 & $-0.18{\scriptstyle\pm.89}$  & $+1.33{\scriptstyle\pm.14}$ & $-0.97{\scriptstyle\pm.28}$ \\
\bottomrule
\end{tabular}
\end{table}

Figure~\ref{fig:gap-vs-length} compares the dynamic and matched static pairwise-only differences across context lengths. It visualises sensitivity to quantiser protocol and does not support a direct performance comparison between the two regimes.

\begin{figure}[ht]
\centering
\includegraphics[width=0.92\linewidth]{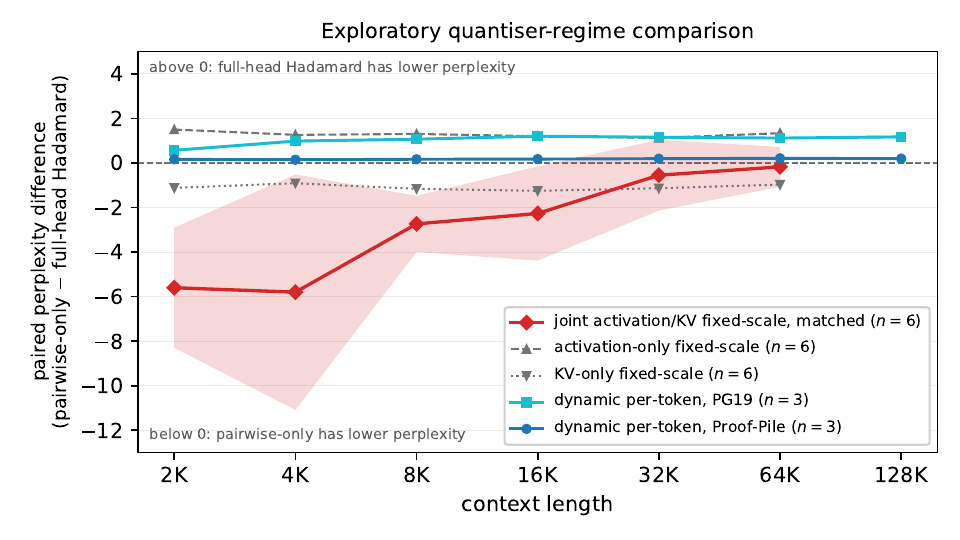}
\caption{Exploratory comparison across quantiser regimes. Dynamic curves use the
deployed per-token rule with three paired seeds; fixed-scale curves use matched,
transductive per-channel calibration with six paired seeds. Positive values indicate
higher perplexity for pairwise-only than for full-head Hadamard.}
\label{fig:gap-vs-length}
\end{figure}

Figure~\ref{fig:mechanism-schematic} summarises the associated scale-setting diagnostics. The measurements are descriptive because the dynamic values use one seed and the static protocol is transductive.

\begin{figure}[ht]
\centering
\includegraphics[width=\linewidth]{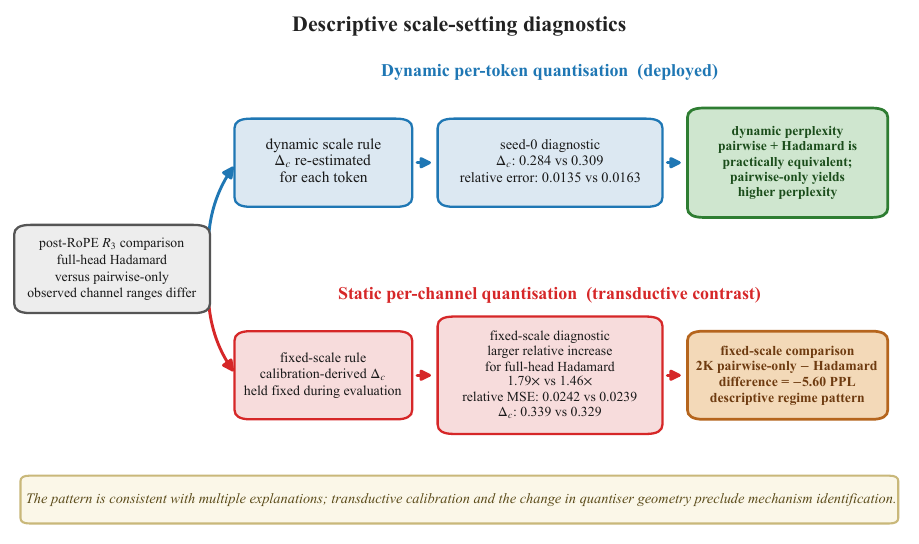}
\caption{Descriptive scale-setting diagnostic. The dynamic $\Delta_c$ values are from
seed~0, and individual quantiser evaluations are not independent replicates. The
fixed-scale comparison is transductive and changes quantiser geometry. The figure
therefore illustrates a quantiser-regime pattern rather than identifying a mechanism.}
\label{fig:mechanism-schematic}
\end{figure}

\subsection*{I. Cross-model evaluation}
\label{app:data-breadth}
The cross-model evaluation applies the same W4A4KV4 INT4 protocol to Llama-3.2-1B,
Llama-3.1-8B, and Mistral-7B-v0.3, using GPTQ weights, dynamic per-token activation/KV
quantisation, 128 WikiText-2 calibration sequences of length 2048, and seeds
$\{0,1,2\}$.

Table~\ref{tab:data-breadth-null} reports the short-context comparison between
pairwise\,+\,Hadamard and full-head Hadamard. The Llama-3.1-8B and Mistral-7B-v0.3
confidence intervals lie within the selected $\pm0.05$-PPL interval criterion. The original
three-seed Llama-3.2-1B result is inconclusive and is superseded for equivalence
inference by the independent nine-seed result in Table~\ref{tab:data-fresh-short}.

\begin{table}[ht]\centering\small
\resizebox{\linewidth}{!}{%
\begin{tabular}{lcccl}
\toprule
 & \multicolumn{2}{c}{mean PPL} & \multicolumn{2}{c}{paired PPL difference} \\
\cmidrule(lr){2-3}\cmidrule(lr){4-5}
model & full-head Hadamard & pairwise\,+\,Hadamard & mean $\pm$ sample sd & by seed 0/1/2 \\
\midrule
Llama-3.2-1B    & $14.4268$ & $14.4163$ & $-0.0105 \pm 0.0359$ & $-0.0490/{-}0.0048/{+}0.0222$ \\
Llama-3.1-8B    & $7.8251$  & $7.8314$  & $+0.0063 \pm 0.0049$ & $+0.0040/{+}0.0030/{+}0.0120$ \\
Mistral-7B-v0.3 & $5.8921$  & $5.8909$  & $-0.0012 \pm 0.0031$ & $-0.0034/{+}0.0023/{-}0.0025$ \\
\bottomrule
\end{tabular}%
}
\caption{Cross-model short-context paired differences on WikiText-2 under dynamic
W4A4KV4 with \textsc{whip} calibration, $n=3$; $\pm$ is sample sd. The difference is
pairwise\,+\,Hadamard minus full-head Hadamard. The Llama-3.1-8B and Mistral-7B-v0.3
intervals satisfy the selected $\pm0.05$-PPL interval criterion; the three-seed Llama-3.2-1B
result is superseded for inference by Table~\ref{tab:data-fresh-short}.}
\label{tab:data-breadth-null}
\end{table}

Table~\ref{tab:data-breadth-crossover} reports the long-context pairwise-only difference
relative to full-head Hadamard. Every within-length mean is positive. The Mistral
differences are small and are included as cross-model context rather than as
individually decisive results. Absolute perplexity and slopes are not compared across
lengths.

\begin{table}[ht]\centering\small
\begin{tabular}{lrrrrrrr}
\toprule
 & \multicolumn{7}{c}{pairwise-only $-$ full-head Hadamard PPL difference} \\
\cmidrule(lr){2-8}
model / corpus & 2K & 4K & 8K & 16K & 32K & 64K & 128K \\
\midrule
1B / Proof-Pile      & $+0.525$ & $+0.416$ & $+0.456$ & $+0.515$ & $+0.539$ & $+0.574$ & $+0.513$ \\
1B / PG19            & $+1.819$ & $+3.102$ & $+3.401$ & $+3.861$ & $+3.636$ & $+3.455$ & $+3.585$ \\
8B / Proof-Pile      & $+0.057$ & $+0.056$ & $+0.072$ & $+0.075$ & $+0.076$ & $+0.074$ & \textit{n/a} \\
8B / PG19            & $+0.206$ & $+0.371$ & $+0.421$ & $+0.480$ & $+0.459$ & $+0.394$ & \textit{n/a} \\
Mistral / Proof-Pile & $+0.043$ & $+0.035$ & $+0.035$ & $+0.034$ & $+0.038$ & $+0.056$ & \textit{n/a} \\
Mistral / PG19       & $+0.113$ & $+0.159$ & $+0.196$ & $+0.223$ & $+0.189$ & $+0.238$ & \textit{n/a} \\
\bottomrule
\end{tabular}
\caption{Cross-model concatenated-stream perplexity differences under dynamic W4A4KV4
with \textsc{whip} calibration, $n=3$. Positive values indicate higher perplexity for
pairwise-only than for full-head Hadamard. Every reported within-length mean is
positive. Mistral-7B-v0.3 differences are small and are interpreted as cross-model
context rather than as individually decisive comparisons.}
\label{tab:data-breadth-crossover}
\end{table}

\paragraph{Cross-model NIAH.} Depth-averaged NIAH retrieval accuracy is also evaluated
on the corresponding quantised model instances (seed 0; accuracy, so higher is better).
Llama-3.2-1B provides sufficient separation at long context and follows the perplexity
ordering: at 16K/32K/64K, full-head Hadamard obtains $0.68/0.59/0.51$, pairwise-only
obtains $0.53/0.47/0.36$, and pairwise\,+\,Hadamard obtains $0.69/0.62/0.54$. The FP16
value is approximately $1.0$ at 16K. Llama-3.1-8B remains near FP16 saturation at all
evaluated lengths, with all transformations between approximately $0.95$ and $0.99$, and
therefore does not distinguish them. This is the saturation counterpart of \S G. The complete
model-by-length-by-depth results, including Mistral-7B-v0.3, are recorded in
\path{docs/data_db/paper/appI_breadth_niah.csv}.

\subsection*{J. Pooled-covariance surrogate optimum with deployed RoPE frequencies (Llama-3.2-3B)}
\label{app:data-star}
This section reports the only $\phi_k^\ast$ evaluations used in the manuscript. All
configurations use Llama-3.2-3B, dynamic W4A4KV4, \textsc{whip} calibration, a single
recorded software environment, and one source state. Inverse frequencies are read from
the checkpoint's live rotary module, including \texttt{llama3} frequency scaling.
Reconstructing frequencies from the base parameter would misstate $63$ of the $64$
frequency pairs, with errors of up to $32\times$ in the lowest-frequency pairs.

\paragraph{Verification of the analytic optimum.}
For every position-averaged configuration, the verification procedure reconstructs the
surrogate objective from covariances formed after raw-moment aggregation and from the deployed frequency vector, then
compares the achieved per-pair minimax variance with the analytic minimum $\bar\lambda_k$ of
Theorem~\ref{thm:closed-form-optimal-angle}. All $30$ verification records satisfy the
criterion. The maximum excess over the analytic minimum, across all layers, frequency pairs, and
evaluated configurations, is $6.90\times10^{-7}$, compared with a tolerance of
$5\times10^{-5}$.

Applying the same verification procedure to alternative transformations yields mean
minimax excesses of $0.433$ for identity, $0.375$ for the per-pair $\bH_2$, and $0.0246$
for $\hat\phi_k$ (raw or branch-matched alike, the branch shift being
objective-preserving). The corresponding value for $\phi_k^\ast$ is $1.26\times10^{-8}$.
The implementation therefore reproduces the analytic ordering. End-to-end perplexity
does not follow the same ordering: the position-averaged surrogate optimum attains the analytic
minimum but yields the largest short-context pairwise-only perplexity difference.
\S\ref{sec:mechanism} gives the conditioning reason why the position-averaging
correction is largest, and least meaningful, on precisely the frequency pairs where
$|(C_k,S_k)|\approx0$.

\paragraph{Conditioning of the finite-position correction.}
At $L=2048$, $17$ of the $64$ deployed frequency pairs satisfy $|(C_k,S_k)|<0.01$. For
these pairs, $|\tfrac12\operatorname{atan2}(S_k,C_k)|$ has mean $0.75$ radians and
maximum $1.51$ radians. At $L=8192$, $25$ of $64$ pairs satisfy the same threshold. When
both finite sums are close to zero, the position-averaged covariance is close to
isotropic and many angles are nearly optimal for the variance objective. The selected
branch can therefore depend strongly on the small finite-sum remainder while the
objective remains nearly flat over angle. This qualifies the deployment interpretation
of $\phi_k^\ast$ without affecting the analytic verification.

\paragraph{Short-context comparison (2K WikiText-2).}
Table~\ref{tab:data-star-short} reports per-seed WikiText-2 perplexity for the full-head
Hadamard, identity, per-pair $\bH_2$, pairwise-only, and pairwise\,+\,Hadamard
configurations. Table~\ref{tab:data-star-short-contrasts} reports the corresponding
paired differences, which back Table~\ref{tab:corrected-star-short} of the body. The
branch-matched controls are evaluated for seeds 0--2; comparisons involving those
controls therefore use the shared three-seed subset rather than the six-seed means.

\begin{table}[ht]\centering\small
\resizebox{\linewidth}{!}{%
\begin{tabular}{lrrrrrrcc}
\toprule
configuration & s0 & s1 & s2 & s3 & s4 & s5 & mean & sd \\
\midrule
full-head Hadamard        & $10.3690$ & $9.7748$ & $10.1030$ & $10.5908$ & $9.9134$ & $9.9216$ & $10.1121$ & $0.3114$ \\
identity (no $R_3$)       & $10.8305$ & $10.2184$ & $10.5697$ & $11.1342$ & $10.3311$ & $10.3747$ & $10.5764$ & $0.3478$ \\
per-pair $\bH_2$          & $10.8245$ & $10.2020$ & $10.5264$ & $11.0978$ & $10.3243$ & $10.3443$ & $10.5532$ & $0.3436$ \\
pairwise-only, $\hat\phi_k$        & $10.8570$ & $10.2054$ & $10.5943$ & $11.1317$ & $10.3851$ & $10.4001$ & $10.5956$ & $0.3437$ \\
pairwise-only, $\phi_k^\ast$       & $10.9367$ & $10.2610$ & $10.6261$ & $11.1793$ & $10.4607$ & $10.4358$ & $10.6499$ & $0.3452$ \\
pairwise\,+\,Hadamard, $\hat\phi_k$      & $10.3724$ & $9.7916$ & $10.0952$ & $10.6043$ & $9.9101$ & $9.9309$ & $10.1174$ & $0.3119$ \\
pairwise\,+\,Hadamard, $\phi_k^\ast$     & $10.3690$ & $9.8161$ & $10.1120$ & $10.6202$ & $9.9188$ & $9.9447$ & $10.1301$ & $0.3083$ \\
\midrule
\multicolumn{9}{l}{\emph{branch-matched controls (seeds 0--2)}}\\
branch-matched pairwise-only, $\hat\phi_k$    & $10.9052$ & $10.2581$ & $10.6349$ & \textit{n/a} & \textit{n/a} & \textit{n/a} & $10.5994$ & $0.3250$ \\
branch-matched pairwise\,+\,Hadamard, $\hat\phi_k$  & $10.3747$ & $9.7889$ & $10.1170$ & \textit{n/a} & \textit{n/a} & \textit{n/a} & $10.0935$ & $0.2936$ \\
\bottomrule
\end{tabular}%
}
\caption{Per-seed WikiText-2 perplexity for Llama-3.2-3B under dynamic W4A4KV4 with
\textsc{whip} calibration. The primary transformations use six seeds; branch-matched
controls use seeds 0--2. The table supports the short-context comparison in
Table~\ref{tab:corrected-star-short}.}
\label{tab:data-star-short}
\end{table}

\begin{table}[ht]\centering\small
\resizebox{\linewidth}{!}{%
\begin{tabular}{llrrrl}
\toprule
configuration & comparator & $n$ & mean difference & sample sd & $90\%$ CI \\
\midrule
identity (no $R_3$)    & full-head Hadamard & 6 & $+0.4643$ & $0.0424$ & $[+0.4294,+0.4992]$ \\
per-pair $\bH_2$       & full-head Hadamard & 6 & $+0.4411$ & $0.0355$ & $[+0.4119,+0.4703]$ \\
pairwise-only, $\hat\phi_k$     & full-head Hadamard & 6 & $+0.4835$ & $0.0356$ & $[+0.4542,+0.5128]$ \\
pairwise-only, $\phi_k^\ast$    & full-head Hadamard & 6 & $+0.5378$ & $0.0374$ & $[+0.5071,+0.5686]$ \\
pairwise\,+\,Hadamard, $\hat\phi_k$   & full-head Hadamard & 6 & $+0.0053$ & $0.0096$ & $[-0.0026,+0.0132]$ \\
pairwise\,+\,Hadamard, $\phi_k^\ast$  & full-head Hadamard & 6 & $+0.0180$ & $0.0159$ & $[+0.0050,+0.0311]$ \\
\midrule
branch-matched pairwise-only, $\hat\phi_k$    & pairwise-only, $\hat\phi_k$            & 3 & $+0.0472$ & $0.0061$ & $[+0.0369,+0.0575]$ \\
pairwise-only, $\phi_k^\ast$             & branch-matched pairwise-only, $\hat\phi_k$  & 3 & $+0.0085$ & $0.0207$ & $[-0.0264,+0.0435]$ \\
branch-matched pairwise\,+\,Hadamard, $\hat\phi_k$  & pairwise\,+\,Hadamard, $\hat\phi_k$          & 3 & $+0.0071$ & $0.0129$ & $[-0.0147,+0.0290]$ \\
pairwise\,+\,Hadamard, $\phi_k^\ast$           & branch-matched pairwise\,+\,Hadamard, $\hat\phi_k$ & 3 & $+0.0055$ & $0.0188$ & $[-0.0262,+0.0372]$ \\
\bottomrule
\end{tabular}%
}
\caption{Paired short-context perplexity differences for the position-averaged
analysis. Standard deviations use the sample convention, and intervals are $90\%$
paired $t$-intervals. Both pairwise\,+\,Hadamard comparisons against full-head Hadamard
lie within the selected $\pm0.05$-PPL interval criterion. Three of the four branch-control comparisons
also satisfy this criterion; the interval for branch-matched minus raw pairwise-only
$\hat\phi_k$ extends to $+0.0575$ PPL.}
\label{tab:data-star-short-contrasts}
\end{table}

\paragraph{Long-context comparison.}
Table~\ref{tab:data-star-long} reports mean concatenated-stream perplexity at 2K, 32K,
and 128K. Each $\phi_k^\ast$ configuration uses the averaging length at which it is
evaluated. Independent evaluations of the full-head Hadamard, pairwise-only
$\hat\phi_k$, and pairwise\,+\,Hadamard $\hat\phi_k$ configurations agree with the
corresponding Table~\ref{tab:data-crossover} values to all printed digits across all
$18$ model-seed-corpus-length combinations. The two result sets are therefore directly
comparable at the reported precision. Table~\ref{tab:data-star-long-contrasts} reports the paired $\phi_k^\ast$ comparisons and the Holm adjustment of their two-sided $p$-values.

\begin{table}[ht]\centering\small
\resizebox{\linewidth}{!}{%
\begin{tabular}{rccccccc}
\toprule
 & & \multicolumn{3}{c}{pairwise-only} & \multicolumn{3}{c}{pairwise\,+\,Hadamard} \\
\cmidrule(lr){3-5}\cmidrule(lr){6-8}
$L$ & full-head Hadamard & $\hat\phi_k$ & branch-matched $\hat\phi_k$ & $\phi_k^\ast$ & $\hat\phi_k$ & branch-matched $\hat\phi_k$ & $\phi_k^\ast$ \\
\midrule
\multicolumn{8}{l}{\emph{Proof-Pile}}\\
2K   & $4.5357$ & $4.6963$ & $4.7358$ & $4.7342$ & $4.5358$ & $4.5347$ & $4.5381$ \\
32K  & $3.2902$ & $3.4812$ & $3.4887$ & $3.4931$ & $3.2938$ & $3.2972$ & $3.2944$ \\
128K & $2.9331$ & $3.1257$ & $3.1207$ & $3.1093$ & $2.9370$ & $2.9356$ & $2.9388$ \\
\midrule
\multicolumn{8}{l}{\emph{PG19}}\\
2K   & $10.9528$ & $11.5183$ & $11.5813$ & $11.5683$ & $10.9373$ & $10.9760$ & $10.9546$ \\
32K  & $14.7352$ & $15.8883$ & $15.9415$ & $15.8977$ & $14.7511$ & $14.7505$ & $14.7574$ \\
128K & $12.4838$ & $13.6527$ & $13.5616$ & $13.5370$ & $12.4875$ & $12.4933$ & $12.4912$ \\
\bottomrule
\end{tabular}}
\caption{Mean concatenated-stream perplexity for Llama-3.2-3B under dynamic W4A4KV4
with \textsc{whip} calibration, $n=3$. Position-averaged surrogate configurations use the context
length at which they are evaluated. Absolute perplexity should not be compared across
lengths. These values support the long-context comparisons in \S\ref{sec:results_long_context}.}
\label{tab:data-star-long}
\end{table}

\begin{table}[ht]\centering\small
\begin{tabular}{llrrll}
\toprule
corpus & $L$ & mean difference & sample sd & $90\%$ CI & $p$ \\
\midrule
\multicolumn{6}{l}{\emph{pairwise-only: $\phi_k^\ast$ $-$ branch-matched $\hat\phi_k$}}\\
Proof-Pile & 2K   & $-0.0016$ & $0.0177$ & $[-0.0315,+0.0283]$ & $.8906$ \\
Proof-Pile & 32K  & $+0.0044$ & $0.0133$ & $[-0.0180,+0.0268]$ & $.6225$ \\
Proof-Pile & 128K & $-0.0114$ & $0.0083$ & $[-0.0253,+0.0025]$ & $.1395$ \\
PG19       & 2K   & $-0.0130$ & $0.0154$ & $[-0.0389,+0.0129]$ & $.2805$ \\
PG19       & 32K  & $-0.0438$ & $0.0342$ & $[-0.1015,+0.0139]$ & $.1568$ \\
PG19       & 128K & $-0.0246$ & $0.0088$ & $[-0.0395,-0.0097]$ & $.0404$ \\
\midrule
\multicolumn{6}{l}{\emph{pairwise\,+\,Hadamard: $\phi_k^\ast$ $-$ branch-matched $\hat\phi_k$}}\\
Proof-Pile & 2K   & $+0.0034$ & $0.0153$ & $[-0.0223,+0.0292]$ & $.7335$ \\
Proof-Pile & 32K  & $-0.0028$ & $0.0036$ & $[-0.0089,+0.0033]$ & $.3095$ \\
Proof-Pile & 128K & $+0.0032$ & $0.0028$ & $[-0.0016,+0.0080]$ & $.1888$ \\
PG19       & 2K   & $-0.0214$ & $0.0043$ & $[-0.0286,-0.0141]$ & $.0132$ \\
PG19       & 32K  & $+0.0070$ & $0.0066$ & $[-0.0042,+0.0181]$ & $.2102$ \\
PG19       & 128K & $-0.0021$ & $0.0050$ & $[-0.0106,+0.0063]$ & $.5387$ \\
\bottomrule
\end{tabular}
\caption{Paired comparisons between $\phi_k^\ast$ and branch-matched $\hat\phi_k$,
$n=3$. Standard deviations use the sample convention and intervals are $90\%$ paired
$t$-intervals. Eleven of twelve intervals lie within the selected $\pm0.05$-PPL criterion; the PG19 32K pairwise-only comparison is inconclusive under that criterion. The twelve two-sided comparisons form one multiplicity family. Neither of the two smallest unadjusted $p$-values ($.0132$, $.0404$) passes the first Holm threshold $0.05/12=.0042$, so no adjusted comparison rejects its corresponding null hypothesis of zero mean difference.}
\label{tab:data-star-long-contrasts}
\end{table}

\subsection*{K. Local FPTQuant-style Q/K component (Llama-3.2-3B)}
\label{app:data-fptqk}
Table~\ref{tab:data-fptqk} reports a local FPTQuant-style \citep{fptquant2025} per-pair
$Q/K$ component implemented within the dynamic W4A4KV4 evaluation procedure used in this paper. All compared
components use $128$ optimisation steps, learning rate $0.02$, and $128$ calibration
sequences divided into $112$ optimisation and $16$ held-out sequences. $Q$ and $K$ use
the same fake-quantisation settings. The table reports the objective at the first and
final optimisation steps together with held-out final-token logit MSE.

\begin{table}[ht]\centering\small
\begin{tabular}{llrrrr}
\toprule
component & seed & PPL & first loss & last loss & held-out MSE \\
\midrule
rotation            & 0 & $11.0218$ & $1.3816$ & $0.7670$ & $1.9030$ \\
                    & 1 & $10.3707$ & $2.4586$ & $0.8684$ & $1.7253$ \\
                    & 2 & $10.6160$ & $1.7880$ & $1.4216$ & $1.6383$ \\
scaling             & 0 & $11.2190$ & $1.3816$ & $3.5531$ & $2.2056$ \\
                    & 1 & $10.4694$ & $2.4586$ & $1.1914$ & $2.5308$ \\
                    & 2 & $10.8642$ & $1.7880$ & $1.5660$ & $3.8017$ \\
rotation $+$ scaling & 0 & $10.9445$ & $1.3816$ & $1.7670$ & $2.0365$ \\
                    & 1 & $10.5346$ & $2.4586$ & $0.6165$ & $1.9659$ \\
                    & 2 & $10.8299$ & $1.7880$ & $1.2372$ & $2.5088$ \\
\bottomrule
\end{tabular}
\caption{Per-seed results for the local FPTQuant-style $Q/K$ component on WikiText-2
under dynamic W4A4KV4 with \textsc{whip} calibration. Comparisons use the full-head
Hadamard seeds from Table~\ref{tab:data-star-short}. The reported optimisation objective
is not monotone and does not track perplexity across seeds; no optimisation-budget
conclusion is drawn.}
\label{tab:data-fptqk}
\end{table}

\emph{Scope.} This component is an independent reimplementation of an
FPTQuant-\emph{style} local map and is not a reproduction of FPTQuant. It therefore does
not bear on the results reported for that method. A single-seed comparison at $32$,
$128$, and $512$ optimisation steps yields perplexities
$10.9755/11.0653/12.0509$. Repeating the $128$-step condition at the same seed changes
both the optimisation trace and perplexity by more than $0.12$ PPL, which exceeds the
observed $32\to128$ step difference. The deterministic portions of the procedure
reproduce exactly, localising this variability to the component optimisation. For this
reason, the manuscript does not interpret these values as an optimisation-budget trend.

\subsection*{L. Angle-estimator comparison}
\label{app:data-pooling}
This section reports the full angle-estimator analysis underlying the four-checkpoint
summary in the main text. The only varied quantity is the aggregation rule used to estimate
the shared (layer, frequency-pair) angle from calibration statistics, selected by
\texttt{--equalising\_r3\_pool}:
\begin{itemize}[leftmargin=1.4em,itemsep=1pt,topsep=2pt]
  \item \textbf{default $Q/K$-weighted estimator} (\texttt{rows}): raw $Q$ and $K$
        sufficient statistics are summed and centred once, weighting the streams by physical head count;
  \item \textbf{K-only estimator} (\texttt{k\_only}): the angle is estimated from $K$
        statistics alone;
  \item \textbf{stream-balanced estimator} (\texttt{balanced}): per-stream first and uncentred second moments are averaged so that $Q$ and $K$ receive equal stream-level weight, followed by one centring operation.
\end{itemize}
Every configuration applies one shared angle to $Q$ and $K$.
Proposition~\ref{prop:rope-commutativity-requires-shared-theta} therefore applies
within every head. The angle is also shared across heads, so the evaluated transformations occupy the head-shared subset of the head-dependent direct-product family. The implementation path for the default $Q/K$-weighted estimator is unchanged.

The comparisons in this section are exploratory. All reported intervals are unadjusted $90\%$ paired $t$-intervals, and no family-wise error adjustment is applied. We therefore report effect estimates and cross-checkpoint consistency without assigning confirmatory significance to individual estimator contrasts. Multiplicity-adjusted testing is restricted to the twelve two-sided contrasts in \S J.

\paragraph{Comparability with the reported short-context values.}
Six configurations from Table~\ref{tab:corrected-star-short} were evaluated again within
this source state. Across all six, the largest absolute difference from the previously
reported value is $3.3\times10^{-5}$ PPL, equal to the effect of four-decimal rounding.
The estimator comparisons are therefore made within a common source state and are
directly comparable with the reported values at the stated precision. Table~\ref{tab:data-pooling-short} reports the per-seed perplexities, and Table~\ref{tab:data-pooling-contrasts} reports the paired estimator contrasts.

\begin{table}[ht]\centering\small
\resizebox{\linewidth}{!}{%
\begin{tabular}{lrrrrrrcc}
\toprule
configuration & s0 & s1 & s2 & s3 & s4 & s5 & mean & sd \\
\midrule
\multicolumn{9}{l}{\emph{pairwise-only}}\\
$\hat\phi_k$, default $Q/K$-weighted   & $10.8570$ & $10.2054$ & $10.5943$ & $11.1317$ & $10.3851$ & $10.4001$ & $10.5956$ & $0.3437$ \\
$\hat\phi_k$, stream-balanced          & $10.8137$ & $10.2037$ & $10.5043$ & $11.0659$ & $10.3208$ & $10.3420$ & $10.5417$ & $0.3326$ \\
$\hat\phi_k$, K-only                   & $10.7260$ & $10.0706$ & $10.4145$ & $10.9185$ & $10.2224$ & $10.2218$ & $10.4290$ & $0.3293$ \\
$\phi_k^\ast$, default $Q/K$-weighted  & $10.9367$ & $10.2610$ & $10.6261$ & $11.1793$ & $10.4607$ & $10.4358$ & $10.6499$ & $0.3452$ \\
$\phi_k^\ast$, K-only                  & $10.8161$ & $10.1952$ & $10.5095$ & $11.0157$ & $10.3437$ & $10.3334$ & $10.5356$ & $0.3173$ \\
\midrule
\multicolumn{9}{l}{\emph{pairwise\,+\,Hadamard}}\\
$\hat\phi_k$, default $Q/K$-weighted   & $10.3724$ & $9.7916$ & $10.0952$ & $10.6043$ & $9.9101$ & $9.9309$ & $10.1174$ & $0.3119$ \\
$\hat\phi_k$, stream-balanced          & $10.3753$ & $9.8046$ & $10.0980$ & $10.6096$ & $9.9359$ & $9.9579$ & $10.1302$ & $0.3048$ \\
$\hat\phi_k$, K-only                   & $10.3811$ & $9.8188$ & $10.1310$ & $10.6337$ & $9.9387$ & $9.9646$ & $10.1447$ & $0.3086$ \\
$\phi_k^\ast$, default $Q/K$-weighted  & $10.3690$ & $9.8161$ & $10.1120$ & $10.6202$ & $9.9188$ & $9.9447$ & $10.1301$ & $0.3083$ \\
$\phi_k^\ast$, K-only                  & $10.3747$ & $9.7992$ & $10.0952$ & $10.5902$ & $9.9287$ & $9.9442$ & $10.1220$ & $0.3023$ \\
\bottomrule
\end{tabular}%
}
\caption{Per-seed WikiText-2 perplexity for the Llama-3.2-3B angle-estimator comparison
under dynamic W4A4KV4 with \textsc{whip} calibration. Full-head Hadamard and identity
are evaluated in the same source state and agree with the corresponding
Table~\ref{tab:data-star-short} values to the stated tolerance.}
\label{tab:data-pooling-short}
\end{table}

\begin{table}[ht]\centering\small
\resizebox{\linewidth}{!}{%
\begin{tabular}{llrrrl}
\toprule
configuration & comparator & $n$ & mean difference & sample sd & $90\%$ CI \\
\midrule
\multicolumn{6}{l}{\emph{pooling-rule comparisons, Llama-3.2-3B}}\\
pairwise-only, $\hat\phi_k$, K-only   & pairwise-only, $\hat\phi_k$, default   & 6 & $-0.1666$ & $0.0309$ & $[-0.1921,-0.1412]$ \\
pairwise-only, $\hat\phi_k$, stream-balanced  & pairwise-only, $\hat\phi_k$, default   & 6 & $-0.0539$ & $0.0297$ & $[-0.0783,-0.0294]$ \\
pairwise-only, $\phi_k^\ast$, K-only  & pairwise-only, $\phi_k^\ast$, default  & 6 & $-0.1143$ & $0.0316$ & $[-0.1403,-0.0884]$ \\
pairwise\,+\,Hadamard, $\hat\phi_k$, K-only    & pairwise\,+\,Hadamard, $\hat\phi_k$, default    & 6 & $+0.0272$ & $0.0097$ & $[+0.0193,+0.0352]$ \\
pairwise\,+\,Hadamard, $\hat\phi_k$, stream-balanced   & pairwise\,+\,Hadamard, $\hat\phi_k$, default    & 6 & $+0.0128$ & $0.0112$ & $[+0.0036,+0.0220]$ \\
pairwise\,+\,Hadamard, $\phi_k^\ast$, K-only   & pairwise\,+\,Hadamard, $\phi_k^\ast$, default   & 6 & $-0.0081$ & $0.0156$ & $[-0.0209,+0.0047]$ \\
\midrule
\multicolumn{6}{l}{\emph{comparisons with identity (no $R_3$)}}\\
pairwise-only, $\hat\phi_k$, default      & identity & 6 & $+0.0192$ & $0.0238$ & $[-0.0004,+0.0388]$ \\
pairwise-only, $\hat\phi_k$, K-only   & identity & 6 & $-0.1475$ & $0.0403$ & $[-0.1806,-0.1144]$ \\
pairwise-only, $\hat\phi_k$, stream-balanced  & identity & 6 & $-0.0347$ & $0.0260$ & $[-0.0561,-0.0133]$ \\
pairwise-only, $\phi_k^\ast$, default     & identity & 6 & $+0.0735$ & $0.0359$ & $[+0.0440,+0.1030]$ \\
pairwise-only, $\phi_k^\ast$, K-only  & identity & 6 & $-0.0408$ & $0.0454$ & $[-0.0781,-0.0035]$ \\
\bottomrule
\end{tabular}%
}
\caption{Exploratory paired comparisons within the Llama-3.2-3B angle-estimator analysis, using six shared seeds, sample standard deviations, and unadjusted $90\%$ paired $t$-intervals. The first block compares aggregation rules at fixed transformation structure; the second compares each pairwise-only configuration with identity. Under the default estimator, the interval for pairwise-only $\hat\phi_k$ minus identity includes zero. The K-only and stream-balanced estimates and their intervals are negative. These intervals are reported for effect estimation and are not multiplicity-adjusted.}
\label{tab:data-pooling-contrasts}
\end{table}

\paragraph{Cross-model estimator effects.}
Table~\ref{tab:pooling-breadth} of the body reports the four-checkpoint comparison with six paired seeds. For pairwise-only, all four K-only-minus-default estimates are negative, and their unadjusted $90\%$ intervals exclude zero: $-0.5286$ $[-0.5747,-0.4825]$ on Llama-3.2-1B, $-0.1666$ $[-0.1921,-0.1412]$ on Llama-3.2-3B, $-0.0661$ $[-0.0797,-0.0525]$ on Llama-3.1-8B, and $-0.0071$ $[-0.0091,-0.0051]$ on Mistral-7B-v0.3. For pairwise\,+\,Hadamard, the corresponding estimates are $+0.0559$ $[+0.0348,+0.0771]$, $+0.0272$ $[+0.0193,+0.0352]$, $+0.0129$ $[+0.0027,+0.0231]$, and $+0.0018$ $[-0.0013,+0.0050]$. The first three intervals lie above zero, whereas the Mistral-7B-v0.3 interval includes zero. The repeated direction across checkpoints supports estimator sensitivity within the evaluated configurations, but these exploratory comparisons do not constitute separately multiplicity-controlled advantage claims.

\paragraph{Software-environment separation.}
The estimator analysis, including its cross-model evaluations, uses the A100 environment
employed for the primary Llama-3.2-3B results. \S I uses the second hardware
environment. Absolute full-head Hadamard perplexity therefore differs between the two
result sets: $14.482$ versus $14.427$ for Llama-3.2-1B, $7.919$ versus $7.825$ for
Llama-3.1-8B, and $5.847$ versus $5.892$ for Mistral-7B-v0.3. Results from the two
environments are not pooled. Within the estimator-analysis environment, the default
$Q/K$-weighted pairwise\,+\,Hadamard differences independently reproduce the magnitudes
reported in \S I at $n=6$ and lead to the same conclusion under the selected $\pm0.05$-PPL interval criterion. The signs
agree on Llama-3.2-1B and Mistral-7B-v0.3 but not on Llama-3.1-8B ($-0.0002$ here
versus $+0.0063$ in \S I); both values lie far inside the $\pm0.05$-PPL margin, and at
this magnitude the sign is not resolvable across environments.

\subsection*{M. Mixing-support interpolation (Llama-3.2-3B)}
\label{app:data-mixing-support}

The mixing-support analysis varies the block size of a pair-interleaved block-Hadamard
transformation in the same Llama-3.2-3B, WikiText-2, dynamic W4A4KV4 setting as the
primary short-context comparison. The transformation orientation is normalised to the
$SO(2)$ convention. Block sizes $b\in\{2,4,8,16,32,64,128\}$ each use three paired
perplexity seeds and three K-diagnostic seeds. Channel-contiguous layout and
raw-orientation controls use one seed and are reported descriptively. Complete
machine-readable values are in \path{docs/data_db/clean/mixing_support.csv}.

Mean K range, relative K quantisation error, and the paired perplexity difference
relative to full-head Hadamard decrease monotonically as block size increases. The $b=2$
diagnostic agrees with the per-pair $H_2$ endpoint to the reported precision. At
$b=128$, relative K quantisation error matches the full-head endpoint at the reported
precision, mean K range differs by $0.0011$, and the paired perplexity interval includes
zero. Only $b=2$ commutes with the RoPE
frequency-pair blocks; larger blocks deliberately leave the centraliser to test the
effect of broader mixing support. The interpolation therefore evaluates the support
hypothesis rather than proposing an additional member of the commuting family.

\begin{table}[ht]
\centering\scriptsize
\resizebox{\linewidth}{!}{%
\begin{tabular}{llccrrrrrr}
\toprule
configuration & channel layout / orientation & RoPE-commuting & $n$ & PPL mean (sd) & PPL difference vs full-head Hadamard $[90\%\ \mathrm{CI}]$ & mean K range & max K range & $\Delta$ & relative K quantisation error \\
\midrule
$b=2$   & pair/$SO(2)$ & yes & 3 & $10.4981\ (.3234)$ & $+.4158\ [+.3720,+.4596]$ & $14.815614$ & $36.458333$ & $.987708$ & $.02305523$ \\
$b=4$   & pair/$SO(2)$ & no  & 3 & $10.3085\ (.3087)$ & $+.2262\ [+.2069,+.2456]$ & $13.118872$ & $38.479167$ & $.874591$ & $.01805045$ \\
$b=8$   & pair/$SO(2)$ & no  & 3 & $10.2195\ (.3026)$ & $+.1373\ [+.1242,+.1503]$ & $11.576764$ & $35.875000$ & $.771784$ & $.01385570$ \\
$b=16$  & pair/$SO(2)$ & no  & 3 & $10.1573\ (.2877)$ & $+.0751\ [+.0505,+.0996]$ & $10.733735$ & $31.791667$ & $.715582$ & $.01183181$ \\
$b=32$  & pair/$SO(2)$ & no  & 3 & $10.1367\ (.2833)$ & $+.0544\ [+.0266,+.0822]$ & $10.447527$ & $33.645833$ & $.696502$ & $.01122667$ \\
$b=64$  & pair/$SO(2)$ & no  & 3 & $10.0976\ (.2913)$ & $+.0153\ [-.0286,+.0592]$ & $10.135678$ & $30.375000$ & $.675712$ & $.01053918$ \\
$b=128$ & pair/$SO(2)$ & no  & 3 & $10.0861\ (.2847)$ & $+.0039\ [-.0236,+.0313]$ & $9.643106$ & $27.062500$ & $.642874$ & $.00953660$ \\
\midrule
$b=2$, raw orientation & pair/raw & yes & 1 & $10.8119$ & $+.4429$ & $14.748907$ & $34.125000$ & $.983260$ & $.02307303$ \\
$b=8$, channel-contiguous layout control & channel/$SO(2)$ & no & 1 & $10.4793$ & $+.1103$ & $11.125169$ & $33.000000$ & $.741678$ & $.01284636$ \\
$b=32$, channel-contiguous layout control & channel/$SO(2)$ & no & 1 & $10.3926$ & $+.0236$ & $10.207206$ & $28.875000$ & $.680480$ & $.01080546$ \\
\midrule
full-head Hadamard & full head & no & 3 & $10.0823\ (.2976)$ & $0$ & $9.641980$ & $27.145833$ & $.642799$ & $.00953662$ \\
identity & \textit{n/a} & no & 3 & $10.5395\ (.3072)$ & $+.4573\ [+.4368,+.4777]$ & $15.851159$ & $36.145833$ & $1.056744$ & $.02634069$ \\
per-pair $H_2$ & RoPE pair & yes & 3 & $10.5176\ (.3113)$ & $+.4354\ [+.4058,+.4649]$ & $14.815659$ & $36.375000$ & $.987711$ & $.02305625$ \\
\bottomrule
\end{tabular}%
}
\caption{Complete mixing-support comparison. The primary interpolation uses three paired
seeds at each block size; the orientation and channel-layout controls use one seed and
therefore have no confidence interval. K diagnostics are measured after $R_3$ with the
same procedure for every configuration.}
\label{tab:data-mixing-support}
\end{table}

\section{Proofs}
\label{app:proofs}
Throughout this appendix $G(\phi) := \bigl(\begin{smallmatrix}\cos\phi & -\sin\phi\\ \sin\phi & \cos\phi\end{smallmatrix}\bigr)\in SO(2)$ denotes the Givens rotation in the orientation fixed by Definition~\ref{def:band-aligned} (so that $\bH_2 = G(\pi/4)\,\mathrm{diag}(1,-1)$, as used in the body), $\Rcal_m^{(k)} = G(m\theta_k)$ is the band-$k$ RoPE block, and $\Rcal_m = \bigoplus_{k=1}^{K}\Rcal_m^{(k)}$ with $K=\dh/2$. We write the band-$k$ pre-RoPE block as $\bSigma^{(k)} = \bigl(\begin{smallmatrix}\sigma_1^2 & \rho\sigma_1\sigma_2\\ \rho\sigma_1\sigma_2 & \sigma_2^2\end{smallmatrix}\bigr)$ and reserve $\widetilde\bSigma^{(k)}$ for the pooled-covariance, position-averaged surrogate of Eq.~\eqref{eq:Sigmatilde}. In the empirical application, $\bSigma^{(k)}$ is reconstructed after raw calibration moments have been aggregated across observation rows. The analysis treats this fixed matrix as the pre-RoPE covariance at every position; it does not estimate position-conditional matrices $\bSigma_m^{(k)}$. The algebra below is exact for this fixed-matrix surrogate.

\begin{proof}[Proof of Lemma~\ref{lem:centraliser-commutes}]
We prove both inclusions.

\emph{(a) Band-aligned $\Rightarrow$ commutes.}
Let $R_3 = \bigoplus_{k=1}^{K} G(\phi_k)\in\Bcal_{\dh}$. Both $R_3$ and $\Rcal_m=\bigoplus_k\Rcal_m^{(k)}$ are block-diagonal under the same partition of $\{1,\dots,\dh\}$ into the $K$ RoPE coordinate pairs $(2k{-}1,2k)$. The product of two block-diagonal matrices sharing a block partition is the block-diagonal matrix of the blockwise products, so it suffices to commute within each $2\times2$ block. There both factors lie in $SO(2)$: $G(\phi_k)$ by definition and $\Rcal_m^{(k)}=G(m\theta_k)$. Since $SO(2)$ is abelian with $G(\alpha)G(\beta)=G(\alpha+\beta)=G(\beta)G(\alpha)$,
\[
G(\phi_k)\,\Rcal_m^{(k)} = G(\phi_k+m\theta_k) = \Rcal_m^{(k)}\,G(\phi_k)\qquad(k=1,\dots,K),
\]
and assembling the blocks gives $R_3\Rcal_m=\Rcal_m R_3$ for every $m\in\Z$. Hence $\Bcal_{\dh}$ lies in the centraliser of $\{\Rcal_m\}_{m\in\Z}$ inside $O(\dh)$.

\emph{(b) Commutes with all $m$ $\Rightarrow$ band-diagonal, under distinct frequencies.}
Let $A\in O(\dh)$ satisfy $A\Rcal_m=\Rcal_m A$ for all $m\in\Z$. It suffices to use $m=1$. Work over the complexification $\C^{\dh}=\R^{\dh}\otimes\C$. The block $\Rcal_1^{(k)}=G(\theta_k)$ has characteristic polynomial $\lambda^2-2\cos\theta_k\,\lambda+1$, hence eigenvalues $e^{\pm i\theta_k}$ with conjugate eigenvectors $v_k^{\pm}=e_{2k-1}\mp i\,e_{2k}$ spanning the complexification of the band-$k$ real $2$-plane $P_k:=\mathrm{span}_\R\{e_{2k-1},e_{2k}\}$. Under the lemma's assumptions, $\theta_k\in(0,\pi)$ and the $\theta_k$ are pairwise distinct, so the $2K=\dh$ numbers $\{e^{\pm i\theta_k}\}_{k=1}^K$ are all distinct: $e^{i\theta_k}=e^{i\theta_{k'}}$ forces $\theta_k=\theta_{k'}$, while $e^{i\theta_k}=e^{-i\theta_{k'}}$ forces $\theta_k+\theta_{k'}\in2\pi\Z$, impossible on $(0,\pi)$. Thus $\Rcal_1$ has $\dh$ distinct eigenvalues and each eigenspace is one-dimensional. Any $A$ commuting with $\Rcal_1$ preserves each eigenspace; being real, $A$ maps the conjugate pair $\{v_k^+,v_k^-\}$ to itself (a real matrix sends $\overline{v}$ to $\overline{Av}$), hence preserves its real span $P_k$. Therefore $A$ leaves each band $2$-plane $P_k$ invariant and is block-diagonal under the band partition, $A=\bigoplus_k A_k$ with $A_k\in O(2)$ (orthogonality of $A$ restricts to the invariant $2$-plane).

It remains to place each $A_k$ inside $SO(2)$. Restricted to $P_k$ the commutation reads $A_k G(\theta_k)=G(\theta_k)A_k$ with $\theta_k\notin\pi\Z$, so $G(\theta_k)$ is a non-trivial rotation. If $\det A_k=+1$ then $A_k=G(\psi_k)\in SO(2)$, which commutes with $G(\theta_k)$ automatically. If $\det A_k=-1$ then $A_k$ is a reflection, and a reflection conjugates any rotation to its inverse: $A_k G(\theta)A_k^{-1}=G(-\theta)$, equivalently $A_k G(\theta)=G(-\theta)A_k$. Combined with the commutation hypothesis this yields $G(\theta_k)A_k=G(-\theta_k)A_k$, so $G(2\theta_k)=I$ and $\theta_k\in\pi\Z$, contradicting $\theta_k\in(0,\pi)$. Hence the reflection case cannot occur and $A_k\in SO(2)$ for every $k$, giving $A=\bigoplus_k G(\psi_k)\in\Bcal_{\dh}$. This proves the reverse inclusion, so the centraliser equals $\Bcal_{\dh}$.
\end{proof}

\paragraph{Multi-head scope.}
Lemma~\ref{lem:centraliser-commutes} is a single-head statement. If cross-head mixing is excluded, its $H$-head direct-product extension is
\[
\left\{\bigoplus_{h=1}^{H}\bigoplus_{k=1}^{K}G(\phi_{h,k})\right\},
\]
which permits head-dependent angles. The implementation evaluates the strict subset satisfying $\phi_{h,k}\equiv\phi_k$ within each layer. If mixing among the $H$ repeated copies of a frequency were permitted, the corresponding real $2H$-dimensional centraliser would contain a $U(H)$ factor. Thus the lemma identifies the commuting factor for one head, whereas the empirical conclusions apply only to the implemented head-shared subset.

\begin{proof}[Proof of Theorem~\ref{thm:closed-form-optimal-angle}]
We verify the entries of $\widetilde\bSigma^{(k)}$, reduce the problem to a $2\times2$ Jacobi rotation, solve for the equalising angle, identify the minimax branch, and establish the analytic minimum.

\emph{(0) Entries of $\widetilde\bSigma^{(k)}$.}
Fix band $k$, drop the superscript, and write $c=\cos m\theta_k$, $s=\sin m\theta_k$, so $\Rcal_m^{(k)}=\bigl(\begin{smallmatrix}c&-s\\ s&c\end{smallmatrix}\bigr)$. Direct multiplication of $\Rcal_m^{(k)}\bSigma^{(k)}(\Rcal_m^{(k)})^\top$ gives
\begin{align*}
[\,\cdot\,]_{11} &= c^2\sigma_1^2 - 2cs\,\rho\sigma_1\sigma_2 + s^2\sigma_2^2,\\
[\,\cdot\,]_{22} &= s^2\sigma_1^2 + 2cs\,\rho\sigma_1\sigma_2 + c^2\sigma_2^2,\\
[\,\cdot\,]_{12} &= cs(\sigma_1^2-\sigma_2^2) + (c^2-s^2)\rho\sigma_1\sigma_2.
\end{align*}
With $c^2=\tfrac12(1+\cos2m\theta_k)$, $s^2=\tfrac12(1-\cos2m\theta_k)$, $2cs=\sin2m\theta_k$, $c^2-s^2=\cos2m\theta_k$, these become
\begin{align*}
[\,\cdot\,]_{11} &= \tfrac{\sigma_1^2+\sigma_2^2}{2} + \tfrac{\sigma_1^2-\sigma_2^2}{2}\cos2m\theta_k - \rho\sigma_1\sigma_2\sin2m\theta_k,\\
[\,\cdot\,]_{22} &= \tfrac{\sigma_1^2+\sigma_2^2}{2} - \tfrac{\sigma_1^2-\sigma_2^2}{2}\cos2m\theta_k + \rho\sigma_1\sigma_2\sin2m\theta_k,\\
[\,\cdot\,]_{12} &= \tfrac{\sigma_1^2-\sigma_2^2}{2}\sin2m\theta_k + \rho\sigma_1\sigma_2\cos2m\theta_k.
\end{align*}
For the fixed-matrix surrogate, averaging over $m=0,\dots,L-1$ passes the constants through unchanged and replaces $\cos2m\theta_k\mapsto C_k$, $\sin2m\theta_k\mapsto S_k$ (definitions \eqref{eq:CkSk}), reproducing the displayed entries of $\widetilde\bSigma^{(k)}$.

\emph{(i) Reduction to a single Jacobi problem.}
By Lemma~\ref{lem:centraliser-commutes}, $R_3=\bigoplus_k G(\phi_k)\in\Bcal_{\dh}$ commutes with each $\Rcal_m$, and so does $R_3^\top$. Hence
\[
\frac1L\sum_{m=0}^{L-1} \Rcal_m R_3\,\bSigma\,R_3^\top\Rcal_m^\top
= R_3\Bigl(\frac1L\sum_{m=0}^{L-1} \Rcal_m\bSigma\Rcal_m^\top\Bigr)R_3^\top
\;= R_3\,\widetilde\bSigma\,R_3^\top,
\]
the first equality moving $R_3,R_3^\top$ out through the average by commutativity, the second by the definition of $\widetilde\bSigma$. We do not assume that the full covariance $\bSigma$ is band-block-diagonal: cross-band covariance blocks may be nonzero. However, because every $\Rcal_m$ and $R_3$ is block-diagonal under the band partition, the $k$th principal $2\times2$ block of $R_3\widetilde\bSigma R_3^\top$ is exactly $G(\phi_k)\,\widetilde\bSigma^{(k)}\,G(\phi_k)^\top$. Thus the objective, which uses only these channel variances, decouples band by band. This reduction applies to the fixed pooled-covariance surrogate; it does not replace $\bSigma$ with position-conditional matrices.

\emph{(ii) Equalising condition and closed form.}
Write $\widetilde\bSigma^{(k)}=\bigl(\begin{smallmatrix}\lambda_1&\beta\\ \beta&\lambda_2\end{smallmatrix}\bigr)$ with $\lambda_1=\widetilde\bSigma^{(k)}_{11}$, $\lambda_2=\widetilde\bSigma^{(k)}_{22}$, $\beta=\widetilde\bSigma^{(k)}_{12}$. With $G(\phi)$ in the orientation above, the post-rotation diagonal entries are
\begin{align*}
d_+(\phi) &= \lambda_1\cos^2\phi + \lambda_2\sin^2\phi - 2\beta\cos\phi\sin\phi,\\
d_-(\phi) &= \lambda_1\sin^2\phi + \lambda_2\cos^2\phi + 2\beta\cos\phi\sin\phi,
\end{align*}
and, via $\cos^2\phi-\sin^2\phi=\cos2\phi$, $2\sin\phi\cos\phi=\sin2\phi$,
\begin{equation}
\label{app:eq-diff}
f(\phi) := d_+(\phi)-d_-(\phi) = (\lambda_1-\lambda_2)\cos2\phi - 2\beta\sin2\phi.
\end{equation}
Equalisation $d_+=d_-$ is therefore
\begin{equation}
\label{app:eq-cond}
(\lambda_1-\lambda_2)\cos2\phi - 2\beta\sin2\phi = 0
\quad\Longleftrightarrow\quad
\tan2\phi = \frac{\lambda_1-\lambda_2}{2\beta}.
\end{equation}
The two-argument inverse tangent solves \eqref{app:eq-cond} over a full half-period and resolves the $\beta=0$ degeneracy by quadrant:
\begin{equation}
\label{app:phistar}
\phi_k^{\ast} = \tfrac12\,\mathrm{atan2}\!\bigl(\lambda_1-\lambda_2,\ 2\beta\bigr)
= \tfrac12\,\mathrm{atan2}\!\bigl(\widetilde\bSigma^{(k)}_{11}-\widetilde\bSigma^{(k)}_{22},\ 2\widetilde\bSigma^{(k)}_{12}\bigr).
\end{equation}
Indeed $\mathrm{atan2}(y,x)$ returns the angle $2\phi$ with $\cos2\phi\propto x=2\beta$ and $\sin2\phi\propto y=\lambda_1-\lambda_2$ (a common positive factor), so $(\lambda_1-\lambda_2)\cos2\phi-2\beta\sin2\phi\propto(\lambda_1-\lambda_2)(2\beta)-2\beta(\lambda_1-\lambda_2)=0$, verifying \eqref{app:eq-cond}. After reduction modulo $\pi/2$, this is the body's Eq.~\eqref{eq:phistar} under the standard Givens orientation of Definition~\ref{def:band-aligned}. The reference implementation computes the same Jacobi formula on the pooled calibration covariance before applying the finite-phase correction, $\hat\phi_k=\tfrac12\,\mathrm{atan2}(\sigma_1^2-\sigma_2^2,\,2\rho\sigma_1\sigma_2)$, i.e.\ \eqref{app:phistar} evaluated at $(C_k,S_k)=(1,0)$. The two are related exactly, $\phi_k^\ast = \hat\phi_k - \tfrac12\,\mathrm{atan2}(S_k,C_k)\pmod{\pi/2}$ (coinciding on bands with $S_k=0$), and the excess of $\hat\phi_k$ over the minimum $\bar\lambda_k$ in the position-averaged minimax metric of the statement is $\tfrac12|S_k|\sqrt{(\sigma_1^2-\sigma_2^2)^2+(2\rho\sigma_1\sigma_2)^2}$, vanishing on isotropic bands and potentially largest in the low-frequency tail. Its numerical value must be computed from the checkpoint's deployed inverse frequencies, including any RoPE scaling. When $(\lambda_1-\lambda_2,\beta)=(0,0)$ the band is already isotropic, every $\phi$ equalises, and the convention $\mathrm{atan2}(0,0)=0$ returns the identity.

\emph{(iii) Minimax vs.\ maximin branch.}
Since orthogonal conjugation preserves the trace, $d_+(\phi)+d_-(\phi)=\lambda_1+\lambda_2$ is constant in $\phi$, so
\[
\max\bigl(d_+(\phi),d_-(\phi)\bigr) = \tfrac{\lambda_1+\lambda_2}{2} + \tfrac12\,|f(\phi)|,
\]
with $f$ the amplitude-$r$ sinusoid \eqref{app:eq-diff}, $r=\sqrt{(\lambda_1-\lambda_2)^2+4\beta^2}>0$ in the non-degenerate case $(\lambda_1-\lambda_2,\beta)\neq(0,0)$. Thus minimising the maximum band variance is equivalent to minimising $|f|$. Its zeros occur at $2\phi=\mathrm{atan2}(\lambda_1-\lambda_2,2\beta)+n\pi$, i.e.\ at $\phi=\phi_k^{\ast}+n\pi/2$. The two equalizing representatives differ by $\pi/2$ in $\phi$ and merely swap the two channel labels. Reducing modulo $\pi/2$ into $[-\pi/4,\pi/4)$ selects one representative. By contrast the branch with $2\phi=\mathrm{atan2}(2\beta,\lambda_2-\lambda_1)+n\pi$ maximises $|f|$ and gives $\max(d_+,d_-)=\tfrac12(\lambda_1+\lambda_2)+\tfrac12r$.

\emph{(iv) Minimum maximal variance and $O(2)$ optimality.}
Summing the diagonal entries of (0), $\mathrm{tr}\,\widetilde\bSigma^{(k)}=\lambda_1+\lambda_2=\sigma_1^2+\sigma_2^2$ (the $C_k,S_k$ terms cancel between $\widetilde\bSigma^{(k)}_{11}$ and $\widetilde\bSigma^{(k)}_{22}$). Hence for any $\phi$,
\[
\max\bigl(d_+(\phi),d_-(\phi)\bigr) \ge \frac{d_+(\phi)+d_-(\phi)}{2} = \frac{\sigma_1^2+\sigma_2^2}{2} = \bar\lambda_k,
\]
with equality iff $d_+(\phi)=d_-(\phi)$; the minimax $\phi_k^{\ast}$ attains it. This used only that the transform is a $2\times2$ rotation, but the bound extends to all of $O(2)$ by trace invariance alone: for any $U\in O(2)$ the conjugation $U\widetilde\bSigma^{(k)}U^\top$ preserves the trace, $\mathrm{tr}(U\widetilde\bSigma^{(k)}U^\top)=\mathrm{tr}\,\widetilde\bSigma^{(k)}=\lambda_1+\lambda_2$, so the larger of its two diagonal entries is at least $\tfrac12\mathrm{tr}(U\widetilde\bSigma^{(k)}U^\top)=\bar\lambda_k$, a bound attained inside $SO(2)$ by $G(\phi_k^{\ast})$. Therefore $\bar\lambda_k$ is the minimum of $\max(\Var(\tilde y_{2k-1}),\Var(\tilde y_{2k}))$ over all of $O(2)$, and the $SO(2)$ optimum equals the $O(2)$ optimum.
\end{proof}

\begin{proposition}[Calibration-only and deterministic]
\label{prop:calibration-only}
The $d_h/2$ angles $\{\phi_k^\ast\}$ are computable in $O(Nd_h)$ time and $O(d_h)$ memory from one calibration pass over $N$ tokens. For fixed inputs, accumulation order, and numerical implementation, no optimisation randomness is involved.
\end{proposition}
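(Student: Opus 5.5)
The proof is a direct accounting of the pipeline that produces $\phi_k^\ast$. It has three stages: a single streaming pass that accumulates sufficient statistics, a constant-time per-band reduction to $\widetilde{\boldsymbol{\Sigma}}^{(k)}$, and the closed form of Theorem~\ref{thm:closed-form-optimal-angle}. Each stage is bounded separately and the costs are added.

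\textbf{Accumulation.} For each band $k$ the collector maintains six scalars $n,\sum a,\sum b,\sum a^2,\sum b^2,\sum ab$. Each observation row contributes the two channels $(a,b)$ of band $k$, so one row touches all $\dh/2$ bands at $O(1)$ work per band, i.e.\ $O(\dh)$ per row. Over the $N$ rows this gives $O(N\dh)$ time, with $6\cdot\dh/2=O(\dh)$ stored scalars. Under the default estimator the Q and K totals are summed elementwise, and the balanced and K-only rules only reweight or select these totals, so the asymptotics are unchanged. The rows need not be retained; only the accumulators are kept, which is what gives the $O(\dh)$ memory bound.

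\textbf{Per-band reduction.} The covariance $\boldsymbol{\Sigma}^{(k)}$ is reconstructed in $O(1)$ as $\frac1n\sum xx^\top-\bar x\bar x^\top$. Next, $C_k$ and $S_k$ from Eq.~\eqref{eq:CkSk} are computed, which is the main obstacle. Summing $L$ terms naively costs $O(L)$ per band and could exceed $O(N\dh)$ in general. I would use the geometric-series closed form
\[
\frac1L\sum_{m=0}^{L-1}e^{2im\theta_k}=\frac{1}{L}\,\frac{1-e^{2iL\theta_k}}{1-e^{2i\theta_k}},
\]
which is valid because $\theta_k\in(0,\pi)$ makes the denominator nonzero. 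This gives $C_k$ and $S_k$ in $O(1)$. As a fallback, the calibration pass visits $L$ positions per sequence, so $L\le N$ and the naive $O(L\dh)$ sum is still within budget. Then $\widetilde{\boldsymbol{\Sigma}}^{(k)}$ follows from the entry formulas in step~(0) of the proof of Theorem~\ref{thm:closed-form-optimal-angle}, and $\phi_k^\ast$ from Eq.~\eqref{eq:phistar} with one $\operatorname{atan2}$ and a wrap. Each band costs $O(1)$, so all bands cost $O(\dh)$ time and memory.

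\textbf{Determinism.} Every operation is a fixed arithmetic map: sums, products, the reconstruction formula, $\operatorname{atan2}$, and the reduction modulo $\pi/2$. There is no sampling, initialisation or iterative optimiser, in contrast with the grid-searched objectives. Once the inputs, the floating-point accumulation order and the library implementations are fixed, the output is therefore a deterministic function of the calibration data. The degenerate case $\operatorname{atan2}(0,0)=0$ is resolved by the stated convention.
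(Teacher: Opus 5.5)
Your argument is correct and is essentially the paper's proof. It uses one streaming pass to accumulate the per-band sums in $O(Nd_h)$ time and $O(d_h)$ memory, then an $O(1)$ per-band reconstruction followed by a single $\operatorname{atan2}$, and it gets determinism from the absence of any iterative optimiser or sampling. Your geometric-series evaluation of $(C_k,S_k)$ makes explicit what the paper simply calls data-independent precomputation, and it makes your $L\le N$ fallback unnecessary — which matters, because $L$ is the deployment averaging length (set to the scored context length, up to $131072$), not the calibration sequence length, so that fallback's justification does not hold in general.
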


\begin{proof}[Proof of Proposition~\ref{prop:calibration-only}]
For each band $k$ the data-dependent quantities are the entries of the pre-RoPE covariance $\bSigma^{(k)}$. Five streaming sums,
$\sum a_{2k-1}$, $\sum a_{2k}$, $\sum a_{2k-1}^2$, $\sum a_{2k}^2$, and
$\sum a_{2k-1}a_{2k}$, recover the two variances and covariance after centering.
This requires $O(1)$ work per token and band, hence $O(N\dh)$ time and
$O(\dh)$ auxiliary memory overall. The constants $C_k,S_k$ depend only on the
model frequencies and calibration length and can be precomputed. Constructing
$\widetilde\bSigma^{(k)}$ and evaluating one $\mathrm{atan2}$ per band then costs
$O(\dh)$. There is no iterative optimiser, learning-rate schedule, momentum,
or per-iteration sampling. For fixed calibration tensors, reduction order, and
numerical implementation, the computation is deterministic; this statement does
not assert bitwise identity across hardware or numerical libraries.
\end{proof}

\begin{proposition}[Hadamard comparison under the surrogate band metric]
\label{prop:hadamard-suboptimality-in-band}
For the pooled-covariance, position-averaged per-band variance metric, $G(\phi_k^\ast)$ weakly dominates the $2\times2$ Hadamard proxy and strictly dominates it whenever $\widetilde\bSigma_{12}^{(k)}\ne0$.
\end{proposition}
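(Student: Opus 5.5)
The plan is to evaluate the surrogate band metric $M(U):=\max_j[U\Sigmatk U^\top]_{jj}$ at $U=\bH_2$ and compare it with the minimum already found in Theorem~\ref{thm:closed-form-optimal-angle}. Write $\Sigmatk=\bigl(\begin{smallmatrix}\lambda_1&\beta\\ \beta&\lambda_2\end{smallmatrix}\bigr)$ as in the proof of the theorem.

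The proof of the theorem shows that for every $U\in O(2)$,
\[
M(U)=\bar\lambda_k+\tfrac12\bigl|[U\Sigmatk U^\top]_{11}-[U\Sigmatk U^\top]_{22}\bigr|,
\]
because conjugation preserves the trace. It also shows that $M(G(\phistar))=\bar\lambda_k$. Since $\bH_2\in O(2)$, weak dominance $M(G(\phistar))\le M(\bH_2)$ follows at once from part (iv) of the theorem.

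For strict dominance, I will use $\bH_2=G(\pi/4)\,\mathrm{diag}(1,-1)$. The reflection $\mathrm{diag}(1,-1)$ leaves the diagonal of $\Sigmatk$ unchanged and flips the sign of $\beta$. Applying Eq.~\eqref{app:eq-diff} at $\phi=\pi/4$ to the matrix with off-diagonal entry $-\beta$ then gives the diagonal difference
\[
f=(\lambda_1-\lambda_2)\cos\tfrac{\pi}{2}+2\beta\sin\tfrac{\pi}{2}=2\beta .
\]
A direct multiplication would confirm this, up to sign, which is all that matters because only $|f|$ enters. Hence $M(\bH_2)=\bar\lambda_k+|\beta|$, where $\beta=\Sigmatk_{12}$. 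This exceeds $\bar\lambda_k$ exactly when $\Sigmatk_{12}\neq0$, which is the strict claim. As a consistency check, when $\beta=0$ the Hadamard proxy is also a minimiser, since it swaps the channels and leaves the diagonal intact.

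The main obstacle is bookkeeping around the orientation and reflection conventions. The difference at $\bH_2$ must come out as $\pm2\beta$ and not as $\lambda_1-\lambda_2$. That alternative would arise if the Hadamard proxy were read as $G(\pi/4)$ with a different sign or axis convention. The sign itself is harmless, so I would verify the entries once by explicit $2\times2$ multiplication. This plan also matches the appendix's reported ordering of minimax excesses, with $\bH_2$ above $\phistar$. Finally, I would note that the whole comparison concerns only the fixed pooled-covariance surrogate $\Sigmatk$.
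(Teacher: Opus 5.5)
Your proposal is correct and is essentially the paper's proof: both use the trace identity $M(U)=\bar\lambda_k+\tfrac12|f_U|$ from the theorem and show $f_{\bH_2}=2\Sigmatk_{12}$, the paper by direct multiplication and you via $\bH_2=G(\pi/4)\,\mathrm{diag}(1,-1)$. One correction to your consistency check: when $\Sigmatk_{12}=0$, $\bH_2$ is a minimiser because it sends both diagonal entries to $\tfrac12(\lambda_1+\lambda_2)=\bar\lambda_k$, not because it swaps the channels, since a pure swap would leave the larger variance at $\max(\lambda_1,\lambda_2)$.
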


\begin{proof}[Proof of Proposition~\ref{prop:hadamard-suboptimality-in-band}]
For the per-channel minimax metric, $\bH_2$ and $G(\pi/4)$ yield the same unordered pair of output variances, although the channel order is reversed. Direct multiplication with $\bH_2=\tfrac1{\sqrt2}\bigl(\begin{smallmatrix}1&1\\1&-1\end{smallmatrix}\bigr)$, $\lambda_1=\widetilde\bSigma^{(k)}_{11}$, $\lambda_2=\widetilde\bSigma^{(k)}_{22}$, and $\beta=\widetilde\bSigma^{(k)}_{12}=\rhoeff\sqrt{\lambda_1\lambda_2}$ gives
\[
\bH_2\,\widetilde\bSigma^{(k)}\,\bH_2^\top
= \frac12\begin{pmatrix} \lambda_1+\lambda_2+2\beta & \lambda_1-\lambda_2 \\ \lambda_1-\lambda_2 & \lambda_1+\lambda_2-2\beta \end{pmatrix},
\]
so the two output band-channel variances are
\[
\lambda_1'(\pi/4) = \frac{\lambda_1+\lambda_2}{2}+\rhoeff\sqrt{\lambda_1\lambda_2},\qquad
\lambda_2'(\pi/4) = \frac{\lambda_1+\lambda_2}{2}-\rhoeff\sqrt{\lambda_1\lambda_2},
\]
the displayed formulas, with difference
\[
\lambda_1'(\pi/4)-\lambda_2'(\pi/4) = 2\beta = 2\rhoeff\sqrt{\lambda_1\lambda_2}.
\]
As $\lambda_1,\lambda_2>0$, this vanishes iff $\rhoeff=0$; whenever $\rhoeff\neq0$, $\bH_2$ fails to equalise and yields $\max(\lambda_1',\lambda_2')=\tfrac12(\lambda_1+\lambda_2)+|\rhoeff|\sqrt{\lambda_1\lambda_2}>\bar\lambda_k$.

For strict domination, recall from the proof of Theorem~\ref{thm:closed-form-optimal-angle}(iii)--(iv) that any $2\times2$ orthogonal $U$ scores $\max(d_+(U),d_-(U))=\bar\lambda_k+\tfrac12|f_U|$ in the minimax band metric, where $f_U=d_+(U)-d_-(U)$ and $d_++d_-=\lambda_1+\lambda_2$ is fixed. The closed-form $G(\phi_k^{\ast})$ achieves $f=0$ and therefore reaches $\bar\lambda_k$, whereas $f_{\bH_2}=2\rhoeff\sqrt{\lambda_1\lambda_2}=2\widetilde\bSigma^{(k)}_{12}$. Thus $G(\phi_k^{\ast})$ strictly dominates $\bH_2$ exactly when $\widetilde\bSigma^{(k)}_{12}\neq0$. If $\widetilde\bSigma^{(k)}_{12}=0$ and $\widetilde\bSigma^{(k)}_{11}\neq\widetilde\bSigma^{(k)}_{22}$, Eq.~\eqref{app:phistar} yields $\phi_k^{\ast}=\pm\pi/4$, so the two transforms attain the same diagonal variances. Consequently, $G(\phi_k^{\ast})$ weakly dominates $\bH_2$ on every band under the surrogate metric and strictly dominates it whenever $\widetilde\bSigma^{(k)}_{12}\neq0$.
\end{proof}

\begin{proof}[Proof of Proposition~\ref{prop:rope-commutativity-requires-shared-theta}]
Both $R_3^Q,R_3^K\in\Bcal_{\dh}$, so by Lemma~\ref{lem:centraliser-commutes} each commutes with every $\Rcal_m$. Using $\Rcal_m^\top=\Rcal_{-m}$ and $M^{QK}:=(R_3^Q)^\top R_3^K$,
\[
(R_3^Q\Rcal_m\bq)^\top(R_3^K\Rcal_n\bk)
= \bq^\top\Rcal_m^\top (R_3^Q)^\top R_3^K \Rcal_n\bk
= \bq^\top\Rcal_m^\top\, M^{QK}\,\Rcal_n\bk.
\]
Now $M^{QK}=\bigoplus_k G(\phi_k^Q)^\top G(\phi_k^K)=\bigoplus_k G(\phi_k^K-\phi_k^Q)$, using $G(\alpha)^\top=G(-\alpha)$ and $SO(2)$ additivity. Since $M^{QK}$, $\Rcal_m^\top$, $\Rcal_n$ are all band-diagonal under the same partition, the bilinear form splits over bands:
\[
\bq^\top\Rcal_m^\top M^{QK}\Rcal_n\bk
= \sum_{k=1}^{K}\bq^{(k)\top}\,(\Rcal_m^{(k)})^\top\,G(\phi_k^K-\phi_k^Q)\,\Rcal_n^{(k)}\,\bk^{(k)},
\]
with $\bq^{(k)},\bk^{(k)}\in\R^2$ the band-$k$ components. Within band $k$ every factor lies in the abelian group $SO(2)$, so the rotations compose by adding angles:
\[
(\Rcal_m^{(k)})^\top\,G(\phi_k^K-\phi_k^Q)\,\Rcal_n^{(k)}
= G(-m\theta_k)\,G(\phi_k^K-\phi_k^Q)\,G(n\theta_k)
= G((n-m)\theta_k)\,G(\phi_k^K-\phi_k^Q)
= \Rcal_{n-m}^{(k)}\,G(\phi_k^K-\phi_k^Q).
\]
Substituting back yields the displayed identity
\[
(R_3^Q\Rcal_m\bq)^\top(R_3^K\Rcal_n\bk)
= \sum_{k=1}^{K}\bq^{(k)\top}\,\Rcal_{n-m}^{(k)}\,G(\phi_k^K-\phi_k^Q)\,\bk^{(k)}.
\]
If $\phi_k^Q=\phi_k^K\pmod{2\pi}$ for all $k$, then $G(\phi_k^K-\phi_k^Q)=G(0)=I$ on every band and the sum collapses to $\sum_k\bq^{(k)\top}\Rcal_{n-m}^{(k)}\bk^{(k)}=\bq^\top\Rcal_{n-m}\bk$, the original RoPE relative-position score, which depends on $(m,n)$ only through $n-m$. If $\phi_k^Q\not\equiv\phi_k^K\pmod{2\pi}$ for some $k$, the band-$k$ term carries the extra factor $G(\phi_k^K-\phi_k^Q)$, a fixed nonidentity rotation. Being position-independent, it cannot be absorbed into the relative rotation $\Rcal_{n-m}^{(k)}=G((n-m)\theta_k)$ for all $(m,n)$ simultaneously (that would require $\phi_k^K-\phi_k^Q\equiv0\pmod{2\pi}$); it does not cancel against RoPE and perturbs the band-$k$ Q/K inner product on which attention relies. Hence a single shared online $R_3$ ($\phi_k^Q=\phi_k^K\pmod{2\pi}$ for all $k$) is required to preserve the attention geometry.
\end{proof}

\end{document}